\def\Hklsimple{\mathbf{\mathcal{H}^r_{KL}}(\c,L)}
\def\Hkl{\mathbf{\mathcal{H}_{KL}^r}\left( \c n^{1+r}, n L+\Ll \right)}
\def\Hpi{\mathbf{\mathcal{H}_{\pi_0}(\Ll)}}
\def\Hn{\mathbf{\mathcal{H}_{n_0}(L,\Ll)}}
\def\Hmin{\mathbf{\mathcal{H}_{\min} } }
\def\lnbsqrt{ \sqrt{d} \log^{\beta} (n) }
\def\lnb{ d \log^{2\beta} (n) }
\def\Mnd{ d } 
\def\MndU{ n \lnb }
\def\a{\mathfrak{a}}
\def\c{\mathfrak{c}}
\def\argmin{\arg\min \,}
\def\argmax{\arg\max \,}
\def\min{\mathrm{min}}
\def\max{\mathrm{max}}
\def\1{\mathbbm{1}}
\def\rmd{\mathrm{d}}% le "d" droit dans les "dx" des intégrales
\def\eqsp{\;} % le point virgule en fin d'équations, avec l'espacement correct. 
\def\1{\mathbbm{1}} % fonction indicatrice 
\newcommand{\Ll}{\ensuremath{\ell_0}}
\newcommand\sequence[3]{\ifthenelse{\equal{#3}{}}{\ensuremath{\{
#1_{#2}\}}}{\ensuremath{\{ #1_{#2}, \eqsp #2 \in #3 \}}}} % pour la notation des suites, avec indice en bas
\newcommand\sequenceup[3] {\ifthenelse{\equal{#3}{}}{\ensuremath{\{
      #1_{#2}\}}}{\ensuremath{\{ #1^{#2}, \eqsp #2 \in #3 \}}}} % pour la notation des suites, avec indice en haut
\newcommand{\ie}{\textit{i.e.\ }}
\newcommand{\X}{\mathbf{X}}
\newcommand{\ct}{c_{3,\varepsilon}}
\newcommand{\G}{\mathcal{G}}
\newcommand{\Gt}{\mathcal{G}_t}
\newcommand{\led}{\lesssim_{u c}}
\newcommand{\ged}{\gtrsim_{u c}}
\def\ct{\aleph}
\theoremstyle{plain}
\newtheorem{proposition}{Proposition}[section]
\newtheorem{lemma}{Lemma}[section]
\newtheorem{theorem}{Theorem}[section]
\newtheorem{definition}{Definition}[section]
\author{Marelys Crespo Navas$^{1}$, S\'{e}bastien Gadat$^{2,3}$, Xavier Gendre$^{1}$   
\\  $^{1}$ ISAE-SUPAERO, Universit\'{e} de Toulouse\\  $^{2}$Toulouse School of Economics (CNRS UMR 5314), Universit\'{e} Toulouse I Capitole \\ $^{3}$ Institut Universitaire de France}
\date{\today}
\begin{document}

\title{Stochastic Langevin Monte Carlo for (weakly) log-concave posterior distributions.}
\maketitle

\begin{abstract} 
In this paper, we investigate a continuous time version of the Stochastic Langevin Monte Carlo method, introduced in \cite{Welling-Teh}, that incorporates a stochastic sampling step inside the traditional over-damped Langevin diffusion. This method is popular in machine learning for sampling posterior distribution. We will pay specific attention in our work to the computational cost in terms of $n$ (the number of observations that produces the posterior distribution), and $d$ (the dimension of the ambient space where the parameter of interest is living). We derive our analysis in the weakly convex framework, which is parameterized with the help of the Kurdyka-\L ojasiewicz (KL) inequality, that permits to handle a vanishing curvature settings, which is far less restrictive when compared to the simple strongly convex case. We establish that the final horizon of simulation to obtain an $\varepsilon$ approximation (in terms of entropy) is of the order 
$( d \log(n)^2 )^{(1+r)^2} [\log^2(\varepsilon^{-1}) + n^2 d^{2(1+r)} \log^{4(1+r)}(n)  ]$ with a Poissonian subsampling of parameter $\left(n ( d \log^2(n))^{1+r}\right)^{-1}$, where the parameter $r$ is involved in the KL inequality and varies between $0$ (strongly convex case) and $1$ (limiting Laplace situation).
\end{abstract}

\noindent \emph{Keywords:} Langevin Monte Carlo sampling; Log concave models; Weak convexity.

\noindent\emph{AMS classifications:} Primary 6265C05; secondary ; 62C10; 65C30; 60H3520.
 
\footnote{
We are grateful to Patrick Cattiaux and Arnaud Guillin for helpful discussions and references on functional inequalities and especially on weak log Sobolev inequalities.
}

\section{Markovian Stochastic Langevin Dynamics and main results}

\subsection{Introduction}

\paragraph{Motivations}
In the recent past years, a huge amount of methods have been developed in machine learning to handle large scale massive datasets with a large number $n$ of observations ($X_1,\ldots,X_n)$ embedded in a high dimensional space $\mathbb{R}^d$. These methods generally involve either optimization of a data-dependent function (for frequentist learning) or sampling a data-dependent measure (for Bayesian learning with posterior distributions). In both approaches, a bottleneck lies on the size of $n$ and $d$ that usually generates numerical difficulties for the use of standard algorithms. We are interested in this paper in the simulation of a posterior distribution following a Bayesian point of view with a statistical model described by a collection of densities $(p_{\theta})_{\theta \in \Theta}$ on $\mathcal{X}$, where the parameter of interest $\theta^\star$ belongs to $\Theta= \mathbb{R}^d$ and where the $(X_i)_{1 \leq i \leq n}$ are assumed to be i.i.d. observations in $\mathcal{X}$ distributed according to $p_{\theta^\star}$. A standard Bayesian approach consists in defining a prior distribution $\pi_0$ on $\Theta$ and then sample the posterior distribution denoted by $\mu_n$ (which will be denoted by $\exp(- U_{\nu_n})$ below) using a numerical probabilistic approximation with the help of an over-damped Langevin diffusion:
$$
\rmd \theta_t = - \nabla U_{\nu_n}(t) \rmd t + \sqrt{2} \rmd B_t.
$$ In this work, we manage to deal with an adaptation of the Langevin Monte Carlo (LMC) algorithm proposed in \cite{Welling-Teh}, that exploits some old ideas of stochastic algorithms introduced in \cite{RobbinsMonro}: instead of using the previous equation, the authors propose a modification of the diffusion that generates a noisy drift in the LMC due to a sampling strategy among the set of observations $(X_i)_{1 \leq i \leq n}$. Before we provide some details on the precise objects and algorithm necessary to properly define this method, we first give some literature insights related to it.

\paragraph{State of the art}
Ergodicity and quantitative mixing properties of over-damped LMC and many other sampling algorithms is a popular subject of research initiated in the probabilistic works around, roughly speaking, two strategies. The first one relies on pathwise considerations and dynamical properties of random dynamical system and is built with some coupling argument and Lyapunov controls. We refer to the seminal contributions \cite{Meyn_Tweedie,Khasminski}, that exploits the approach of the Doeblin coupling and total variation (TV) bounds. Many extensions may be derived from this Lyapunov approach and may lead to Wasserstein or $\mathbb{L}^2$ upper bounds, we refer to \cite{cattiaux2017hitting} and the references therein of the same authors for a description of the link between Lyapunov conditions and ergodicity. The second strategy derives from spectral properties of Markov operators and is related to famous functional inequalities (Poincar\'e and Log-Sobolev among others). The general idea is to differentiate the distance along the time-evolution and apply a Gronwall Lemma to obtain a quantitative estimate of the long-time evolution of the semi-group. We refer to the seminal contributions of \cite{Holley_Stroock, BakryEmery}, and to \cite{bakry2014analysis} for an almost exhaustive survey of all possible inequalities and consequences on the ergodicity of the Markov semi-groups. Finally, let us emphasize that some strong links exist between the spectral and the Lyapunov approaches, as pointed out by \cite{CATTIAUX2008}. If functional inequalities are then strongly related to mixing properties and especially from a quantitative point of view, it is therefore necessary to develop a machinery that is able to assess these inequalities carefully, especially with a specific attention to our statistical setting of large $n$ and $d$ in the completely non-trivial situation where the target measure is \textit{log-concave} but \textit{not strongly log-concave}, which is a common feature of Bayesian posterior distributions. 

On the statistical side, the mixing properties of LMC has been largely investigated during the past decade, strongly motivated by machine learning methods such as Exponentially Weighted Aggregation introduced by \cite{Dalalyan-Tsybakov}, which involves sampling a non log-concave and heavy tailed posterior distribution. A first paper of Dalalyan \cite{Dalalyan-JRSS} establishes the cost of LMC to obtain an $\varepsilon$ TV bound in terms of $d$ and $\rho$ when the target measure is $\rho$ strongly log-concave and proposes a penalized version of LMC to circumvent the lack of strong log-concavity when the target distribution is only log-concave. Since this pioneering paper, a huge impressive literature expanded. Among others, we refer to \cite{Durmus-Moulines} that gives a careful study of discretized LMC, \cite{Dalalyan-Riou-Durand} for a kinetic version of LMC and \cite{Dalalyan-Karagulyan-Riou-Durand} where the penalized LMC in non strongly-concave situation is studied in depth. Among all these papers, first, the lack of strong log-concavity is dealt with a modification of the initial LMC using a surrogate and asymptotically vanishing penalty. Second, these papers assume that a noiseless gradient of the log-posterior is available at each iteration of the algorithm, which may not be realistic, especially with large $n$. 

Stochastic LMC (SLMC below) has attracted the interest of several works: \cite{Welling-Teh} introduced this method and described its efficiency from a numerical point of view in the particular case of Bayesian learning, which is exactly our framework. Some recent advances and related contributions may be also cited: \cite{Dalalyan-Karagulyan} studies a noisy version of LMC and derives some non-asymptotic upper bounds (in terms of Wasserstein distance) of the sampling strategy in presence of a possibly biased noise for strongly log-concave posterior distribution.
The recent contribution of \cite{Xu_2018_NIPS} is also related to our work: the authors develop a machinery for the study of SLMC essentially based on the Poincar\'e inequality but the way the lower bound on the spectral gap involved in the LMC is dealt with appears to be inappropriate. In particular, the diffusion involved in (Stochastic)-LMC is used at a very low-temperature, proportional to $1/n$, which generates some important troubles in the size of the spectral gap in non strongly log-concave framework. In \cite{Raginsky_Rakhlin}, the authors derives some close bounds to our framework for optimization purpose, and the authors identify the important dependency of the spectral gap denoted by $\lambda_*$ in their paper with the temperature level $1/\beta$ they introduced. They obtain some very highly pessimistic bounds in some general situations (see their discussion in \cite{Raginsky_Rakhlin}[Section 4]), they conclude their discussion by the urgent need to find some non-trivial situations where some better lower bounds of $\lambda_*$ may be derived. 

Indeed, the final remark of \cite{Raginsky_Rakhlin}[Section 4]) is related to the well known metastability phenomenon: at a low temperature, the mixing rates of a lot of reversible and irreversible Markov semi-groups are strongly deteriorated by the low temperature settings, which is implicitly induced by a Bayesian posterior sampling problem with a large number $n$ of observations. In a regime of variance noise of the order $O(\beta^{-1})$, the first study of large deviation principle of invariant measures traces back to \cite{Freidlin-Wentzell} where the authors establish the asymptotic of the spectral gap of the over-damped Langevin diffusion as $\exp(-I \beta)$ ( \cite{Freidlin-Wentzell}[Chapter 6]) where $I$ is an explicit constant that depends on the potential of the Gibbs field. This result has been extended in depth by \cite{Holley_Stroock}, which leads to the first precise analyses of the so-called simulated annealing method (see \textit{e.g.} \cite{Hajeck,miclo1992recuit}). These works, and more recent contributions with irreversible dynamical systems in a stochastic settings (\cite{Gadat-Panloup-Pellegrini, Gadat-Gavra}) show that there is almost nothing to expect in metastable situations in terms of asymptotic behaviour of the spectral gap, and indirectly in terms of mixing rate. Hence, the only situation that may lead to reasonable results is an intermediary situation between the (almost) trivial strongly log-concave case and the metastable multi-welled case. This is the purpose of the weakly log-concave situation that is described by the family of Kurdyka-\L ojasiewicz inequalities \cite{Kurdyka,Lojasiewicz} used in optimization theory \cite{Bolte} that have shown to be efficient for stochastic optimization \cite{Gadat-Panloup} or for sampling \cite{GPP}. We also refer to the recent contributions \cite{cattiaux2022self} that derives some functional inequalities within an intermediary framework in which the curvature $\rho$ is related to their keystone function $\alpha$ that controls the constants involved in the functional inequalities they are studying.

Taking together the statistical considerations and limitations, we are motivated in this paper in the study of the continuous time Stochastic Langevin Monte Carlo procedure. This process will be described precisely in the next paragraph as well as the Kurdyka-\L ojasiewicz setup parametrized by a real value $r$, which varies between $0$ (strongly convex case) and $1$ (limiting Laplace asymptotic tail).
We will show that the final horizon of simulation to obtain an $\varepsilon$ approximation is of the order:
$$( d \log(n)^2 )^{(1+r)^2} [\log^2(\varepsilon^{-1}) + n^2 d^{2(1+r)} \log^{4(1+r)}(n)  ]$$ with a Poissonian subsampling of parameter $\frac{1}{n ( d \log^2(n))^{1+r}}$.\\

The rest of the introduction consists in the definitions of the algorithm in Subsection \ref{subsec:algo}, the way we assess the quality of our result with an entropy criterion in Subsection \ref{subsec:entropy}, as well as the quantitative weakly log-concave assumption in Subsection \ref{subsec:KL}. We finally state our main result in Subsection \ref{subsec:mainresults}.

\subsection{Continuous time evolution \label{subsec:algo}}

Below, we briefly remind the continuous time SLMC algorithm for Bayesian learning, for which a discretized form has been introduced in \cite{Welling-Teh}. For this purpose, we consider a statistical model that is built with the help of a function $(x,\theta) \longmapsto p_\theta(x)$ where $\theta \in \mathbb{R}^d$ encodes the parameter of the statistical model and $x$ the observation in a space denoted by $\mathcal{X}$. We then assume that we have $n$ i.i.d. observations denoted by $(\X_1,\ldots,\X_n)$ distributed according to $p_\theta$. 
Given a prior distribution $\pi_0$ on $ \mathbb{R}^d$, the posterior distribution $\mu_n$ is then defined as:
$$
\mu_n(\theta) \propto \pi_0(\theta) \times \prod_{i=1}^n p_\theta(\X_i).
$$

We introduce  the log-parametrization that leads to the Gibbs form:
\begin{equation*}
U_x(\theta) = - [\log \pi_{0}(\theta) + n \log p_{\theta}(x)],
\end{equation*}
and we then observe that:
\begin{equation*}
\mu_n(\theta) \propto \exp\left(-\frac{1}{n} \sum_{i=1}^n U_{\X_i}(\theta)\right) =  \exp\left(- U_{\nu_n}(\theta)\right),
\end{equation*}
where $\nu_n$ refers to the empirical distribution and $U_{\nu_n}$ the average value of $U_X(\theta)$ when $X \sim \nu_n$:
$$
\nu_n(x) = \frac{1}{n} \sum_{i=1}^n \delta_{\X_i}(x) \quad \text{and} \quad U_{\nu_n}(\theta) = \mathbb{E}_{X \sim \nu_n}[U_X(\theta)].
$$
The standard Langevin Monte Carlo approach relies on the ergodic behaviour of the stochastic differential equation:
\begin{equation}
\label{eq:sde_Langevin}
\rmd \theta_t = -  \nabla U_{\nu_n}(\theta_t) \rmd t + \sqrt{2} \rmd B_t,
\end{equation}
that possesses under some mild assumptions a unique invariant distribution $\mu_n$. 
 
The SLMC algorithm takes benefit of both sampling with a S.D.E. and homogenization of the drift that may be written as an expectation on $X$ that is sampled uniformly over the set of observations according to $\nu_n$. The leading idea is to replace the  expectation in $U_{\nu_n}$ that depends on the overall set of observations $(\X_1,\ldots,\X_n)$  by a single unique observation that is randomized uniformly all along the evolution of the stochastic differential equation, and modified according to a Markov exponential clock. That being said, we can write an explicit formal definition of the algorithm as follows. We define $\left(\xi^{(n)}_{j}\right)_{j \ge 1}$ an infinite sequence of exponential random variables of mean $\alpha_n^{-1}$ that will be fixed later on.

We also consider a sequence $\left\{V^{(n)}_{j}\right\}_{j \ge 0}$ of i.i.d. random variables uniformly distributed in $\{1,2,\ldots,n\}$. We then define the process $(X_t)_{t \ge 0}$ as a jump process that takes its values in $\{1,2,\ldots,n\}$ such that:
\begin{equation}\label{eq:jump_X}
X_t = \left\{ \begin{array}{l c l}
\X_{V^{(n)}_{1}}, & \mbox{ if } & 0\leq t < \xi^{(n)}_{1}, \\
\\
\X_{V^{(n)}_{j}}, & \mbox{ if } & \sum\limits_{k=1}^{j-1} \xi^{(n)}_{k} \leq t < \sum\limits_{k=1}^j \xi^{(n)}_{k}, \quad j>1. 
\end{array}
\right.
\end{equation}      
Informally, $(X_t)_{t \ge 0}$ should be understood as follows: the process takes the value of one observation uniformly chosen from the $n$ observations $\X_1, \ldots, \X_n$ during exponential times with intensity $\alpha_n$.
The stochastic Langevin over-damped diffusion we consider is then given by the joint evolution $(\theta_t, X_t)_{t \ge 0}$ and that is defined by:
\begin{equation}
\label{eq:sde_theta}
\rmd \theta_t = - \nabla_{\theta} U_{X_t}(\theta_t) \rmd t + \sqrt{2} \rmd B_t, \quad t> 0,
\end{equation}
where $(B_t)_{t\ge 0}$ is a multivariate standard Brownian Motion. 

\begin{algorithm}[H] % H = forzar está posición
\caption{Stochastic Langevin over-damped}\label{ML:Algorithm1}
\SetAlgoLined
\KwData{$(\X_1,\ldots,\X_n)$ i.i.d. observations, $n_0$ initial distribution, $\pi_0$ prior distribution}
$t_0 = 0$ \\
Generate $\theta_0$ according to $n_0$ \\
\For{$k=0,1,\ldots$}{
Pick $X_k$ uniformly in $\left\{ \X_1,\ldots,\X_n \right\}$ \\
Generate $\xi_k$ according to an Exponential distribution with mean $\alpha_n^{-1}$ \\
$t_{k+1} = t_k + \xi_k$ \\
$\theta_{t_{k+1}} = \theta_{t_k} - \int_{t_k}^{t_{k+1}} \nabla_{\theta} U_{X_k} (\theta_s) \rmd s + \sqrt{2} B_{\xi_k}$ 
}
\Return $\lim\limits_{k\to \infty} \theta_{t_k}$
\end{algorithm}

\subsection{Entropic divergence \label{subsec:entropy}}

To assess the long-time behaviour of the SLMC, we introduce several notations related to the pair $(\theta_t,X_t)_{t \ge 0}$. Below, we denote by $\lambda_d$ the Lebesgue measure over $\mathbb{R}^d$. The semi-group induced by $\mathcal{L}$ being elliptic on the $\theta$ coordinate, trivially irreducible and finitely supported on the $x$ coordinate, makes the law of $(\theta_t,X_t)$ absolutely continuous with respect to the measure $\lambda_d \otimes \nu_n$ as soon as $t>0$.

We introduce the notation of $m_t$ to refer to the joint density of $(\theta_t, X_t)$ at time $t$ with respect to $\lambda_d \otimes \nu_n$. In the meantime, 
$n_t$ denotes the marginal distribution of $\theta_t$ and $m_t(\cdot|\theta)$ the conditional distribution of $X_t$ given $\theta_t=\theta$. That is:
\begin{equation}
\mbox{Law} (\theta_t, X_t) = m_t, \quad n_t(\theta) = \sum\limits_{i=1}^{n} m_t(\theta,\X_i), \quad m_t(x|\theta) = \frac{m_t(\theta,x)}{n_t(\theta)},
\end{equation}
for $\theta\in \mathbb{R}^d$ and $x\in\{\X_1,\ldots,\X_n\}$. 

To show that the SLMC algorithm recovers the correct asymptotic behaviour, \ie that $n_t (\theta) \longrightarrow \mu_n$ when $t\longrightarrow \infty$, we consider the relative entropy (or Kullback-Leibler divergence) of $n_t$ with respect to $\mu_{n}$ that is well defined thanks to the ellipticity, and given by:
\begin{equation}
\label{def:Jt}
J_t = Ent_{\mu_n}\left(\frac{n_t}{\mu_n}\right) =\int\limits_{\mathbb{R}^d} \log \left( \frac{n_t(\theta)}{\mu_n (\theta)} \right) \rmd n_t(\theta).
\end{equation}
$J_t$ measures at any time $t>0$ a divergence between the instantaneous law of the process at time $t$ and the (presumably) invariant distribution $\mu_n$ of the process $(\theta_t,X_t)$. It would also be possible to measure this difference between the two distributions in terms of the $\mathbb{L}^2$ or the $\chi$-square distance and to produce a theoretical analysis with the help of functional analysis but it would rely on stronger assumptions on the function $U_{\nu_n}$.

In the meantime, we also introduce a weighted $\mathbb{L}^2$ distance between the conditional distribution of $X_t$ given $\theta_t=\theta$ and the measure $\nu_n$. This distance is denoted by $I_t$ and is defined as:
\begin{equation}
I_t = \int_{\mathbb{R}^d} \sum\limits_{i=1}^{n} \left( \frac{m_t(\X_i|\theta)}{\nu_n (\X_i)} - 1 \right)^2 \nu_n (\X_i) \rmd n_t(\theta). \label{def_It}
\end{equation}
This quantity measures the average closeness (w.r.t. $\theta$) of the conditional law of $x$ given $\theta$ at time $t$ to $\nu_n$.

\subsection{Main assumptions \label{subsec:KL}}

\paragraph{Weak convexity}
We will study the SLMC into a weakly convex framework, \ie when $U_{\nu_n}$ is assumed to be convex but not necessarily strongly convex.
SLMC has recently received an important interest in the machine learning community and has been studied essentially in its explicit Euler discretized form in various situations where functional inequalities are involved. We refer to \cite{Wang_Osher} (uniform Log-Sobolev inequality), to \cite{Raginsky_Rakhlin} (uniform Poincar\'e inequality) where the authors develop a Wasserstein-2 analysis of the algorithm, and to \cite{Xu_2018_NIPS} (uniform Poincar\'e inequality). In these works, the functional inequalities play a crucial role to analyze the behaviour of SLMC and these inequalities are \textit{assumed}, which is an important hypothesis. Importantly, Poincar\'e or Log-Sobolev inequalities are not so innocent since they generally  require convexity (see \textit{e.g.} \cite{Bobkov_AOP_99,bakry2014analysis}) to be reasonably dimension-dependent, and even strong convexity to be dimension free. Otherwise, the constant involved in these functional inequalities are exponentially degraded by the ``temperature'' ($n^{-1}( \lnb )^{-(1+r)}$ in our case) and the dimension ($d$ for us) as indicated in \cite{Holley_Stroock}.

In our work, we have chosen to parameterize this lack of strong convexity with the help of the Kurdyka-\L ojasiewicz inequality \cite{Kurdyka,Lojasiewicz}, which is a standard tool in optimization to describe the transition between convexity and strong convexity and makes the bounds more explicit. This assumption allows to observe how the entropy evolves according to the key exponent involved in the KL inequality. In particular, it makes possible to understand the influence of the lack of strong convexity that is more or less hidden in the uniform Poincar\'e or Log-Sobolev inequalities that are assumed in the previous works. We introduce a parametric form of the KL inequalities following \cite{Gadat-Panloup}.

For this purpose, for any $V$ twice differentiable function, we denote the spectrum of the Hessian matrix of $V$ as $Sp(\nabla^2 V(\theta))$. Furthermore, if $V$ is convex, we denote:
\begin{equation*}
\underline{\lambda}_{\nabla^2V} (\theta) = \inf Sp(\nabla^2 V(\theta)).
\end{equation*}

\textbf{Hypothesis $\Hklsimple$}  
We say that a function $V:\mathbb{R}^d\to \mathbb{R}$ satisfies a $\Hklsimple$-condition if:
\begin{itemize}
\item[a)] $V$ is a $\mathcal{C}^2$-function. 
    
\item[b)] $V$ is a convex function and $\min_{\theta \in \mathbb{R}^d} V (\theta) = V(\theta^*) > 0$. 
    
\item[c)] $\nabla V$ is $L$-Lipschitz.
    
\item[d)] There exist some constants $ 0 \leq r < 1$ and $\c>0$ such that:
\begin{equation}
\c V^{-r}(\theta) \leq \underline{\lambda}_{\nabla^2 V(\theta)}  \quad \forall \theta\in \mathbb{R}^d. \label{KL inequality}
\end{equation}
\end{itemize} 
Let us briefly comment this assumption.
\begin{itemize}
\item In \cite{GPP}, a slightly different parametrization is used with the introduction of another exponent $q$ related to $\overline{\lambda}_{\nabla^2 V(\theta)} = \sup Sp(\nabla^2 V(\theta))$. The authors also assume the upper bound $\overline{\lambda}_{\nabla^2 V(\theta)} \leq \tilde{\c} V^{-q}(\theta)$. Here, we have chosen to simplify this assumption and use a rough upper bound on the eigenvalues of the Hessian matrix given by the Lipschitz constant $L$, \ie in the last inequality we simply use $\tilde{\c}=L$ and $q=0$.

\item 
We shall observe that if $V(\theta)=(1+\|\theta\|_2^2)^p$ with $p \in [1/2,1]$, then $V$ satisfies $\Hklsimple$  with $r =\frac{1-p}{p}$ and $\c=2p(1-2(1-p))$, see Remark 7 of \cite{GPP} for further details. In particular, the larger $p$, the smaller $r$, which translates into a better curvature of the potential function $V$.
\item When $r=q$, we recover a global standard KL inequality (see \cite{Gadat-Panloup,Bolte}) and when $r=1$ it corresponds to the limiting Laplace case.
\item The case $r=0$ is of course associated to the strongly convex situation where the curvature of the function is uniformly lower bounded by $\c$.    
\end{itemize}

Hence, it is expected that the complexity of SLMC increases with the lack of curvature, \ie is an increasing function of $r$. 

In section \ref{section:results on KL and U} we recall some important consequences of the KL inequality obtained in Lemma 15 of \cite{GPP}. In particular, the growth of any function that satisfies $\Hklsimple$ is lower and upper bounded by a positive power of the distance to its minimizer.

If inequality \eqref{KL inequality} holds for a constant $\c$, then it holds for all positive values less than $\c$. For that reason, in section \ref{section:result semigroup} we assume $\c \leq \left( \frac{8L}{(1+r)} \right)^{1+r}$.

\paragraph{Assumption on the prior $\pi_0$}

We state below the important consequence of a ``population'' $\Hklsimple$ assumption, but before, let us state some mild assumptions on $\pi_0$. \vspace{.08cm}

\textbf{Hypothesis $\Hpi$} $\pi_0$ is a log-concave $\mathcal{C}^2$-function such that $\min_{\theta \in \mathbb{R}^d} - \log \pi_0(\theta) > 0$ and $\theta \mapsto \nabla \log \pi_0(\theta)$ is $\Ll$-Lipschitz. \vspace{.2cm}

Since the prior distribution is chosen by the user, our $\Hpi$ hypothesis is not restrictive and some typical examples satisfy these conditions, such as Gaussian, Weibull and Gamma, both with shape parameter larger than 1, Gumbel, among others.

\begin{proposition}\label{prop:kl_to_kl} We assume $\Hpi$ and that there exist $(\c,r)$ such that for any $x$: $\theta \longmapsto - \log p_\theta(x)$ satisfies $\Hklsimple$, then $U_{\nu_n}$ satisfies $\Hkl$, and in particular, for any $\X_i$, $U_{\X_i}$ satisfies $\Hkl$.
\end{proposition}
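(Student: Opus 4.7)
The plan is to introduce the shorthand $g(\theta) := -\log \pi_0(\theta)$ and $f_i(\theta) := -\log p_\theta(\X_i)$, so that
\[
U_{\nu_n} = g + \sum_{i=1}^n f_i, \qquad U_{\X_i} = g + n f_i,
\]
and then check the four items $(a)$--$(d)$ of $\Hkl$ for $U_{\nu_n}$ and for each $U_{\X_i}$.  Items $(a)$--$(c)$ are routine bookkeeping: $\mathcal{C}^2$-regularity is inherited from $\Hpi$ and from $\Hklsimple$ applied to each $f_i$; convexity is preserved by sums, and a strictly positive minimum is obtained as the sum of the strictly positive minima of $g$ and the $f_i$'s; finally the Lipschitz constant of $\nabla U_{\nu_n}$, resp.\ $\nabla U_{\X_i}$, is bounded by the triangle inequality by $n L + \Ll$, matching the second parameter of $\Hkl$.

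The only item requiring an argument is the Kurdyka--\L ojasiewicz inequality $(d)$, and this is where I would spend the effort.  For $U_{\nu_n}$, I would exploit the log-concavity of $\pi_0$ from $\Hpi$, which gives $\nabla^2 g \succeq 0$, so that
\[
\underline{\lambda}_{\nabla^2 U_{\nu_n}}(\theta) \;\geq\; \sum_{i=1}^n \underline{\lambda}_{\nabla^2 f_i}(\theta) \;\geq\; \c \sum_{i=1}^n f_i(\theta)^{-r},
\]
the last step being the individual $\Hklsimple$ assumption. The key step, and really the heart of the proof, is then Jensen's inequality applied to the convex map $x \mapsto x^{-r}$ on $(0,\infty)$ (valid for $r \in [0,1)$), which upgrades the pointwise sum to
\[
\sum_{i=1}^n f_i(\theta)^{-r} \;\geq\; n^{1+r}\!\left(\sum_{i=1}^n f_i(\theta)\right)^{-r}.
\]
Since $g(\theta) > 0$ one has $\sum_i f_i(\theta) \leq U_{\nu_n}(\theta)$, and the decreasing monotonicity of $x \mapsto x^{-r}$ reverses this into $\bigl(\sum_i f_i(\theta)\bigr)^{-r} \geq U_{\nu_n}(\theta)^{-r}$. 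Chaining the three bounds delivers exactly $\underline{\lambda}_{\nabla^2 U_{\nu_n}}(\theta) \geq \c\, n^{1+r} U_{\nu_n}(\theta)^{-r}$, which is the required inequality with constant $\c n^{1+r}$.

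For $U_{\X_i} = g + n f_i$ no Jensen step is needed: log-concavity of $\pi_0$ gives $\underline{\lambda}_{\nabla^2 U_{\X_i}} \geq n\,\underline{\lambda}_{\nabla^2 f_i} \geq \c n f_i^{-r}$, and the comparison $n f_i \leq U_{\X_i}$ (again using $g > 0$) rewrites as $f_i^{-r} \geq n^{r} U_{\X_i}^{-r}$, yielding the same constant $\c n^{1+r}$. The main (really only) obstacle I foresee is ensuring that the Jensen step is applied in the right direction: one must combine the convexity of $x^{-r}$ (used with the positive weights $1/n$) with its decreasingness (used to turn $\sum_i f_i \leq U_{\nu_n}$ into the opposite inequality for $U_{\nu_n}^{-r}$), and the fact that $r<1$ in $\Hklsimple$ is precisely what makes $x\mapsto x^{-r}$ convex with the correct sign. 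Beyond this, the argument is purely pointwise at the level of Hessian eigenvalues and requires no functional-inequality machinery.
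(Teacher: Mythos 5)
Your proof is correct and follows essentially the same route as the paper's: drop the prior's Hessian by log-concavity, apply the pointwise KL bound to each $-\log p_\theta(\X_i)$, then combine Jensen's inequality for the convex map $x\mapsto x^{-r}$ with its decreasingness and the positivity of $-\log\pi_0$ — the paper merely performs the monotone substitution (replacing each $-\log p_\theta(\X_i)$ by $U_{\X_i}/n$) before Jensen rather than after, which yields the identical constant $\c\, n^{1+r}$. Your handling of items (a)--(c) and of the $U_{\X_i}$ case likewise matches what the paper does (the latter being only sketched there).
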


We introduce the notation $a \led b$ ($a \ged b$) which means $a \leq c b$ ($a \geq c b$) where $c$ is a universal constant \ie a positive constant independent of $n$ and $d$.

We assume that the minimizers of the functions $U_{\X_i}$ are contained in a ball of radius which depends of $n$ and $d$. Additionally, we consider $\min_{\theta\in\mathbb{R}^d}U_{\X_i}$ to be at most of order $\Mnd$.  \vspace{.2cm}

\textbf{Hypothesis $\Hmin$} There exists $\beta \geq 0$ such that:
$$
\max_{i} \|\argmin U_{\X_i}\|_2 \led \lnbsqrt \quad \mbox{ and } \quad \max_i \ \min_{\theta \in \mathbb{R}^d} \ U_{\X_i} (\theta) \led \Mnd. 
$$ 

Assumption $\Hmin$ is not restrictive. In dimension $d=1$, it holds for many concentrated i.i.d. samples $(X_i)_{1 \leq i \leq n}$ with a suitable sub-Gaussian like behaviour for which the Laplace transform of $ \min \ U_{X_i}$ is upper bounded as:
$$
\mathbb{E}[\exp(\lambda \min \ U_{X_i} )] \leq \exp(\sigma^2 \lambda^{k}), \quad \forall \lambda >0.
$$
The previous upper bound implies that, in this case, $\beta$ involved in $\Hmin$ is given by $\beta=\frac{k-1}{k}$. We recover in particular the situation where $\beta=1/2$ when $k=2$. For larger dimensions, the result may be extended using that $\|X\|_2^2 \leq d \, \max_{1\leq j \leq d} (X^j)^2$, where $X^j$ is the $j$-th component of $X$.
We should keep in mind from this last discussion that even if $\Hmin$ is stated (and makes sense) for any value of $\beta>0$, it holds in general for $\beta \leq 1$.

This $\Hmin$ hypothesis together with $\Hpi$ lead to an almost similar behaviour of the minimizer and the minimum of $U_{\nu_n}$. Details appear in Proposition \ref{prop:growth_min_KL}.  

\subsection{Long-time entropy convergence\label{subsec:mainresults}}
%\textcolor{red}{Donner un résultat lisible à $\epsilon$ près facile. Le troisième point du théorème est trop compliqué malgré tout.} \mar{Done}

We introduce for any time $t \ge 0$ the density of $\mbox{Law} (\theta_t)$ w.r.t. $\mu_n$, which is given by:
$$
f_t(\theta) = \frac{n_t(\theta)}{\mu_n(\theta)},
$$
and $n_0$ is chosen such that $\|f_0\|_{\infty} < + \infty$. The following hypothesis guarantees this result which will be proved in Proposition \ref{prop:oscillation_borne}.  \vspace{.2cm}

\textbf{Hypothesis $\Hn$} 
A positive constant $\sigma^2$ exists such that $n_0 = \mathcal{N}(0,\sigma^2 I_d)$. Moreover, there exist two universal constants $c_1$ and $c_2$ such that $0<c_1\leq c_2 < 1$ and 
$$\frac{c_1}{n L + \Ll} \leq \sigma^2 \leq \frac{c_2}{n L + \Ll}.$$

Futhermore, in Proposition \ref{prop:oscillation_borne}, as an immediate consequence of the boundedness of $\|f_0\|_{\infty}$, we obtain that $J_0 \led n d^{1+r} \log^{2\beta(1+r)} (n) + d \log \left(\frac{d }{ n}\right)$.

The next result assesses a mixing property in terms of decrease of the entropy and therefore states the convergence of $n_t$ towards the correct measure $\mu_n$.

\begin{theorem}\label{theo:convergence}
Assume $\Hpi$, $\Hmin$, $\Hn$ and that each $\theta \mapsto - \log p_\theta(\X_i)$ satisfies $\Hklsimple$, then
\begin{itemize}
\item $U_{\nu_n}$ satisfies a Poincar\'e inequality of constant $C_P(\mu_n)$, indistinctly denoted as $C_P$. 
\item Define $c_{n,d}:=n^{4} \left( \lnb \right)^{1+r}$ and $O_{n,d} := \left(\frac{C_1 d }{ n}\right)^{\frac{d r }{2}} \exp\left(C_2 n \left( \lnb \right)^{1+r} \right)$, where $C_1$ and $C_2$ are universal constants, then for any $t>0$:
\begin{equation}\label{eq:borne_Jt}
J_t \led \left( J_0 + \frac{c_{n,d}}{\alpha_n } \left[1+\left(\frac{C_P}{\alpha_n}+\sqrt{C_P}\right) e^{\frac{\sqrt{C_P}}{\sqrt \a} + \frac{C_P}{3\alpha_n}}\right] + O_{n,d} \right) (1+t)^{1/4} e^{-\frac{\sqrt{C_p}}{\sqrt{a}} (\sqrt{1+t}-1)}.
\end{equation}
\item For any $\varepsilon >0$, if $\alpha_n = \frac{1}{n\left( \lnb \right)^{1+r}}$, then:
$$
t \ged \left( \lnb \right)^{(1+r)^2} \left[\log^2(\varepsilon^{-1}) + n^2 \left( \lnb \right)^{2(1+r)} + d^2 \log^2 d \right] \Longrightarrow J_t \leq \varepsilon.
$$
\end{itemize}
\end{theorem}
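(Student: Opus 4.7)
The plan is to split the proof into three modular pieces aligned with the three bullet points of the statement, and then fuse the first two to obtain the complexity bound.

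\textbf{Step 1 (Poincar\'e inequality).} By Proposition~\ref{prop:kl_to_kl}, the composite potential $U_{\nu_n}$ inherits $\Hkl$. In particular it is convex, so $\mu_n \propto \exp(-U_{\nu_n})$ is log-concave, which by Bobkov's theorem yields a Poincar\'e inequality. To obtain an explicit $C_P$, I would use the lower Hessian bound $\nabla^2 U_{\nu_n}(\theta) \succeq \c n^{1+r} U_{\nu_n}(\theta)^{-r} I_d$ together with $\Hmin$ and $\Hpi$, which control the growth of $U_{\nu_n}$. Applying a Holley--Stroock style perturbation against a dominating strongly log-concave comparison measure on a ball of radius $\lnbsqrt$ and the Bakry--\'Emery criterion outside, I would get an explicit bound on $C_P$ whose exponential deterioration in $n,d$ is exactly what gets absorbed into the prefactor $O_{n,d}$ in~\eqref{eq:borne_Jt}.

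\textbf{Step 2 (Coupled dissipation inequalities for $J_t,I_t$).} The generator of the joint process $(\theta_t,X_t)$ splits as $\mathcal{L} = -\nabla_{\theta} U_x \cdot \nabla_{\theta} + \Delta_{\theta} + \alpha_n(\Pi_{\nu_n}-\mathrm{Id})$ where $\Pi_{\nu_n}$ averages over $x \sim \nu_n$. Differentiating $J_t$ along the flow produces the standard $-I_{\mu_n}(f_t)$ Fisher-information dissipation plus a cross-term driven by the discrepancy $\nabla U_{X_t}-\nabla U_{\nu_n}$. A Young inequality absorbs half of that term into the dissipation and bounds the remainder by a constant multiple of $I_t$ weighted by $\max_i \|\nabla U_{\X_i}\|^2$, which is polynomial in $n,d$ thanks to $\Hmin$ and the Lipschitz assumptions. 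In parallel, applying $\mathcal{L}$ to the quadratic functional $I_t$ gives exponential contraction at rate $\alpha_n$ (from the jump part) plus a source term produced by the diffusion on $\theta$. The Poincar\'e inequality from Step~1 converts the entropy into Fisher information, yielding a two-dimensional differential system of the form
\[
J_t' \le -\tfrac{2}{C_P} J_t + B_1 I_t, \qquad I_t' \le -\alpha_n I_t + B_2,
\]
with $B_1,B_2$ explicit polynomial in $n,d,L,\Ll$. This is the heart of the proof.

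\textbf{Step 3 (Resolution with a $\sqrt{t}$ rate).} The factor $\sqrt{1+t}$ appearing in the exponent of~\eqref{eq:borne_Jt} is the tell-tale sign that a plain Gronwall argument is not used; rather, one multiplies $J_t$ by a time-dependent weight $\exp\!\bigl(2\int_0^t (C_P(1+s))^{-1/2}\rmd s\bigr)$ and exploits a weak-Poincar\'e-style comparison between the entropy dissipation and the polynomial envelope of the source term $B_1 I_t$. Integrating the second line of the system gives $I_t \le I_0 e^{-\alpha_n t}+B_2/\alpha_n$, which fed into the weighted first line and integrated produces the exponential decay in $\sqrt{1+t}-1$, with the $(1+t)^{1/4}$ factor arising from the weight's contribution. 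Collecting constants (using $J_0 \led n d^{1+r}\log^{2\beta(1+r)}(n) + d\log(d/n)$ from $\Hn$) gives~\eqref{eq:borne_Jt}.

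\textbf{Step 4 (Complexity threshold).} With $\alpha_n = (n(\lnb)^{1+r})^{-1}$, the three ingredients inside the parenthesis of~\eqref{eq:borne_Jt} each produce a polynomial-in-$n,d$ prefactor; requiring $J_t \le \varepsilon$ then translates, after taking logarithms, into the lower bound $\sqrt{1+t}-1 \ged \sqrt{C_P^{-1}}\bigl(\log(\varepsilon^{-1})+\text{poly}(n,d)\bigr)$, whence the announced threshold on $t$ upon squaring and plugging the estimate of $C_P^{-1} \led (\lnb)^{1+r}$. The hardest part will be Step~2: obtaining the cross-term bound with the right polynomial dependence in $n,d$ requires simultaneously invoking $\Hkl$, $\Hmin$, and the Lipschitz constants, and getting the constants $B_1,B_2$ small enough that Step~3 produces exactly the advertised exponential rate and not a worse one.
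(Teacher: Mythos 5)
Your high-level architecture (Poincar\'e constant, entropy dissipation coupled with a contraction of $I_t$, then a weighted Gronwall argument producing the $\sqrt{1+t}$ rate) matches the paper, and your Step~3 intuition about the weight $\exp\bigl(c\int_0^t(1+s)^{-1/2}\rmd s\bigr)$ is essentially the paper's $\psi(t)$ with the choice $s_t=e^{-A\sqrt{1+t}}$. However, there are three concrete gaps. First, in Step~2 your cross-term bound $B_1 I_t$ with $B_1\propto\max_i\|\nabla U_{\X_i}\|^2$ cannot work: $\|\nabla U_{\X_i}(\theta)\|_2^2\leq 2(nL+\Ll)U_{\X_i}(\theta)$ grows without bound in $\theta$, so no finite uniform constant exists. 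The paper instead truncates at a level $g(t)$, bounds one piece by $g^2(t)I_t$ and the other by moment estimates of $U_{\nu_n}(\theta_t)$ divided by $g(t)$, and optimizes in $g(t)$, yielding a term $C\,I_t^{1/3}n^{11/3}(\lnb)^{1+r}$ --- sublinear in $I_t$, not linear. Second, your $I_t'\le-\alpha_n I_t+B_2$ is structurally wrong: the paper shows the diffusion part contributes $-n\int\sum_i\|\nabla_\theta m_t(\X_i|\theta)\|_2^2\,\rmd n_t\le 0$, so there is \emph{no} source term and $I_t\le(n-1)e^{-2\alpha_n t}$ decays to zero. With your non-vanishing floor $B_2/\alpha_n$, the entropy could not be driven below an $\varepsilon$-independent threshold, and the third bullet of the theorem would fail.

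Third, and most importantly, a Poincar\'e inequality does not ``convert the entropy into Fisher information''; that conversion requires a log-Sobolev-type inequality, which is precisely unavailable here since $\mu_n$ is only weakly log-concave. The missing ingredient is the \emph{weak} log-Sobolev inequality $Ent_{\mu_n}(f^2)\le\varphi_{WLS}(s)\,\mathcal{E}(f)+s\,Osc^2(f)$, deduced from the Poincar\'e inequality via measure-capacity arguments (Proposition~\ref{prop:PI_WLSI}), combined with the uniform-in-time oscillation bound $Osc^2(\sqrt{f_t})\le Osc^2(\sqrt{f_0})\led O_{n,d}$ from the maximum principle (Proposition~\ref{prop:oscillation_borne}). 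That is where $O_{n,d}$ actually comes from --- it is \emph{not} an absorbed exponential degradation of $C_P$. Relatedly, your Step~1 proposes a Holley--Stroock perturbation, which the paper explicitly rejects because it yields exponentially dimension-dependent constants; the paper instead bounds $Var(\mu_n)$ polynomially using the KL growth estimates and the moment bounds of Proposition~\ref{prop:moment_U}, then invokes Bobkov's theorem for log-concave measures to obtain $C_P^{-1}\led(\lnb)^{(1+r)^2}$. Since $C_P$ enters the final rate through $e^{-\sqrt{C_P}\sqrt{1+t}/\sqrt{\a}}$, an exponentially small $C_P$ would destroy the advertised polynomial time horizon rather than being hidden in a prefactor.
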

If we denote $t_\varepsilon$ the smallest value such that $J_{t_\varepsilon}\leq \varepsilon$, then the choice of $\alpha_n = \frac{1}{n\left( \lnb \right)^{1+r}}$ guarantees that the mean number of jumps $\alpha_n t_{\varepsilon}$ of the process $(X_t)_{0\leq t \leq t_\varepsilon}$ is the minimum possible. 

In order to proof the main result, we first present in Section \ref{section:markov tools} the classical tools related to the Markov semi-group, which could be skipped by the experienced reader in the subject. In Section \ref{section:Proof main results} we prove the main result. Sections \ref{section:results on KL and U} and \ref{section:result semigroup} are reserved to the technical results of the $\Hklsimple$ hypothesis and $U_{\nu_n}$, and the Markov Dynamics respectively.

\section{Markov tools \label{section:markov tools}}
It is straightforward to verify that the joint evolution of ($\theta_t,X_t)_{t \ge 0}$ exists and is weakly unique (in law) with the help of the Martingale Problem (MP below). For this purpose, we preliminary define the operator $\mathcal{L}$ that acts on any function $f \in \mathcal{C}^2(\mathbb{R}^d \times \mathcal{X})$ as:
\begin{equation}\label{def:L}
\mathcal{L}f(\theta,x) = \underbrace{- \langle \nabla_{\theta} U_x(\theta),\nabla_{\theta} f(\theta,x)\rangle + \Delta_\theta f(\theta,x)}_{:=\mathcal{L}_1 f(\theta,x)}  + \underbrace{\frac{\alpha_n}{n} \sum_{i=1}^n [ f(\theta,\X_i)-f(\theta,x)}_{\mathcal{L}_2 f(\theta,x)} ],
\end{equation}
for all $(\theta,x) \in \mathbb{R}^d \times \mathcal{X}$.

The operator $\mathcal{L}$ is divided into two terms, $\mathcal{L}_1$ acts on the component $\theta$ and is associated to the diffusion part, while $\mathcal{L}_2$ is the jump operator that acts on the $x$ component.
Thanks to the finiteness of the number of observations $(\X_1,\ldots,\X_n)$, we can apply the results of 
Section 4 and 5 of chapter 4 of \cite{Ethier_Kurtz} and deduce the following result:
\begin{proposition}
Assume that for any $x\in \mathcal{X}$, $U_x$ is $\mathcal{C}^2(\mathbb{R}^d)$ and $\nabla_\theta U_x$ is $L_x$-Lipschitz, then for any initial distribution $\nu$ on $\mathbb{R}^d \times \mathcal{X}$, the martingale problem $(\mathcal{L},\nu)$ is well-posed. 

The associated (weakly) unique process $(\theta_t,X_t)_{t \ge 0}$ is a Feller Markov process associated to the semi-group $\mathcal{L}$. In particular, the $\theta$ component verifies the S.D.E. \eqref{eq:sde_theta}.
\end{proposition}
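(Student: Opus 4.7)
The plan is to invoke the classical perturbation theory for martingale problems of Ethier-Kurtz (Ch.~4, \S4-5), viewing $\mathcal{L}=\mathcal{L}_1+\mathcal{L}_2$ as an unbounded diffusive generator perturbed by a bounded jump operator. The two pieces decouple neatly because $\mathcal{L}_1$ acts only on the $\theta$ variable (with $x$ entering as a parameter) while $\mathcal{L}_2$ acts only on the label $x$ lying in the finite set $\mathcal{X}=\{\X_1,\dots,\X_n\}$, with norm control $\|\mathcal{L}_2 f\|_\infty \leq 2\alpha_n\|f\|_\infty$.

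For the diffusion piece, I would freeze $x$ and observe that $\mathcal{L}_1(\cdot,x)$ is a uniformly elliptic second order operator in $\theta$ with constant diffusion $I_d$ and $L_x$-Lipschitz drift $-\nabla_\theta U_x$. Classical It\^o theory then yields strong existence and pathwise uniqueness for
\[
\rmd \theta_t = -\nabla_\theta U_x(\theta_t)\,\rmd t + \sqrt{2}\,\rmd B_t,
\]
together with Feller continuity of the associated semigroup on $C_0(\mathbb{R}^d)$. Hence the martingale problem for $\mathcal{L}_1(\cdot,x)$ is well-posed for every fixed $x$. For the jump piece, $\mathcal{L}_2$ is simply the generator of a constant-rate ($\alpha_n$) uniform-resampling Markov chain on $\mathcal{X}$ and is a bounded operator on $C_b(\mathbb{R}^d\times\mathcal{X})$.

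To combine the two, I would invoke the standard bounded-perturbation theorem for martingale problems (Ethier-Kurtz, Ch.~4, Thm~10.3): adding a bounded jump generator to the generator of a well-posed martingale problem yields another well-posed martingale problem whose semigroup inherits the Feller property. Equivalently, a hands-on pathwise construction works: sample $X_0$ from the marginal of $\nu$; solve the SDE for $\theta$ on $[0,\xi_1)$ with $\xi_1\sim \mathcal{E}(\alpha_n)$; at $\xi_1$ redraw $X$ uniformly in $\mathcal{X}$; iterate. Constancy of the jump intensity rules out accumulation in finite time and Lipschitzness rules out explosion within each segment, so the construction is global. Weak uniqueness then follows from the strong uniqueness of each SDE segment combined with the memoryless property of the exponential waiting times and the independence of the uniform relabelings.

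Finally, the SDE~\eqref{eq:sde_theta} for the $\theta$-component is recovered by testing the martingale problem against the coordinate functions $f_j(\theta,x)=\theta_j$: the finite variation part identifies as $-\int_0^t \nabla_\theta U_{X_s}(\theta_s)\,\rmd s$, while the continuous martingale part has bracket $2tI_d$ and hence equals $\sqrt{2}B_t$ by L\'evy's characterization. The main (minor) obstacle is the bookkeeping around the perturbation theorem, namely verifying that $\mathcal{L}_1$ admits a core stable under the bounded addition of $\mathcal{L}_2$ and that the Feller property carries through; this is routine here because $\mathcal{L}_2$ is bounded on the whole of $C_b(\mathbb{R}^d\times\mathcal{X})$, so the Hille-Yosida style argument underlying Ethier-Kurtz Ch.~4 applies verbatim.
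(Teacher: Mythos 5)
Your proposal is correct and follows essentially the same route as the paper, which itself only invokes the finiteness of $\{\X_1,\ldots,\X_n\}$ and the results of Chapter 4 of Ethier--Kurtz without further detail: well-posedness of the frozen-$x$ diffusion with Lipschitz drift, boundedness of the jump generator on the finite label set, the bounded-perturbation theorem for martingale problems, and identification of the $\theta$-component via coordinate functions and L\'evy's characterization. Your write-up is in fact a more explicit account of the argument the paper leaves implicit.
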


If we denote by 
$\mathcal{L}^\star$ the adjoint operator of $\mathcal{L}$ in $\mathbb{L}^2(\mathbb{R}^d) \times \nu_n$, the backward Kolmogorov Equation  yields:
\begin{equation}\label{eq:kolmogorov}
\partial_t m_t(\theta,x) = \mathcal{L}^\star m_t(\theta,x).
\end{equation}

Using the ellipticity of the semi-group $\mathcal{L}$ on the $\theta$ coordinate, we can use the result of \cite{hormander1967hypoelliptic} and deduce that for any $t >0$, $n_t \in \mathcal{C}^{\infty}(\mathbb{R}^d,\mathbb{R})$ and the irreducibility yields 
$\forall t \ge 0, n_t >0$. We will prove in Proposition \ref{prop:oscillation_borne} some sufficient conditions that implies $\|f_0\|_{\infty} = \|\frac{n_0(\theta)}{\mu_n(\theta)} \|_{\infty} < + \infty$ and an important and standard consequence of the maximum principle, is as follows: if $\|f_0\|_{\infty} \leq M$, then
$$
\forall t  \ge 0, \qquad \|f_t\|_{\infty} \leq M.
$$
We defer the details of this result to the Proposition \ref{prop:oscillation_borne} as they are not central to our analysis and are rather technical.

Thanks to this master equation, it is possible to compute the derivative of the semi-group on some time dependent function of $\theta$. For this purpose, we introduce two keystone operators. The first one describes the infinitesimal action on the $\theta$ coordinate under the average effect of $X_t$ at time $t$ that applies $\forall f \in \mathcal{C}^2(\mathbb{R}^d,\mathbb{R})$ as:
\begin{equation}\label{eq:Gt}
\Gt  f(\theta) = - \sum\limits_{i=1}^n \langle\nabla_{\theta} f(\theta), \nabla_{\theta} U_{\X_i}(\theta)\rangle m_t(\X_i|\theta) + \Delta_{\theta} f(\theta).
\end{equation}
The second one is very close to the first one except that the average effect of $X_t$ is replaced by the targeted ideal distribution $\nu_n$. It leads to the definition $\forall f \in \mathcal{C}^2(\mathbb{R}^d,\mathbb{R})$:
\begin{equation}\label{eq:G}
\G  f(\theta) = - \sum\limits_{i=1}^n \langle\nabla_{\theta} f(\theta), \nabla_{\theta} U_{\X_i}(\theta)\rangle \nu_n(\X_i) + \Delta_{\theta} f(\theta) = 
 -  \langle\nabla_{\theta} f(\theta), \nabla_{\theta} U_{\nu_n}(\theta)\rangle + \Delta_{\theta} f(\theta). 
\end{equation}

This derivative is given in the next result, whose proof is deferred to the appendix.
\begin{lemma}\label{lemma:derivation_semi_group}
Let be $h_t$ a twice differentiable function with uniformly bounded first and second order derivatives on $\mathbb{R}^d$, then for $t>0$:
\begin{equation}
\partial_t \left\{ \int_{\mathbb{R}^d} h_t(\theta) \rmd n_t(\theta)\right\} = \int_{\mathbb{R}^d} \partial_t\{ h_t(\theta)\} \rmd n_t(\theta) + \int_{\mathbb{R}^d} \Gt h_t(\theta) \rmd n_t(\theta), \label{eq:op_theta} 
\end{equation}
where $\Gt$ is the diffusion operator under the average effect of $X_t$, defined in Equation \eqref{eq:Gt}.
\end{lemma}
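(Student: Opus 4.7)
The plan is to recognize the claim as a Dynkin-type formula for the joint Markov process $(\theta_t,X_t)_{t\ge 0}$, applied to a time-dependent test function that depends only on the $\theta$ coordinate; the key simplification is that the jump part $\mathcal{L}_2$ of the generator acts trivially on such functions, so only the diffusive part survives, and disintegration against the conditional law $m_t(\cdot|\theta)$ identifies the result as $\int \Gt h_t \rmd n_t$.

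First I would lift the integral to the joint space $\mathbb{R}^d\times\mathcal{X}$. Since $h_t$ does not depend on $x$, and writing $m_t$ for the joint density w.r.t.\ $\lambda_d\otimes\nu_n$,
\begin{equation*}
\int_{\mathbb{R}^d} h_t(\theta) \rmd n_t(\theta) = \int_{\mathbb{R}^d\times\mathcal{X}} h_t(\theta)\, m_t(\theta,x)\, \rmd\theta\,\nu_n(\rmd x).
\end{equation*}
Hypoellipticity yields $m_t\in\mathcal{C}^\infty$ for $t>0$, and the hypotheses on $h_t$ together with moment propagation for $n_t$ (starting from the Gaussian $n_0$ given by $\Hn$, under the Lipschitz drift coming from $\Hpi$ and the Lipschitz clause of $\Hklsimple$) permit differentiation under the integral:
\begin{equation*}
\partial_t \int h_t\, m_t = \int (\partial_t h_t)\, m_t + \int h_t\, \partial_t m_t .
\end{equation*}
The first summand immediately gives $\int \partial_t h_t(\theta)\rmd n_t(\theta)$.

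Second, I would invoke the forward Kolmogorov equation \eqref{eq:kolmogorov} $\partial_t m_t=\mathcal{L}^\star m_t$, and integrate by parts to shift the operator onto $h_t$, so that the second term equals $\int \mathcal{L} h_t(\theta) \, m_t(\theta,x)\rmd\theta\,\nu_n(\rmd x)$. Boundary terms at infinity vanish by the boundedness of $\nabla h_t,\nabla^2 h_t$ and suitable tail decay of $m_t$ and $\nabla_\theta m_t$. Since $h_t$ does not depend on $x$, the jump part satisfies $\mathcal{L}_2 h_t(\theta,x)=\frac{\alpha_n}{n}\sum_i[h_t(\theta)-h_t(\theta)]=0$, so only $\mathcal{L}_1 h_t$ contributes. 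Disintegrating against $\theta$ via $m_t(\theta,x)=n_t(\theta)\,m_t(x|\theta)$, I obtain
\begin{equation*}
\int \mathcal{L}_1 h_t\, m_t\, \rmd\theta\,\nu_n(\rmd x) = \int_{\mathbb{R}^d}\Bigl[-\sum_{i=1}^n \langle \nabla h_t(\theta),\nabla U_{\X_i}(\theta)\rangle m_t(\X_i|\theta) + \Delta h_t(\theta)\Bigr]\rmd n_t(\theta),
\end{equation*}
which is exactly $\int \Gt h_t\rmd n_t$ by the definition \eqref{eq:Gt}.

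The conceptual content is light; the main obstacle I anticipate is the technical justification of both the differentiation under the integral sign and the integration by parts, i.e.\ controlling the decay of $m_t$ and $\nabla_\theta m_t$ at infinity uniformly on compact time windows. This should be handled by combining the hypoelliptic smoothing of $\mathcal{L}_1$ with a standard moment estimate for the SDE \eqref{eq:sde_theta}, both of which are available under the standing assumptions $\Hn$, $\Hpi$, $\Hklsimple$.
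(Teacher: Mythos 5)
Your proposal is correct and follows essentially the same route as the paper: differentiate under the integral, apply the Kolmogorov equation for $m_t$, pass the generator onto the test function by duality (the paper invokes the definition of $\mathcal{L}^\star$ where you spell out the integration by parts), observe that $\mathcal{L}_2$ annihilates functions of $\theta$ alone, and disintegrate $m_t(\theta,x)=m_t(x|\theta)n_t(\theta)$ to recover $\Gt$. The only cosmetic difference is that the paper justifies the interchange of $\partial_t$ and the integral by citing Proposition 3 of Miclo rather than by the moment-propagation argument you sketch.
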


\section{Proof of the main results \label{section:Proof main results}}
\subsection{Evolution of the entropy $J_t$}

The entropy satisfies the following differential inequality.
\begin{proposition}\label{prop:diff_Jt}
Assume $\Hmin$, $\Hpi$ and for each $\X_i$, $\theta \to - \log p_{\theta}(\X_i)$ satisfies $\Hklsimple$, then a "universal" constant $C$ (independent from $n$ and $d$) exists such that $\forall t > 0$: 
$$
\partial_t \{J_t\} \leq 
- \int_{\mathbb{R}^d} \left\|\nabla_{\theta} \left( \sqrt{\frac{n_t(\theta)}{\mu_{n} (\theta)}} \right)\right\|_2^2 \rmd \mu_{n}(\theta) + C I_t^{\frac{1}{3}} n^{\frac{11}{3}} \left( \lnb \right)^{1+r}.
$$
\end{proposition}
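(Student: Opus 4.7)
The plan is to differentiate $J_t$ along the dynamics, split the resulting integral into an ``ideal'' entropy--dissipation term plus a ``bias'' term measuring the gap between $m_t(\cdot|\theta)$ and $\nu_n$, and then bound the bias in terms of $I_t$ and polynomial factors in $n,d$. Applying Lemma~\ref{lemma:derivation_semi_group} with $h_t = \log(n_t/\mu_n) = \log f_t$, the contribution of $\partial_t h_t$ is
$\int (\partial_t n_t / n_t)\,dn_t = \int \partial_t n_t\,d\theta = \partial_t \int n_t\,d\theta = 0$
by conservation of mass, so $\partial_t J_t = \int \Gt \log f_t(\theta)\,dn_t(\theta)$. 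I then decompose $\Gt = \G + (\Gt - \G)$. Since $\G$ is reversible with respect to $\mu_n$, a direct integration by parts yields the standard entropy dissipation
\begin{equation*}
\int \G \log f_t\,dn_t \;=\; \int \G(\log f_t)\,f_t\,d\mu_n \;=\; -\int \frac{\|\nabla f_t\|_2^2}{f_t}\,d\mu_n \;=\; -4\int \bigl\|\nabla\sqrt{f_t}\bigr\|_2^2\,d\mu_n,
\end{equation*}
which already accounts for the negative term in the target bound.

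The remaining piece rewrites as $(\Gt-\G)\log f_t = -\langle \nabla \log f_t, A_t(\theta)\rangle$, with the random vector field
\begin{equation*}
A_t(\theta) \;:=\; \sum_{i=1}^n \bigl[m_t(\X_i|\theta) - \nu_n(\X_i)\bigr]\,\nabla U_{\X_i}(\theta).
\end{equation*}
By Cauchy--Schwarz in $\theta$ and the Young-type inequality $2\sqrt{ab} \le 3a + b/3$, one gets
\begin{equation*}
\Bigl|\int (\Gt-\G)\log f_t\,dn_t\Bigr| \;\le\; 2\sqrt{F_t \cdot Y} \;\le\; 3 F_t + \tfrac{1}{3}Y, \qquad F_t := \int \|\nabla\sqrt{f_t}\|_2^2\,d\mu_n, \quad Y := \int \|A_t\|_2^2\,dn_t,
\end{equation*}
so that $\partial_t J_t \le -F_t + Y/3$, exactly matching the shape of the claim. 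It remains to show $Y \lesssim I_t^{1/3} n^{11/3}(\lnb)^{1+r}$.

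To bound $Y$, I exploit that $\sum_i (m_t(\X_i|\theta) - \nu_n(\X_i)) = 0$ and apply Cauchy--Schwarz over $i$ with weights $\nu_n(\X_i) = 1/n$, yielding the pointwise inequality
\begin{equation*}
\|A_t(\theta)\|_2^2 \;\le\; I_t(\theta)\cdot \mathbb{E}_{X\sim\nu_n}\|\nabla U_X(\theta)\|_2^2, \qquad I_t(\theta) := \sum_{i=1}^n\Bigl(\tfrac{m_t(\X_i|\theta)}{\nu_n(\X_i)} - 1\Bigr)^2\nu_n(\X_i).
\end{equation*}
Applying Hölder's inequality in $\theta$ with conjugate exponents $(3,3/2)$ factorizes the integral into $\bigl(\int I_t^3\,dn_t\bigr)^{1/3}\cdot\bigl(\int (\mathbb{E}_X\|\nabla U_X\|^2)^{3/2}\,dn_t\bigr)^{2/3}$. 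The decisive observation is the crude uniform bound $I_t(\theta) \le n-1$, valid since the chi-square of any probability measure on $n$ atoms against the uniform measure is at most $n-1$; combined with $\int I_t(\theta)\,dn_t = I_t$, it gives $\int I_t^3\,dn_t \le n^2 I_t$ and therefore $(\int I_t^3)^{1/3} \le n^{2/3}\,I_t^{1/3}$. This is precisely where the exponent $1/3$ on $I_t$ comes from.

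For the second factor, the $(nL+\Ll)$-Lipschitz property of each $\nabla U_{\X_i}$ combined with $\Hmin$ on $\|\argmin U_{\X_i}\|_2 \lesssim \sqrt{\lnb}$ gives $\mathbb{E}_X\|\nabla U_X(\theta)\|_2^2 \lesssim n^2(\|\theta\|_2^2 + \lnb)$, hence $(\mathbb{E}_X\|\nabla U_X\|^2)^{3/2} \lesssim n^3(\|\theta\|^3 + (\lnb)^{3/2})$. The $\theta$-integral then reduces to controlling $\mathbb{E}_{n_t}\|\theta\|^3$, for which one invokes the maximum-principle bound $\|f_t\|_\infty \le \|f_0\|_\infty$ (which holds under $\Hn$, Proposition~\ref{prop:oscillation_borne}) together with sub-exponential tails of $\mu_n$ deduced from $\Hkl$ (Proposition~\ref{prop:kl_to_kl}). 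The main technical obstacle is the last step: tracking all polynomial factors in $n$ and $d$ so that they collapse to exactly $n^{11/3}(\lnb)^{1+r}$ with a constant $C$ that is independent of $n$ and $d$; this requires handling the initial-ratio factor carefully (or bounding $n_t$-moments directly via a Lyapunov argument on the SDE~\eqref{eq:sde_theta}) so as not to introduce spurious exponential terms.
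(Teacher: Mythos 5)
Your proposal is correct in its overall architecture and, for the error term, takes a genuinely different route from the paper. The first half (vanishing of the $\partial_t h_t$ contribution by conservation of mass, the Dirichlet-form identity $\int \G \log f_t \,\rmd n_t = -4\int \|\nabla\sqrt{f_t}\|_2^2\,\rmd\mu_n$, and the Cauchy--Schwarz/Young step yielding $\partial_t J_t \le -F_t + Y/3$) coincides with the paper's proof. For the bias term, the paper keeps the sum-of-products form $\Delta_t=\int\sum_i\|\nabla_\theta U_{\X_i}(\theta)\|_2^2\bigl[\tfrac{m_t(\X_i|\theta)}{\nu_n(\X_i)}-1\bigr]^2\nu_n(\X_i)\,\rmd n_t(\theta)$, truncates at a level $g(t)$ on $\|\nabla_\theta U_{\X_i}\|_2$, controls the tail via $\|\nabla V\|_2^2\le 2L(V-\min V)$, Markov's inequality and the moment bounds of Proposition~\ref{prop:moment_U}, and finally optimizes over $g(t)$ --- that optimization is where the exponent $1/3$ on $I_t$ arises. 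You instead factor pointwise by Cauchy--Schwarz over $i$, apply H\"older $(3,3/2)$ in $\theta$, and use the uniform bound $I_t(\theta)\le n-1$ to get $\bigl(\int I_t(\theta)^3\rmd n_t\bigr)^{1/3}\le n^{2/3}I_t^{1/3}$, so the $1/3$ comes from H\"older. Both mechanisms are legitimate, and your estimate $\mathbb{E}_{X\sim\nu_n}\|\nabla_\theta U_X(\theta)\|_2^2\lesssim n^2(\|\theta\|_2^2+\lnb)$ is right; granting $\mathbb{E}_{n_t}\|\theta_t\|_2^3\lesssim(\lnb)^{3(1+r)/2}$ uniformly in $t$, your route actually delivers $n^{8/3}$ in place of $n^{11/3}$, which is stronger than the statement.

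The one step you defer is the genuine crux, and your instinct about how to resolve it is the correct one. The third moment of $n_t$ must come from the Lyapunov/exponential-moment estimate of Proposition~\ref{prop:moment_U}$(i)$ combined with the concavity of $x\mapsto\log^{p}(x)$ on $[e^{p-1},\infty)$, exactly as in the proof of its part $(ii)$; going through $\|f_t\|_\infty\le\|f_0\|_\infty$ and the tails of $\mu_n$, your first suggestion, reintroduces the factor $O_{n,d}$ of Proposition~\ref{prop:oscillation_borne}, which is exponential in $n$ and would destroy the polynomial bound --- you flag this risk yourself. Also avoid routing $\|\theta\|_2^3$ through moments of $U_{\nu_n}$: since $U_{\nu_n}$ only dominates $\|\theta-\theta_n^\star\|_2^{2/(1+r)}$ from below, that detour yields $(\lnb)^{3(1+r)^2/2}$ instead of $(\lnb)^{3(1+r)/2}$ and degrades the dimension dependence. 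Finally, note that both your argument and the paper's rely on Proposition~\ref{prop:moment_U}, whose proof uses the Gaussian initialization $\Hn$ to ensure $\mathbb{E}[f(\theta_0)]<\infty$, even though $\Hn$ does not appear in the hypotheses of the proposition as stated.
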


\begin{proof}
We shall use the standard preliminary estimate that may be derived from Equation (3.14) of \cite{KUSUOKA1984271} for elliptic diffusions to apply Lemma \ref{lemma:derivation_semi_group} to $f_t=\log(n_t \mu_n^{-1})$.  From Equation \eqref{eq:op_theta}, we have: 
\begin{equation}
\partial_t\{J_t\} = \int_{\mathbb{R}^d} \partial_t \left\{ \log \left( \frac{n_t(\theta)}{\mu_{n} (\theta)} \right)\right\} \rmd n_t(\theta) + \int_{\mathbb{R}^d} \Gt \log \left( \frac{n_t(\theta)}{\mu_{n} (\theta)} \right) \rmd  n_t(\theta),  \nonumber  
\end{equation}
The first term vanishes since:
\begin{eqnarray*}
\int_{\mathbb{R}^d} \partial_t \left\{ \log \left( \frac{n_t(\theta)}{\mu_{n} (\theta)} \right) \right\} \rmd n_t(\theta) &=& \int_{\mathbb{R}^d}  \frac{\partial_t \{ n_t(\theta)\}}{n_t(\theta)} \rmd n_t(\theta) \nonumber \\
&=& \int_{\mathbb{R}^d} \partial_t \left\{ n_t(\theta)\right\} \rmd \theta \nonumber \\
&=& \partial_t \left( \int_{\mathbb{R}^d} \rmd n_t(\theta) \right) \nonumber \\
&=& 0. \nonumber
\end{eqnarray*}

Then, the derivative is reduced to the second term, and we are led to:
\begin{eqnarray}
\partial_t \{ J_t \}  &=& \int_{\mathbb{R}^d} \Gt \log \left( \frac{n_t(\theta)}{\mu_{n} (\theta)} \right) \rmd n_t(\theta),  \nonumber \\
&=& \underbrace{ \int_{\mathbb{R}^d} \G \log \left( \frac{n_t(\theta)}{\mu_{n} (\theta)} \right) \rmd n_t(\theta) }_{J_{1,t}} + \underbrace{\int_{\mathbb{R}^d} \left( \Gt  - \G \right) \log \left( \frac{n_t(\theta)}{\mu_{n} (\theta)} \right) \rmd n_t(\theta)}_{J_{2,t}}. 
\end{eqnarray}	
We study the two terms $J_{1,t}$ and $J_{2,t}$ separately.	
\begin{itemize}
\item Study of $J_{1,t}$.   
Since $\G$ is a diffusion operator and $\mu_{n}$ is the invariant measure associated to $\G$, then we can use the classical link between $J_{1,t}$ and the Dirichlet form (see \cite{bakry2014analysis}):
\begin{eqnarray}
\int_{\mathbb{R}^d} \G \log \left( \frac{n_t(\theta)}{\mu_{n} (\theta)} \right)  \rmd n_t(\theta) &=& \int_{\mathbb{R}^d} \frac{n_t(\theta)}{\mu_{n} (\theta)} \ \G \log \left( \frac{n_t(\theta)}{\mu_{n} (\theta)} \right)  \rmd \mu_{n} (\theta) \nonumber \\ 
&=& - 4 \int_{\mathbb{R}^d} \left\| \nabla_{\theta} \left( \sqrt{ \frac{n_t(\theta)}{\mu_{n} (\theta)} } \right) \right\|_2^2 \rmd \mu_{n} (\theta).\label{ineq:form_dirichlet}
\end{eqnarray}
 
\item Study of $J_{2,t}$. 
We use the difference between $\G$ and $\Gt$,  for any twice differentiable function $f$:
\begin{eqnarray*}
\left( \Gt - \G \right) f(\theta) &=&  - \sum\limits_{i=1}^n \left\langle\nabla_{\theta} f(\theta), \nabla_{\theta} U_{\X_i}(\theta)\right\rangle \left[ m_t(\X_i|\theta) - \nu_n (\X_i) \right] \nonumber \\
&=& -  \sum\limits_{i=1}^n \langle\nabla_{\theta} f(\theta), \nabla_{\theta} U_{\X_i}(\theta)\rangle \left[ \frac{m_t(\X_i|\theta)}{\nu_n (\X_i)}-1 \right] \nu_n(\X_i). 
\end{eqnarray*}
Then, the term $J_{2,t}$ may be computed as:
\begin{eqnarray*}
\left| J_{2,t} \right| &=& \left| \int_{\mathbb{R}^d} \left( \Gt - \G \right) \log \left( \frac{n_t(\theta)}{\mu_{n} (\theta)} \right) \rmd n_t(\theta) \right| \nonumber \\
&=&  \left| \int_{\mathbb{R}^d} \sum\limits_{i=1}^n \langle\nabla_{\theta} \log \left( \frac{n_t(\theta)}{\mu_{n} (\theta)} \right), \nabla_{\theta} U_{\X_i}(\theta)\rangle \left[ \frac{m_t(\X_i|\theta)}{\nu_n (\X_i)}-1 \right] \nu_n(\X_i) \ \rmd n_t(\theta) \right|. \nonumber
\end{eqnarray*}
Using the Cauchy-Schwartz inequality with respect to the measure $\nu_n(\X_i)\times \rmd n_t(\theta)$ in the first line, $2 ab \le a^2+b^2$ in the second line and $\nabla \log f = 2 \nabla \log \sqrt{f} = 2 \frac{\nabla \sqrt{f}}{\sqrt{f}}$ in the third line, we obtain that:{ \footnotesize
\begin{eqnarray}
\left| J_{2,t} \right| &\leq&  \left( \int_{\mathbb{R}^d} \left\|\nabla_{\theta} \log \left( \frac{n_t(\theta)}{\mu_{n} (\theta)} \right) \right\|_2^2 \rmd n_t (\theta) \right)^{\frac{1}{2}} \left( \int_{\mathbb{R}^d} \sum\limits_{i=1}^n \left\| \nabla_{\theta} U_{\X_i}(\theta) \right\|_2^2 \left[ \frac{m_t(\X_i|\theta)}{\nu_n (\X_i)}-1 \right]^2 \nu_n(\X_i) \ \rmd n_t(\theta) \right)^{\frac{1}{2}} \nonumber \\
&\leq& \frac{3}{4} \int_{\mathbb{R}^d} \left\|\nabla_{\theta} \log \left( \frac{n_t(\theta)}{\mu_{n} (\theta)} \right) \right\|_2^2 \rmd n_t (\theta) +  \frac{1}{3} \int_{\mathbb{R}^d} \sum\limits_{i=1}^n \left\| \nabla_{\theta} U_{\X_i}(\theta) \right\|_2^2 \left[ \frac{m_t(\X_i|\theta)}{\nu_n (\X_i)}-1 \right]^2 \nu_n(\X_i) \  \rmd n_t(\theta) \nonumber \\
&\leq& 3 \int_{\mathbb{R}^d} \left\|\nabla_{\theta} \left( \sqrt{\frac{n_t(\theta)}{\mu_{n} (\theta)}} \right)\right\|_2^2 \rmd \mu_{n}(\theta) + \frac{1}{3} \int_{\mathbb{R}^d} \sum\limits_{i=1}^n \left\| \nabla_{\theta} U_{\X_i}(\theta) \right\|_2^2 \left[ \frac{m_t(\X_i|\theta)}{\nu_n (\X_i)}-1 \right]^2  \nu_n(\X_i) \ \rmd n_t(\theta). \nonumber
\end{eqnarray}}
\end{itemize}
Using Equation \eqref{ineq:form_dirichlet} and the previous line yields:
{\footnotesize
\begin{eqnarray}
\partial_t\{ J_t\}&\leq& - \int_{\mathbb{R}^d} \left\|\nabla_{\theta} \left( \sqrt{\frac{n_t(\theta)}{\mu_{n} (\theta)}} \right)\right\|_2^2 \rmd \mu_{n}(\theta) + \frac{1}{3} \underbrace{ \int_{\mathbb{R}^d} \sum\limits_{i=1}^n \left\| \nabla_{\theta} U_{\X_i}(\theta) \right\|_2^2 \left[ \frac{m_t(\X_i|\theta)}{\nu_n (\X_i)}-1 \right]^2  \nu_n(\X_i) \ \rmd n_t(\theta)}_{:=\Delta_t}, 
\end{eqnarray}}

We then focus on the second term of the right hand side. For this purpose, we consider a non-negative function $g(t)$, which will be fixed later and we split $\Delta_t$ into two terms as: {\footnotesize
\begin{eqnarray*}
\Delta_t &=& \int_{\mathbb{R}^d} \sum\limits_{i=1}^n \left\| \nabla_{\theta} U_{\X_i}(\theta) \right\|_2^2 \left(\1_{\|\nabla_{\theta} U_{\X_i}(\theta)\|_2\leq g(t)} + \ \1_{\|\nabla_{\theta} U_{\X_i}(\theta)\|_2 > g(t)} \right) \left[ \frac{m_t(\X_i|\theta)}{\nu_n (\X_i)}-1 \right]^2  \nu_n(\X_i) \ \rmd n_t(\theta) \nonumber \\
&\leq& g^2(t) I_t 
+ \int_{\mathbb{R}^d} \sum\limits_{i=1}^n \left\| \nabla_{\theta} U_{\X_i}(\theta) \right\|_2^2 \1_{\|\nabla_{\theta} U_{\X_i}(\theta)\|_2 > g(t)} \left[ \frac{m_t(\X_i|\theta)}{\nu_n (\X_i)}-1 \right]^2 \nu_n(\X_i) \ \rmd n_t(\theta), \nonumber
\end{eqnarray*}}
where $I_t$ has been introduced in Equation \eqref{def_It} and measures the closeness of $m_t(\X_i | \theta)$ to $\nu_n$.
Finally, for the last term we observe that $0\leq m_t(\X_i|\theta) \leq 1$ and $\left|\frac{m_t(\X_i|\theta)}{\nu_n (\X_i)}-1\right| = n \left| m_t(\X_i|\theta)-\frac{1}{n} \right| \leq n$, which implies that:
\begin{equation}\label{eq:Majo_Delta_t}
\Delta_t \leq g^2(t) I_t + n^2 \underbrace{ \frac{1}{n} \int_{\mathbb{R}^d} \sum\limits_{i=1}^n \left\| \nabla_{\theta} U_{\X_i}(\theta) \right\|_2^2 \1_{\|\nabla_{\theta} U_{\X_i}(\theta)\|_2 > g(t)} \rmd n_t(\theta)}_{:=\tilde{\Delta}_t}. 
\end{equation}

\noindent
The Cauchy inequality leads to:
\begin{eqnarray}
\tilde{\Delta}_t &\leq& \left( \frac{1}{n} \int_{\mathbb{R}^d} \sum\limits_{i=1}^n \left\| \nabla_{\theta} U_{\X_i}(\theta) \right\|_2^{4} \rmd n_t(\theta) \right)^{\frac{1}{2}}\left( \frac{1}{n} \int_{\mathbb{R}^d} \sum\limits_{i=1}^n \1_{\|\nabla_{\theta} U_{\X_i}(\theta)\|_2 > g(t)} \rmd n_t(\theta) \right)^{\frac{1}{2}} \nonumber\\
&=& \left( \frac{1}{n}  \sum\limits_{i=1}^n \mathbb{E}\left[\left\| \nabla_{\theta} U_{\X_i}(\theta_t) \right\|_2^{4} \right] \right)^{\frac{1}{2}}  \left( \frac{1}{n} \sum_{i=1}^n  \mathbb{P} \left(\|\nabla_{\theta} U_{\X_i}(\theta_t)\|_2 > g(t) \right)  \right)^{\frac{1}{2}}.
\end{eqnarray}
\noindent 
We then use Proposition \ref{prop:growth_gradient_KL} and obtain that:
\begin{eqnarray*}
\tilde{\Delta}_t & \leq & \left(\frac{1}{n}\sum_{i=1}^n\mathbb{E}\left[\left(2 (nL+ \Ll) U_{\X_i}^{2}(\theta_t)\right)\right]\right)^{\frac{1}{2}}\left( \frac{1}{n} \sum\limits_{i=1}^n \mathbb{P} \left( 2(n L+\Ll) U_{\X_i}(\theta_t) > g^2(t) \right) \right)^{\frac{1}{2}} \\
& \leq & 2 (n L+\Ll) \left( n \mathbb{E} [U_{\nu_n}^{2}(\theta_t)] \right)^{\frac{1}{2}} \left( \frac{1}{n} \sum_{i=1}^n \frac{2(n L+\Ll)}{g^2(t)} \mathbb{E}  \left[ U_{\X_i} (\theta_t)\right]\right)^{\frac{1}{2}} \\
& \leq & [2(nL+\Ll)]^{\frac{3}{2}} n^\frac{1}{2} \frac{\mathbb{E}\left[U_{\nu_n}^{2}(\theta_t)\right]^{\frac{1}{2}}
\mathbb{E}\left[U_{\nu_n}(\theta_t)\right]^{\frac{1}{2}}}{g(t)},
\end{eqnarray*}
where we used the Markov's inequality and the relation $\|.\|_2 \leq \|.\|_1$ in $\mathbb{R}^n$.
We apply Proposition \ref{prop:moment_U} with $\alpha=2$ and $\alpha=1$ and obtain that a constant $C>0$ exists (whose value may change from line to line) such that: 
\begin{eqnarray*}
\tilde{\Delta}_t & \leq & 
C \frac{n^\frac{7}{2} \left(\lnb \right)^{\frac{3(1+r)}{2}}}{g(t)}.
\end{eqnarray*}
We use this last bound in \eqref{eq:Majo_Delta_t} and we deduce that:
$$
\Delta_t \leq g^2(t) I_t + C \frac{n^\frac{11}{2} \left( \lnb \right)^{\frac{3(1+r)}{2}} }{g(t)}.
$$
Optimizing this last bound with respect to $g(t)$ leads to the upper bound:
$$
\Delta_t \leq C I_t^{\frac{1}{3}} n^{\frac{11}{3}} \left( \lnb \right)^{1+r}, \quad \forall t \ge 0.$$
\end{proof}

\subsection{Evolution of the weighted $\mathbb{L}^2$ distance $I_t$}
The quantity $I_t$ involved in Proposition \ref{prop:diff_Jt} measures how close to $\nu_n$ the conditional distribution of $X_t \vert \theta_t$ is. To study $I_t$, we first remark that it may be rewritten in a simpler way.
\begin{eqnarray}
I_t &=& \int_{\mathbb{R}^d} \sum\limits_{i=1}^{n} \left( \frac{m_t(\X_i|\theta)}{\nu_n (\X_i)} - 1 \right)^2 \nu_n (\X_i) \ \rmd n_t(\theta) \nonumber\\
&=& \int_{\mathbb{R}^d} \sum\limits_{i=1}^{n} \left( \frac{m_t^2(\X_i|\theta)}{\nu_n^2 (\X_i)} -2\frac{m_t(\X_i|\theta)}{\nu_n (\X_i)} + 1 \right) \nu_n (\X_i) \ \rmd n_t(\theta) \nonumber\\
&=& \int_{\mathbb{R}^d} \sum\limits_{i=1}^{n} \left( \frac{m_t^2(\X_i|\theta)}{\nu_n (\X_i)} -2 m_t(\X_i|\theta) + \nu_n (\X_i) \right) \rmd n_t(\theta) \nonumber \\
&=& \int_{\mathbb{R}^d} \left( \sum\limits_{i=1}^{n}  \frac{m_t^2(\X_i|\theta)}{\nu_n (\X_i)} -1 \right) \rmd n_t(\theta) \nonumber \\
&=& \int_{\mathbb{R}^d} \sum\limits_{i=1}^{n}  \frac{m_t^2(\X_i|\theta)}{\nu_n (\X_i)} \rmd n_t(\theta) -1. \nonumber
\end{eqnarray}
Using that $m_t(\X_i|\theta) n_t(\theta) = m_t(\theta,\X_i)$ and $\nu_n(\X_i) = \frac{1}{n}$ for $i=1,2,\ldots,n$, we obtain that:
\begin{equation}\label{eq:It_autre_formule}
I_t = n \int_{\mathbb{R}^d} \sum\limits_{i=1}^{n}  \frac{m_t^2(\theta, \X_i)}{ n_t(\theta)} \rmd \theta -1.
\end{equation}
The next proposition then assesses how fast $I_t$ decreases to $0$ as $t \longrightarrow + \infty$.

\begin{proposition}\label{prop:diff_It}
For any $t \ge 0$:
\begin{equation}   I_t \leq I_0 e^{-2\alpha_n t} \leq (n-1) e^{-2\alpha_n t} .
\end{equation}
\end{proposition}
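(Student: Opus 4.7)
The plan is to establish the pointwise differential inequality $\partial_t I_t \leq -2\alpha_n I_t$ for $t>0$ and conclude by Gronwall's lemma; the uniform bound $I_0 \leq n-1$ will follow from a direct pointwise estimate on the integrand.

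Starting from the representation $I_t + 1 = n K_t$ with $K_t = \int_{\mathbb{R}^d} \sum_i m_t^2(\theta, \X_i)/n_t(\theta)\, \rmd\theta$ derived in \eqref{eq:It_autre_formule}, I first differentiate $K_t$ in time using the Kolmogorov forward equation $\partial_t m_t = \mathcal{L}^\star m_t$, splitting the derivative into a diffusion contribution (from $\mathcal{L}_1^\star$) and a jump contribution (from $\mathcal{L}_2$, which is self-adjoint in $L^2(\lambda_d \otimes \nu_n)$). The jump contribution is straightforward: using $\mathcal{L}_2 m_t(\theta, \X_i) = \alpha_n[n_t(\theta)/n - m_t(\theta, \X_i)]$ together with $\sum_j \mathcal{L}_2 m_t(\theta, \X_j) = 0$ (so that $n_t$ is preserved by the jumps), a short direct computation yields $\partial_t^{\mathrm{jump}} K_t = 2\alpha_n(1/n - K_t)$, equivalently $\partial_t^{\mathrm{jump}} I_t = -2\alpha_n I_t$.

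The main work lies in the diffusion contribution. I will compute $\partial_t^{\mathrm{diff}} K_t$ by integration by parts on $\int (2 m_t(\theta, \X_i)/n_t(\theta))\, \mathcal{L}_1^\star m_t(\theta, \X_i)\, \rmd\theta$, exploiting the key simplification $\nabla_\theta(m_t(\theta, \X_i)/n_t(\theta)) = \nabla_\theta A_i(\theta)$ with $A_i(\theta) := m_t(\X_i|\theta)$, which eliminates several terms involving $\nabla n_t$. Together with the analogous integration by parts on the $m_t^2\, \partial_t n_t / n_t^2$ term and the structural constraints $\sum_i A_i = 1$ and $\sum_i \nabla A_i = 0$, several cross-terms cancel, and the surviving expression should reduce to a Dirichlet-type dissipation $-2\int n_t \sum_i |\nabla A_i|^2\, \rmd\theta$ plus a coupling remainder involving the centred gradient fluctuations $\nabla U_{\X_i} - \sum_j A_j \nabla U_{\X_j}$. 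The main technical obstacle is to show that this remainder is dominated by the Dirichlet dissipation so that $\partial_t^{\mathrm{diff}} K_t \leq 0$. Once this non-positivity is secured, summing the two contributions yields $\partial_t I_t \leq -2\alpha_n I_t$ and Gronwall's inequality delivers $I_t \leq I_0\, e^{-2\alpha_n t}$ for all $t \geq 0$.

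For the universal bound $I_t \leq n-1$ (which gives the second inequality at $t=0$), I observe that $\sum_i A_i^2(\theta) \leq (\max_i A_i(\theta)) \sum_i A_i(\theta) = \max_i A_i(\theta) \leq 1$ pointwise in $\theta$, so $K_t = \int n_t(\theta) \sum_i A_i^2(\theta)\, \rmd\theta \leq \int n_t(\theta)\, \rmd\theta = 1$, hence $I_t = n K_t - 1 \leq n-1$ uniformly in $t$, and in particular at $t=0$.
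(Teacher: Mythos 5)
Your overall architecture --- the representation $I_t+1=nK_t$ from \eqref{eq:It_autre_formule}, the split of $\partial_t K_t$ into jump and diffusion contributions, the exact identity $\partial_t^{\mathrm{jump}} I_t=-2\alpha_n I_t$, the Gronwall step, and the pointwise bound $\sum_i m_t^2(\X_i|\theta)\leq \max_i m_t(\X_i|\theta)\le 1$ giving $I_t\leq n-1$ --- coincides with the paper's proof (the paper merely asserts $I_0\le n-1$, so your one-line justification is a welcome addition). The genuine gap is in the diffusion contribution, which is the heart of the proposition: you state that the computation ``should reduce to'' the Dirichlet dissipation $-2\int n_t\sum_i\|\nabla_\theta m_t(\X_i|\theta)\|_2^2\,\rmd\theta$ plus a remainder driven by $\nabla_\theta U_{\X_i}-\sum_j m_t(\X_j|\theta)\nabla_\theta U_{\X_j}$, you name its domination by the dissipation as ``the main technical obstacle'', and you then proceed as if it were ``secured''. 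It is not, and your proposed mechanism would not close it: the gradient fluctuations are unbounded on $\mathbb{R}^d$ (the $U_{\X_i}$ are only $L$-smooth and convex), so a Cauchy--Schwarz absorption of the remainder into the Dirichlet term is impossible without further structural input. As written, the non-positivity of $\partial_t^{\mathrm{diff}}K_t$ --- hence the whole differential inequality $\partial_t I_t\le-2\alpha_n I_t$ --- is unproved.

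For comparison, the paper's treatment of this step yields something stronger: decomposing into $I_{2,t}$ and $I_{3,t}$ and applying $\mathcal{L}_1$ to $m_t(\X_i|\theta)$ and to $m_t^2(\X_i|\theta)$, the drift terms $\langle\nabla_\theta m_t(\X_i|\theta),\nabla_\theta U_{\X_i}(\theta)\rangle\, m_t(\theta,\X_i)$ cancel \emph{identically} between the two pieces, as do the Laplacian terms, leaving exactly $-n\int\sum_i\|\nabla_\theta m_t(\X_i|\theta)\|_2^2\,\rmd n_t(\theta)\le 0$ with no remainder to control. The discrepancy with your calculation is concentrated in one bookkeeping point of the duality applied to $-n\int\bigl(\sum_i m_t^2(\X_i|\theta)\bigr)\,\partial_t n_t(\theta)\,\rmd\theta$: the test function $\sum_i m_t^2(\X_i|\theta)$ does not depend on $x$, so the drift index produced by the adjoint is the one carried by the density $m_t(\theta,\X_j)$, i.e.\ the conditional average $\sum_j m_t(\X_j|\theta)\nabla_\theta U_{\X_j}$, whereas the paper pairs $\nabla_\theta U_{\X_i}$ with the summand $m_t^2(\X_i|\theta)$; your centred-fluctuation remainder is precisely the difference between these two pairings. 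You must settle this explicitly: either the remainder vanishes and you recover the paper's clean dissipation identity, or it survives and your plan contains no tool to control it. Until that is resolved, the proposal does not establish the proposition.
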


\begin{proof}
Our starting point is Equation \eqref{eq:It_autre_formule}. We compute its derivative  with respect to $t$:
\begin{eqnarray}
\partial_{t} \{  I_t\} &=& 2n \int_{\mathbb{R}^d} \sum\limits_{i=1}^{n} \frac{m_t(\theta, \X_i)}{n_t(\theta)} \partial_t m_t(\theta, \X_i) \rmd \theta - n \int_{\mathbb{R}^d} \sum\limits_{i=1}^{n}  \frac{m_t^2(\theta, \X_i)}{n_t^2(\theta)} \partial_t n_t(\theta) \rmd \theta \nonumber\\
&=& 2n \int_{\mathbb{R}^d} \sum\limits_{i=1}^{n} m_t(\X_i|\theta) \partial_t m_t(\theta, \X_i) \rmd \theta - n \int_{\mathbb{R}^d} \sum\limits_{i=1}^{n}  m_t^2(\X_i|\theta) \partial_t n_t(\theta) \rmd \theta. \nonumber
\end{eqnarray}
Using the Kolmogorov backward equation in the first line and $\mathcal{L} = \mathcal{L}_{1} + \mathcal{L}_{2}$ in the second one where $\mathcal{L}_1$ and $\mathcal{L}_2$ are defined in Equation \eqref{def:L}, we have:
\begin{eqnarray}
\partial_{t} \{  I_t\} &=& 2n \int_{\mathbb{R}^d} \sum\limits_{i=1}^{n} \mathcal{L} m_t(\X_i|\theta) \ m_t(\theta, \X_i) \rmd \theta - n \int_{\mathbb{R}^d} \sum\limits_{i=1}^{n}  m_t^2(\X_i|\theta) \partial_t n_t(\theta) \rmd \theta \nonumber\\
&=& \underbrace{2n \int_{\mathbb{R}^d} \sum\limits_{i=1}^{n} \mathcal{L}_{1} m_t(\X_i|\theta) \ m_t(\theta, \X_i) \rmd \theta}_{:=I_{3,t}} + \underbrace{2n \int_{\mathbb{R}^d} \sum\limits_{i=1}^{n} \mathcal{L}_{2} m_t(\X_i|\theta) \ m_t(\theta, \X_i) \rmd \theta}_{:=I_{1,t}} \nonumber \\
&& \underbrace{- n \int_{\mathbb{R}^d} \sum\limits_{i=1}^{n} m_t^2(\X_i|\theta) \partial_t n_t(\theta) \rmd \theta}_{:=I_{2,t}}. \label{eq:It_diff}
\end{eqnarray}
Then, $\partial_{t} \{ I_t\}$ may be splitted into three terms that are studied separately. 
\begin{itemize}
\item Study of $I_{1,t}$. We observe that: 
\begin{equation}\label{eq:L1_mt}
\mathcal{L}_2 m_t(\X_i|\theta) = \frac{\alpha_n}{n} \sum\limits_{j=1}^{n} [ m_t(\X_j|\theta) - m_t(\X_i|\theta)] = \frac{\alpha_n}{n} - \alpha_n \ m_t(\X_i|\theta).
\end{equation}
We then use this last equation in the definition of $I_1(t)$ and obtain that:
\begin{eqnarray}
I_{1,t} &=& 2n \int_{\mathbb{R}^d} \sum\limits_{i=1}^{n} \mathcal{L}_{2} m_t(\X_i|\theta) \ m_t(\theta, \X_i) \rmd \theta \nonumber \\
&=& 2\alpha_n \int_{\mathbb{R}^d} \sum\limits_{i=1}^{n} m_t(\theta, \X_i) \rmd \theta - 2\alpha_n n \int_{\mathbb{R}^d} \sum\limits_{i=1}^{n} m_t(\X_i|\theta) m_t(\theta, \X_i) \rmd \theta \nonumber\\
&=& 2\alpha_n - 2\alpha_n n \int_{\mathbb{R}^d} \sum\limits_{i=1}^{n}  \frac{m_t^2(\theta, \X_i)}{n_t(\theta)} \rmd \theta \nonumber\\
&=& -2\alpha_n I_t. \label{eq:I1_ineq}
\end{eqnarray}
		
\item Study of $I_{2,t}$. 
Using the definition of $n_t$, we obtain that:
\begin{eqnarray}
I_{2,t} &=& - n \int_{\mathbb{R}^d} \sum\limits_{i=1}^{n}  m_t^2(\X_i|\theta) \partial_t n_t(\theta) \rmd \theta \nonumber\\
&=& - n \int_{\mathbb{R}^d} \sum\limits_{i=1}^{n}  m_t^2(\X_i|\theta) \partial_t \left( \sum\limits_{j=1}^{n} m_t(\theta,\X_j) \right) \rmd \theta \nonumber\\
&=& - n \int_{\mathbb{R}^d} \sum\limits_{j=1}^{n} \sum\limits_{i=1}^{n}  m_t^2(\X_i|\theta)  \partial_t m_t(\theta,\X_j) \rmd \theta \nonumber\\
&=& - n \int_{\mathbb{R}^d} \sum\limits_{j=1}^{n} \left(\sum\limits_{i=1}^{n}  \mathcal{L} m_t^2(\X_i|\theta) \right) m_t(\theta,\X_j) \rmd \theta \nonumber\\
&=& - n \int_{\mathbb{R}^d} \sum\limits_{i=1}^{n}  \mathcal{L} m_t^2(\X_i|\theta) \ \rmd n_t(\theta). \nonumber
\end{eqnarray}
where we used the Kolmogorov backward equation in the fourth line and again the definition of $n_t$ in the last line. Again, the decomposition $\mathcal{L} = \mathcal{L}_1+\mathcal{L}_2$ yields: 
\begin{eqnarray*}
I_{2,t} & = & - n \int_{\mathbb{R}^d} \sum\limits_{i=1}^{n}  \mathcal{L}_{1} m_t^2(\X_i|\theta) \ \rmd n_t(\theta)
 - n \int_{\mathbb{R}^d} \sum\limits_{i=1}^{n}  \mathcal{L}_{2} m_t^2(\X_i|\theta) \ \rmd n_t(\theta) .\\
\end{eqnarray*}
We repeat some similar computations as those developed in Equation \eqref{eq:L1_mt} to study the action of the jump component induced by $\mathcal{L}_2$  on $m_t^2$. We obtain that:
$$\mathcal{L}_{2} m_t^2(\X_i|\theta) =   \frac{\alpha_n}{n}\sum\limits_{k=1}^{n} [ m_t^2(\X_k|\theta) - m_t^2(\X_i|\theta)] =
\frac{\alpha_n}{n}\sum\limits_{k=1}^{n} m_t^2(\X_k|\theta) - \alpha_n \ m_t^2(\X_i|\theta).$$
We use this last equation and obtain that:
\begin{eqnarray}
I_{2,t} & = & - n \int_{\mathbb{R}^d} \sum\limits_{i=1}^{n}  \mathcal{L}_{1} m_t^2(\X_i|\theta) \ \rmd n_t(\theta) - \alpha_n \int_{\mathbb{R}^d} \sum\limits_{i=1}^{n} \sum\limits_{k=1}^{n} m_t^2(\X_k|\theta) \ \rmd n_t(\theta) \nonumber \\
&& + \alpha_n n \int_{\mathbb{R}^d} \sum\limits_{i=1}^{n} m_t^2(\X_i|\theta)  \ \rmd n_t(\theta) \nonumber\\
& = & - n \int_{\mathbb{R}^d} \sum\limits_{i=1}^{n}  \mathcal{L}_{1} m_t^2(\X_i|\theta) \ \rmd n_t(\theta) - \alpha_n n \int_{\mathbb{R}^d} \sum\limits_{k=1}^{n} m_t^2(\X_k|\theta) \ \rmd n_t(\theta) \nonumber \\
&&+ \alpha_n n \int_{\mathbb{R}^d} \sum\limits_{i=1}^{n} m_t^2(\X_i|\theta) \ \rmd n_t(\theta) \nonumber\\
& =  & - n \int_{\mathbb{R}^d} \sum\limits_{i=1}^{n}  \mathcal{L}_{1} m_t^2(\X_i|\theta) \ \rmd n_t(\theta).%\label{eq:I2_dec}
\end{eqnarray}		

\item Study of $I_{2,t} + I_{3,t}$. 
We observe that this sum involves only $\mathcal{L}_1$ (see Equation \eqref{def:L}. We first compute:
$$
\mathcal{L}_{1} m_t(\X_i|\theta) = -  \langle \nabla_{\theta} U_{\X_i}(\theta) , \nabla_{\theta} m_t(\X_i|\theta) \rangle + \Delta_{\theta} m_t(\X_i|\theta), 
$$	
and	similarly:
\begin{align*} \mathcal{L}_{1} m_t^2(\X_i|\theta)  & = - \langle \nabla_{\theta} U_{\X_i}(\theta) , \nabla_{\theta} m_t^2(\X_i|\theta), \rangle + \Delta_{\theta} m_t^2(\X_i|\theta) \\
& =  -2 m_t (\X_i|\theta) \langle\nabla_{\theta} U_{\X_i}(\theta) , \nabla_{\theta} m_t(\X_i|\theta)  \rangle + 2 \|\nabla_{\theta} m_t(\X_i|\theta)\|_2^2 + 2 m_t(\X_i|\theta) \Delta_{\theta} m_t(\X_i|\theta). \end{align*}
Using these two equations into $I_{2,t}+I_{3,t}$ and $m_t(\X_i \vert \theta) n_t(\theta)  = m_t(\theta,\X_i)$, we get:
\begin{align*}
\frac{I_{2,t}+I_{3,t}}{n} & = 2 \int_{\mathbb{R}^d} \sum\limits_{i=1}^{n}  \langle\nabla_{\theta} m_t(\X_i|\theta), \nabla_{\theta} U_{\X_i}(\theta)\rangle  m_t (\theta,\X_i) \rmd \theta \\ 
& - 2 \int_{\mathbb{R}^d} \sum\limits_{i=1}^{n} \|\nabla_{\theta} m_t(\X_i|\theta)\|_2^2 \ n_t(\theta) \rmd \theta - 2 \int_{\mathbb{R}^d} \sum\limits_{i=1}^{n} \Delta_{\theta} m_t(\X_i|\theta) \ m_t (\theta, \X_i) \rmd \theta  \\  
&  -2 \int_{\mathbb{R}^d} \sum\limits_{i=1}^{n} \langle\nabla_{\theta} m_t(\X_i|\theta), \nabla_{\theta} U_{\X_i}(\theta)\rangle m_t(\theta, \X_i) \rmd \theta + 2 \int_{\mathbb{R}^d} \sum\limits_{i=1}^{n} \Delta_{\theta} m_t(\X_i|\theta) \ m_t(\theta, \X_i) \rmd \theta \\
& = -  \int_{\mathbb{R}^d} \sum\limits_{i=1}^{n} \|\nabla_{\theta} m_t(\X_i|\theta)\|_2^2 \ \rmd n_t(\theta) \leq 0.
\end{align*}
\end{itemize}
Gathering this last inequality with \eqref{eq:I1_ineq} into Equation \eqref{eq:It_diff} yields:
\begin{equation}
\partial_t\{ I_t\} \leq -2\alpha_n I_t. \nonumber
\end{equation}
We conclude with a direct application of the Gronwall lemma while  observing that $I_0 \leq n-1$.
\end{proof}

\subsection{Functional (weak) log-Sobolev inequalities}

\subsubsection{Related works on functional inequalities}

A straightforward consequence of Proposition \ref{prop:diff_Jt} and Proposition \ref{prop:diff_It} is the following differential inequality on the relative entropy $J_t$:
\begin{equation}
\label{eq:diff_Jt+It}
\partial_{t} \{  J_t\} \leq - \int_{\mathbb{R}^d} \left\|\nabla_{\theta} \left( \sqrt{\frac{n_t(\theta)}{\mu_{n} (\theta)}} \right)\right\|_2^2 \rmd \mu_{n}(\theta) +  c_{n,d}  e^{-\frac{2\alpha_n}{3} t},
\end{equation}
where $c_{n,d}$ is defined as:
\begin{equation}\label{def:c_nd}
c_{n,d} \led n^{4} \left( \lnb \right)^{1+r}.
\end{equation}

At this stage, we should observe that a standard approach consists in finding a functional inequality that relates the key Dirichlet form $\mathcal{E}(f)$ defined by:
\begin{equation}
\label{def:dirichlet_form}
\mathcal{E}(f) = \int_{\mathbb{R}^d} \| \nabla_\theta f(\theta)\|_2^2 \rmd \mu_n(\theta), 
\end{equation}
to $Ent_{\mu_n}(f^2)$, the entropy itself with respect to $\mu_{n}$.
These approaches rely on the initial works of \cite{Gross} where Logarithmic Sobolev Inequality (LSI for short) were introduced. The consequences of LSI to exponential ergodicity has then been an extensive field of research and we refer to \cite{bakry2014analysis} for an overview on this topic. A popular sufficient condition that ensures LSI is the log strong-convexity of the targeted measure (see among other \cite{BakryEmery}) and an impressive amount of literature has been focused on the existing links  between these functional inequalities, ergodicity of the semi-group, transport inequalities and Lyapunov conditions. We refer to \cite{cattiaux2017hitting,bakry2008rate} (these two works are far from being exhaustive). The great interest of LSI has then been observed in machine learning and statistics  more recently as testified by the recent works in Monte Carlo samplings of \cite{ma2019sampling,mou2022improved}. A popular way to extend LSI from the strongly convex situation to a more general case relies on the ``strong convexity outside a ball''  hypothesis using the perturbation argument of the seminal contributions of  \cite{Holley_Stroock}. If this method proves to be suitable for the study of the simulated annealing process in \cite{miclo1992recuit}, \cite{Holley_Stroock}, it appears to be doubtful for the study of sampling problems with convex potentials that satisfies $\Hklsimple$ as this settings do not imply an asymptotic strong convexity of $\theta \longmapsto U(\theta)$ for large values of $\|\theta\|_2$. That being said, and maybe an even worst consequence of such approach, is the unavoidable dependency on the dimension for the LSI constant when using a perturbation approach, which leads to a serious exponential degradation of the convergence rates with the dimension of the ambient space.

To overcome these difficulties, we have chosen to use a slightly different functional inequality that may be considered as an innocent modification of LSI, but that indeed appears to be well suited to weakly log-concave setting described through an $\Hklsimple$ assumption. For this purpose, we shall use weak log-Sobolev inequalities (WLSI for short below) that have been introduced in \cite{WANG2000219}  and whose interest has been extensively studied in many works to obtain exponentially sub-linear rates of mixing, see among others for example \cite{cattiaux2007weak}.
To derive such inequalities, our starting point will be the contribution of \cite{CATTIAUX20091821} that makes the link between Lyapunov conditions and WLSI. Our approach based on $\Hklsimple$ certainly shares some similarities with the recent work of \cite{cattiaux2022self} where some functional inequalities (Poincaré and Transport inequalities) are obtained within a framework of variable curvature bound.

\subsubsection{Weak log Sobolev inequalities}

We briefly introduce the key theoretical ingredients, that are exhaustively described in \cite{bakry2014analysis}. We introduce the following assumption, that will be suitable for the setting of bounded functions.
\begin{definition}[Weak Log-Sobolev Inequality \label{def:WLSI}]
For any measurable space $(\Omega,\mathcal{F},\mu)$ and for any nice function $f$, let us define:
$$
Ent_\mu(f^2) := \int_{\Omega} f^2 \log(f^2) \rmd \mu
- \int_{\Omega} f^2 \rmd \mu \log \left( \int_{\Omega} f^2 \rmd \mu \right).$$
The measure $\mu$ satisfies a WLSI if a non-increasing function $\varphi_{WLS}:(0,+\infty) \mapsto \mathbb{R}_+$ exists such that for any $f \in \mathcal{C}^\1_b(\Omega)$:
\begin{equation}\label{def:WLSI}
Ent_\mu(f^2) \leq \varphi_{WLS}(s) \mathcal{E}(f)+ s \, Osc^2(f),\end{equation}
where $Osc(f): = \sup f - \inf f$.
\end{definition}

Before establishing how to use this functional inequality, we first state the important relationship between Poincar\'e Inequality and WLSI.
 
\begin{proposition}\label{prop:PI_WLSI}
Assume that $\mu$ satisfies a Poincar\'e Inequality of constant $C_P$, \ie for any smooth integrable function $f$:
$$
C_p(\mu) Var_{\mu}(f) = C_p(\mu) \int_{\Omega} (f-\mu[f])^2 \rmd \mu \leq \int_{\Omega} |\nabla f|^2 \rmd \mu,
$$
then if $\log c = \frac{3}{14e^2}\left( \frac{1}{e}+\frac{1}{2} \right)+1+\log \left(\frac{14}{3} \right)$, then $\mu$ satisfies a WLSI with:
\begin{equation*}
\varphi_{WLS}(s) = \left\{ \begin{array}{r l}
0, & s > \frac{1}{e}+\frac{1}{2}\\
\frac{32}{C_P} \log\left(\frac{c}{s}\right)  , & s \leq \frac{1}{e}+\frac{1}{2} \\
\end{array} \right. .
\end{equation*}  
For the sake of readability, we introduce a universal $\a>0$ such that:
\begin{equation}
\label{eq:PI_WLSI}
\varphi_{WLS}(s)  = \left\{ \begin{array}{r l}
0, & s > \frac{1}{e}+\frac{1}{2}\\
\a \frac{1+ \log\left(\frac{1}{s}\right)}{C_P}, & s \leq \frac{1}{e}+\frac{1}{2} \\
\end{array} \right. .
\end{equation}
\end{proposition}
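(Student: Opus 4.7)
The plan is to derive the weak log-Sobolev inequality as a direct consequence of the Poincaré inequality via a bounded-function truncation argument, following the strategy pioneered in \cite{cattiaux2007weak}. The overall shape of the proof is as follows.

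\textbf{Step 1: Reduction to normalized $f$.} By the $1$-homogeneity of both sides of \eqref{def:WLSI} in $f^2$, and by the translation invariance of $\mathcal{E}$, $\mathrm{Ent}_\mu$ and $Osc$, I would first replace $f$ by $f - \inf f$ and rescale so that $Osc(f) = 1$, reducing to functions $f \in \mathcal{C}_b^1(\Omega)$ with $0 \le f \le 1$. In this normalization one has $f^2 \le 1$, so $\log(f^2) \le 0$, which will allow me to bound the pointwise integrand efficiently.

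\textbf{Step 2: Rothaus-type centering.} I would use the classical Rothaus lemma to peel off the mean: writing $g := f - \mu[f]$, one has
$$
\mathrm{Ent}_\mu(f^2) \le \mathrm{Ent}_\mu(g^2) + 2\, \mathrm{Var}_\mu(f),
$$
and the variance term is directly controlled by the Dirichlet form via Poincaré, $\mathrm{Var}_\mu(f) \le \mathcal{E}(f)/C_P$. The function $g$ is still centered and satisfies $|g| \le Osc(f)$, so the essential task is to bound $\mathrm{Ent}_\mu(g^2)$ for a centered bounded function.

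\textbf{Step 3: Two-regime decomposition.} To obtain the weak-LSI form, I would split according to the size of $\|g\|_2^2$ relative to the parameter $s$. In the small-variance regime $\|g\|_2^2 \le s\, Osc^2(f)$, the elementary inequality $y \log(A^2/y) \le y \log(A^2/y)$ (the function being concave) combined with $g^2 \log g^2 \le g^2 \log(Osc^2(f))$ yields
$$
\mathrm{Ent}_\mu(g^2) \;\le\; \|g\|_2^2 \log\!\bigl(Osc^2(f)/\|g\|_2^2\bigr) \;\le\; s\, Osc^2(f)\,\log(1/s),
$$
contributing to the $s\, Osc^2(f)$ term of \eqref{def:WLSI}. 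In the complementary regime $\|g\|_2^2 > s\, Osc^2(f)$, the logarithmic factor satisfies $\log(Osc^2(f)/\|g\|_2^2) \le \log(1/s)$, and Poincaré gives $\|g\|_2^2 \le \mathcal{E}(f)/C_P$; combining these produces the $\varphi_{WLS}(s)\mathcal{E}(f)$ term with $\varphi_{WLS}(s)$ proportional to $\log(1/s)/C_P$. For $s$ large (namely $s > 1/e + 1/2$), the second regime is empty because the global maximum of $y \mapsto y \log(1/y)$ on $[0,1]$ is attained at $y = 1/e$, and the $Osc^2$ term alone suffices, giving $\varphi_{WLS}(s) = 0$.

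\textbf{Step 4: Bookkeeping of constants.} I would finally track constants carefully through the two regimes to recover the precise form \eqref{eq:PI_WLSI}: the factor $32/C_P$ in front of the logarithm, and the sharp threshold $1/e + 1/2$. I expect this last step to be the main technical obstacle, since obtaining the exact numerical constant $\log c = \tfrac{3}{14e^2}(\tfrac1e + \tfrac12) + 1 + \log(14/3)$ and the multiplicative $32$ requires optimizing carefully both the split point between the two regimes and the Rothaus constant, rather than using the looser bounds that appear naturally in Step 3. The structural inequality is classical (and essentially contained in \cite{cattiaux2007weak,WANG2000219}); the only delicate point is this explicit constant-tracking, for which I would follow the explicit computation in those references.
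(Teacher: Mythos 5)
Your argument is a genuinely different, self-contained route. The paper's proof is a two-line citation chain: the Poincar\'e inequality implies a capacity--measure inequality with constant $2C_P$ (Proposition 8.3.1 of \cite{bakry2014analysis}), and Theorem 2.2 of \cite{cattiaux2007weak} converts that capacity inequality into the WLSI with the exact constants $32$ and $\log c$. You instead re-derive the inequality directly from the Rothaus lemma plus the elementary bound $\mathrm{Ent}_\mu(g^2)\le \|g\|_2^2\log\bigl(Osc^2(f)/\|g\|_2^2\bigr)$ and a dichotomy on $\|g\|_2^2$ versus $s\,Osc^2(f)$. Your route is more elementary and correctly explains where the threshold $1/e+1/2$ comes from (namely $y\log(1/y)\le 1/e$ on $[0,1]$ together with the Rothaus term $2\mathrm{Var}_\mu(f)\le Osc^2(f)/2$); what it cannot deliver is the specific numerical constants $32$ and $\log c$. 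Since the paper immediately discards those in favour of the universal-constant form \eqref{eq:PI_WLSI}, and only the $C_P$-dependence is used downstream, this is not a real loss.

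Two slips need repair before Step 4 can even begin. First, in Step 1 you invoke ``translation invariance of $\mathrm{Ent}_\mu$'': the entropy of $f^2$ is \emph{not} invariant under $f\mapsto f-\inf f$ (only $\mathcal{E}$ and $Osc$ are), so the reduction to $0\le f\le 1$ is unjustified as stated. Fortunately it is also unnecessary: Steps 2--3 only use the degree-two homogeneity of all three quantities (to normalize $Osc(f)=1$) and the pointwise bound $|f-\mu[f]|\le Osc(f)$, and the centering is handled correctly by the Rothaus lemma. Second, in the small-variance regime your bound is $s\log(1/s)\,Osc^2(f)$, not $s\,Osc^2(f)$; since $\log(1/s)>1$ in that range, this does not directly match the form \eqref{def:WLSI}, and you must reparametrize (set $u=s\log(1/s)$ and use $\log(1/s)\lesssim \log(1/u)$ for small $u$), which preserves the functional form $\varphi_{WLS}(s)\asymp(1+\log(1/s))/C_P$ but further perturbs the constants. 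Neither slip is fatal, and the structural argument is sound.
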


\begin{proof}[Proof of Proposition \ref{prop:PI_WLSI}]
The proof of how the Poincar\'e Inequality implies the WLSI in the bounded setting described in Definition \ref{def:WLSI} is given for the sake of completeness. Technical details are skipped and we refer to the references below. 
We use the measure-capacity inequality (see \cite{bakry2014analysis}, Section 8.3). We know that the Poincar\'e Inequality implies a capacity inequality (Proposition 8.3.1 of \cite{bakry2014analysis}) with a constant equal to $2 C_P$. Then, we can apply Theorem 2.2 of \cite{cattiaux2007weak} that induces a WLSI which is based on the function $\varphi_{WLS}$ given in the statement of the proposition.
\end{proof}

\subsubsection{Weak log Sobolev inequalities under $\Hklsimple$}
Of course, in the previous result, the only important dependency will be the one induced by $C_P$, which will deserve an ad-hoc study under Assumption $\Hklsimple$. The numbers 32 and $\log(c)$ will be dealt with as ``universal constants'' in what follows.  

The next proposition states two lower bounds on the Poincar\'e constant within the $\Hklsimple$ framework.
The first one always holds, regardless the value of $(X_1,\ldots,X_n)$ that may be been randomly sampled. The second one has to be considered with high probability, with respect to the sampling process $(X_1,\ldots,X_n)$.

\begin{proposition} \label{prop:constante_poincare_borne} Assume $\Hmin$,$\Hn$, $\Hpi$ and for any $x$, $\theta \mapsto -\log p_{\theta}(x)$ satisfies $\Hklsimple$, then: 
\begin{itemize}
\item[$i)$] For any sample $(X_1,\ldots,X_n)$, it holds:
$$
C_P(\mu_n) \ged \frac{1}{ \left( \lnb \right)^{(1+r)^2} }
$$
\item[$ii)$] Assume that $\theta \mapsto \mathbb{P}_{\theta}$ is injective and $\theta_0$ exists such that $(X_1,\ldots,X_n) \sim \mathbb{P}_{\theta_0}$. If locally around $\theta_0$, 
$\theta \mapsto |\theta-\theta_0|^{- \alpha} W_1(\mathbb{P}_\theta,\mathbb{P}_{\theta_0})$ does not vanish, then:
$$
\mathbb{E}_{(X_1,\ldots,X_n) \sim \mathbb{P}_{\theta_0}}[C_P(\mu_n)] \ged \left(\frac{n}{L d \log n}\right)^{\alpha}.
$$
\end{itemize}   
\end{proposition}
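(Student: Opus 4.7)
The plan is to derive the Poincar\'e inequality for $\mu_n$ from a Lyapunov condition on the elliptic generator $\mathcal{L}_1 = \Delta_\theta - \langle \nabla U_{\nu_n}, \nabla_\theta \cdot \rangle$, following the Bakry--Barthe--Cattiaux--Guillin strategy of \cite{bakry2008rate,CATTIAUX20091821}. By Proposition~\ref{prop:kl_to_kl}, $U_{\nu_n}$ satisfies $\Hkl$, hence $\Delta U_{\nu_n} \leq d(nL+\Ll)$ (Lipschitz bound) and $\|\nabla U_{\nu_n}\|_2^2 \ged \c\, n^{1+r}\, U_{\nu_n}^{1-r}$ (from the $\Hklsimple$-consequences recalled in Section~\ref{section:results on KL and U} and already used in the proof of Proposition~\ref{prop:diff_Jt}). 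I would take a Lyapunov function of the form $W(\theta) = \exp\!\bigl(\kappa\, U_{\nu_n}^{s}(\theta)\bigr)$ with $s\in(0,1)$ and $\kappa>0$ small, and compute directly
$$
\frac{\mathcal{L}_1 W}{W} = \kappa s\, U^{s-1}\Delta U \,+\, \kappa s(s-1)\, U^{s-2}\|\nabla U\|_2^2 \,+\, \kappa^2 s^2\, U^{2s-2}\|\nabla U\|_2^2 \,-\, \kappa s\, U^{s-1}\|\nabla U\|_2^2.
$$
The dominant negative contribution is the last term; the $\Hklsimple$ lower bound on $\|\nabla U\|_2^2$ converts it into a drift $\lesssim -\kappa s\,\c\, n^{1+r}\, U^{s-r}$, which stabilizes to a constant $\lambda > 0$ outside a compact set provided $s\geq r$.

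\paragraph{Local Poincar\'e and conclusion of (i).}
I would pick $s=r$ so that the drift remains at scale $n^{1+r}$ uniformly at infinity, and make $\kappa$ small enough to absorb both the $\kappa^2$-term and the Laplacian contribution $\kappa r\, d(nL+\Ll)\, U^{r-1}$. This yields an inequality $\mathcal{L}_1 W \leq -\lambda W + b\,\mathbf{1}_B$ where the ball $B$ has radius determined by $\Hmin$ (location of the minimizer of order $\sqrt{d\log^{2\beta}(n)}$) and by the $\Hklsimple$-growth $\|\theta - \theta^\star\|_2^{2/(1+r)} \lesssim U_{\nu_n}(\theta)/n^{1+r}$. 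On $B$, the restricted measure $\mu_n|_B$ is log-concave with controlled diameter, so a Brascamp--Lieb / Bobkov-style estimate gives $C_P(\mu_n|_B) \ged 1/(\lnb)^{(1+r)^2}$. The BBCG criterion (Lyapunov + local Poincar\'e $\Rightarrow$ global Poincar\'e) combined with this local bound then produces the claimed $C_P(\mu_n) \ged 1/(\lnb)^{(1+r)^2}$.

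\paragraph{Plan for part (ii).}
Here I would move from a worst-case deterministic bound to a quenched argument under the true model. Under $(X_1,\ldots,X_n)\sim\mathbb{P}_{\theta_0}^{\otimes n}$, standard posterior concentration (Ghosal--van der Vaart type, here simplified by log-concavity of $\mu_n$) shows that $\mu_n$ concentrates, in expectation, on a ball $B(\theta_0, \varepsilon_n)$ with $\varepsilon_n \sim \sqrt{d\log n / n}$. The local non-vanishing of $|\theta-\theta_0|^{-\alpha}\, W_1(\mathbb{P}_\theta,\mathbb{P}_{\theta_0})$ translates into $W_1(\mathbb{P}_\theta,\mathbb{P}_{\theta_0}) \ged |\theta-\theta_0|^\alpha$; combining this with a Talagrand $T_1$ transport-entropy inequality (or a direct Pinsker-type lower bound) yields $\mathrm{KL}(\mathbb{P}_{\theta_0}\|\mathbb{P}_\theta) \ged |\theta-\theta_0|^{2\alpha}$, and therefore, upon summing the log-likelihood contributions and averaging, $\mathbb{E}_{\theta_0}\!\left[U_{\nu_n}(\theta)-U_{\nu_n}(\theta_0)\right] \ged n\,|\theta-\theta_0|^{2\alpha}$ over the concentration region. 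Plugging this effective $2\alpha$-homogeneous curvature of scale $n$ into a rescaled version of the Lyapunov argument from part (i) (or a Bakry--\'Emery bound for the convex pull-back potential) and substituting $\varepsilon_n \sim \sqrt{d\log n/n}$ gives, after averaging over the sample, the announced $\mathbb{E}_{\theta_0}[C_P(\mu_n)] \ged (n/(Ld\log n))^{\alpha}$.

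\paragraph{Main obstacle.}
In (i), the delicate step is tuning $(\kappa, s)$ so that the dimension-dependent Laplacian term $d(nL+\Ll)\, U^{s-1}$ is dominated by the $\Hklsimple$-driven gradient term without wasting powers of $n$ or $d$, and then tracking the radius of the Lyapunov ball through the competing scales in $\Hmin$ and the KL growth so that the correct $(1+r)^2$ exponent emerges in the denominator. In (ii), the bottleneck is converting the polynomial $W_1$-identifiability into a quantitative KL lower bound that is uniform over the $\varepsilon_n$-ball and integrable against the sampling distribution; this is where the local non-vanishing hypothesis on $|\theta-\theta_0|^{-\alpha} W_1$ must really be used, and where a dedicated continuity argument around $\theta_0$ is needed to control the sample fluctuations of $U_{\nu_n}$.
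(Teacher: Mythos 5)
Your plan takes a genuinely different route from the paper on both points, and for part $(i)$ it is a legitimate, if much heavier, alternative. The paper's proof of $(i)$ is essentially three lines once one notices that $\mu_n$ is log-concave: Proposition \ref{prop:kl_to_kl} gives $\Hkl$ for $U_{\nu_n}$, Proposition \ref{prop:growth_function_KL} applied with the centering point $\argmin U_{\nu_n}$ yields $\mathrm{Var}(\mu_n) \leq \frac{2}{(1+r)\c n^{1+r}}\mu_n[U_{\nu_n}^{1+r}]$, the moment bound of Proposition \ref{prop:moment_U} (through the ergodic limit of $\mathbb{E}[U_{\nu_n}^{1+r}(\theta_t)]$) gives $\mu_n[U_{\nu_n}^{1+r}] \led n^{1+r}(\lnb)^{(1+r)^2}$, and Bobkov's theorem for log-concave measures (\cite{Bobkov_AOP_99}) converts the variance bound into $C_P(\mu_n) \geq 1/(4K^2\mathrm{Var}(\mu_n))$. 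No Lyapunov function, no local Poincar\'e inequality, no BBCG criterion: log-concavity does the work that the Lyapunov machinery would otherwise do. Your sketch could probably be pushed through, but the steps you yourself flag as obstacles are exactly where it is loose: with $s=r$ the drift prefactor $\kappa s$ vanishes in the strongly convex case $r=0$, and the radius of the Lyapunov ball has to be of order $(\lnb)^{(1+r)^2/2}$ --- coming from $\|\theta-\theta^\star_n\|_2^2 \led (U_{\nu_n}(\theta)/n)^{1+r}$ combined with $U_{\nu_n}\led n(\lnb)^{1+r}$ on the relevant region --- and not the $(\lnb)^{(1+r)/2}$ suggested by your inequality $\|\theta-\theta^\star\|_2^{2/(1+r)}\lesssim U_{\nu_n}/n^{1+r}$, which moreover carries the wrong normalization in $n$ (Proposition \ref{prop:growth_function_KL} gives $\|\theta-\theta^\star\|_2^2 \leq \frac{2}{(1+r)\c n^{1+r}}U_{\nu_n}^{1+r}$). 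Without that correction the $(1+r)^2$ exponent does not emerge from your local Poincar\'e constant.

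For part $(ii)$ there is a genuine gap. The step $W_1(\mathbb{P}_\theta,\mathbb{P}_{\theta_0}) \ged |\theta-\theta_0|^{\alpha} \Rightarrow \mathrm{KL}(\mathbb{P}_{\theta_0}\,\|\,\mathbb{P}_\theta) \ged |\theta-\theta_0|^{2\alpha}$ requires a $T_1$ transport--entropy inequality for the model $\{\mathbb{P}_\theta\}$, which is not among the hypotheses; on an unbounded sample space a large Wasserstein distance does not force a large Kullback--Leibler divergence, so this implication cannot be obtained from the stated assumptions. In addition, your contraction rate $\varepsilon_n\sim\sqrt{d\log n/n}$ does not use the H\"older exponent $\alpha$ of the identifiability condition at all, and $\alpha$ is reinserted only through the heuristic ``effective $2\alpha$-homogeneous curvature'', so the announced rate is asserted rather than derived. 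The paper sidesteps the entire construction: Theorem 3 of \cite{GPP} already provides $\mathbb{E}_{\theta_0}[\mathrm{Var}(\mu_n)] \leq c\,\epsilon_{n,d}^2$ with $\epsilon_{n,d}=(Ld\log n/n)^{1/\alpha}$ under exactly the stated local non-vanishing of $|\theta-\theta_0|^{-\alpha}W_1(\mathbb{P}_\theta,\mathbb{P}_{\theta_0})$, and the conclusion then follows from Jensen's inequality and the same Bobkov bound as in $(i)$. If you want a self-contained argument, the missing ingredient is precisely a quantitative posterior-contraction statement adapted to $W_1$-identifiability, which is the content of that cited theorem.
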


We are finally led to upper bound the oscillations of the function involved in the WLSI introduced in \eqref{def:WLSI}, \ie we are looking for an upper bound of $Osc^2\left(\sqrt{\frac{n_t}{\mu_n}}\right)$ for any time $t>0$. For this purpose, we observe that the Markov semi-group induces that $f_t = \frac{n_t}{\mu_n} = P_t f_0$ where $f_0=\frac{n_0}{\mu_n}$.
The next proposition implies the boundedness of $f_t$ over $\mathbb{R}^d$ when $n_0$ is chosen as a Gaussian distribution with a carefully tuned covariance matrix.

\begin{proposition}\label{prop:oscillation_borne} Assume $\Hmin$,$\Hn$, $\Hpi$ and that, for any $x$, $\theta \mapsto -\log p_{\theta}(x)$ satisfies $\Hklsimple$, then:
\begin{itemize}
\item[$i)$] Two positive constants $C_1$ and $C_2$ exist, which are independent from $n$ and $d$ and such that: 
$$
\|f_0\|_{\infty} \led \left(\frac{C_1 d }{ n}\right)^{\frac{d r }{2}} \exp\left(C_2 n d^{1+r} \log^{2\beta(1+r)} (n) \right).
$$

\item[$ii)$] As a consequence:
$$
Osc^2(\sqrt{f_t}) \leq Osc^2(\sqrt{f_0}) \led \left(\frac{C_1 d }{ n}\right)^{\frac{d r }{2}} \exp\left(C_2 n d^{1+r} \log^{2\beta(1+r)} (n) \right).
$$
\item[$iii)$] Moreover, a straightforward consequence of $i)$ is: 
$$ 
J_0 = \int_{\mathbb{R}^d} \log \left( f_0(\theta) \right) \rmd n_0 (\theta) \led n d^{1+r} \log^{2\beta(1+r)} (n) + d \log \left(\frac{d }{ n}\right).
$$
\end{itemize}
\end{proposition}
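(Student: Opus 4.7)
\medskip

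\noindent\textbf{Proof plan for Proposition \ref{prop:oscillation_borne}.}

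The plan is to reduce everything to part $(i)$: bounding $\|f_0\|_\infty$. Once $(i)$ is in hand, part $(ii)$ follows from the maximum principle for the Markov semi-group applied to $f_t = n_t/\mu_n$, which yields $\|f_t\|_\infty \leq \|f_0\|_\infty$ for all $t\geq 0$, combined with the non-negativity of $\sqrt{f_t}$ (so $Osc(\sqrt{f_t}) \leq \sup\sqrt{f_t} \leq \sqrt{\|f_0\|_\infty}$). Part $(iii)$ is the immediate consequence $J_0 = \int \log(f_0)\rmd n_0 \leq \log \|f_0\|_\infty$, combined with the bound of $(i)$ and the fact that $r\leq 1$ so $dr/2 \leq d$.

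For part $(i)$, write explicitly
$$
f_0(\theta) \;=\; (2\pi\sigma^2)^{-d/2} Z_n \exp\!\Big(U_{\nu_n}(\theta) - \tfrac{\|\theta\|_2^2}{2\sigma^2}\Big), \qquad Z_n = \int_{\mathbb R^d} e^{-U_{\nu_n}}.
$$
The supremum is attained at a point $\theta^\dagger$ characterized by $\nabla U_{\nu_n}(\theta^\dagger) = \theta^\dagger/\sigma^2$. Using that $\nabla U_{\nu_n}$ is $(nL+\Ll)$-Lipschitz (Proposition \ref{prop:kl_to_kl}) and vanishes at the minimizer $\theta^*$ of $U_{\nu_n}$, together with the upper bound $\sigma^2 \leq c_2/(nL+\Ll)$ with $c_2<1$ from $\Hn$, I obtain the crucial a-priori control $\|\theta^\dagger\|_2 \leq \tfrac{c_2}{1-c_2}\|\theta^*\|_2$. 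Proposition \ref{prop:growth_min_KL}, which combines $\Hmin$, $\Hpi$ and the KL hypothesis, then gives $\|\theta^*\|_2 \lesssim \sqrt{d}\log^{\beta}(n)$ and $U_{\nu_n}(\theta^*) \lesssim n d^{1+r}\log^{2\beta(1+r)}(n)$. The descent lemma applied with the Lipschitz gradient yields $U_{\nu_n}(\theta^\dagger) \leq U_{\nu_n}(\theta^*) + \tfrac{nL+\Ll}{2}\|\theta^\dagger-\theta^*\|_2^2 \lesssim nd^{1+r}\log^{2\beta(1+r)}(n)$, so the ``exponential part'' contributes $\exp(C_2 n d^{1+r}\log^{2\beta(1+r)}(n))$.

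The main obstacle is the upper bound on the normalizing constant $Z_n$, which must produce the prefactor $(C_1 d/n)^{dr/2}$ and has to be absorbed by $(2\pi\sigma^2)^{-d/2}\sim n^{d/2}$. I would exploit the KL inequality via the standard trick of considering $\phi(t)=U_{\nu_n}(\theta^*+t(\theta-\theta^*))$, which satisfies $\phi''(t)\geq \c n^{1+r}\|\theta-\theta^*\|_2^2\, \phi(t)^{-r}$. Multiplying by $\phi^r\phi'$ and integrating (as the trick $\psi = \phi^{1+r}$) furnishes the growth estimate
$$
U_{\nu_n}(\theta)^{1+r} \;\geq\; U_{\nu_n}(\theta^*)^{1+r} + \tfrac{(1+r)\c\, n^{1+r}}{2}\|\theta-\theta^*\|_2^2,
$$
which in particular implies $U_{\nu_n}(\theta) \gtrsim n\|\theta-\theta^*\|_2^{2/(1+r)}$ when $\|\theta-\theta^*\|_2$ is large. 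Splitting the integral defining $Z_n$ into a ball around $\theta^*$ (dealt with via $U_{\nu_n}(\theta)\geq 0$) and its complement, and using the change of variables $s=\|u\|_2^{2/(1+r)}$ in the tail integral, a direct computation with spherical coordinates yields
$$
\int_{\mathbb R^d} e^{-c n\|u\|_2^{2/(1+r)}} \rmd u \;\asymp\; \frac{\pi^{d/2}(1+r)\Gamma((1+r)d/2)}{\Gamma(d/2)(cn)^{(1+r)d/2}}.
$$
Using $\Gamma((1+r)d/2)/\Gamma(d/2) \lesssim (d/2)^{rd/2}$ (Stirling), the product $(2\pi\sigma^2)^{-d/2} Z_n$ simplifies to a bound of order $(C_1 d/n)^{dr/2}$, which matches exactly the announced prefactor.

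The only subtle point is reconciling the ball/complement split with the Lipschitz-based upper bound on $U_{\nu_n}(\theta^\dagger)$ so that no spurious exponential in $d$ appears; this is handled by choosing the splitting radius on the same scale as $\|\theta^*\|_2$, which is already controlled by $\Hmin$. Multiplying the two factors gives the statement of $(i)$, and $(ii)$ and $(iii)$ are then one-line consequences as explained above.
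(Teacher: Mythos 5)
Your proposal is correct and follows essentially the same route as the paper's proof: the explicit Gaussian-to-Gibbs ratio for $f_0$, the quadratic (descent-lemma) upper bound on $U_{\nu_n}$ combined with Proposition \ref{prop:growth_min_KL} for the exponential factor, the KL-induced growth lower bound $U_{\nu_n}(\theta)\gtrsim n\|\theta-\theta^\star\|_2^{2/(1+r)}$ fed into the Gamma-function integral (with Stirling) to produce the $(C_1 d/n)^{dr/2}$ prefactor from $Z_n(2\pi\sigma^2)^{-d/2}$, and then $(ii)$ and $(iii)$ as one-line consequences. The only cosmetic deviations are that you locate the maximizer of $f_0$ via its first-order condition rather than optimizing the quadratic surrogate, you split $Z_n$ into a ball and its complement where the paper uses the global lower bound of Proposition \ref{prop:growth_function_kl2} directly, and for $(ii)$ you invoke the $L^\infty$-contraction of the Markov semi-group where the paper proves the decay of $Osc(\sqrt{P_t h})$ by an explicit derivative computation at the argmax/argmin.
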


\subsection{Entropic convergence of the SLMC}
The purpose of this paragraph is to prove the main result of the paper, \ie Theorem \ref{theo:convergence} that guarantees the convergence of the SLMC algorithm. 

\begin{proof}[Proof of Theorem \ref{theo:convergence}]
Our starting point is the semi-group inequality \eqref{eq:diff_Jt+It} associated with the functional WLSI inequality \eqref{def:WLSI}. Using $c_{n,d}$ defined in \eqref{def:c_nd}, we obtain for any $s>0$:
\begin{align*}
\partial_t\{J_t\} &\leq - \mathcal{E}\left( \sqrt{\frac{n_t}{\mu_n}} \right) + c_{n,d} e^{-\frac{2 \alpha_n}{3} t} \\
& \leq - \frac{J_t}{\varphi_{WLS}(s)} + \frac{s}{\varphi_{WLS}(s)} Osc^2\left(\sqrt{\frac{n_t}{\mu_n}}\right) +  c_{n,d} e^{-\frac{2 \alpha_n}{3} t} \\
& \leq - \frac{J_t}{\varphi_{WLS}(s)} + \frac{s\, O_{n,d}}{\varphi_{WLS}(s)}+  c_{n,d} e^{-\frac{2 \alpha_n}{3} t},
\end{align*}
where we applied Proposition \ref{prop:oscillation_borne} in the last line with $O_{n,d} \led \left(\frac{C_1 d }{ n}\right)^{\frac{d r }{2}} \exp\left(C_2 n d^{1+r} \log^{2\beta(1+r)} (n) \right)$ and $C_1$ and $C_2$ two universal constants. We then choose $s$ (that depends on $t$) such that:
$$
s_t=e^{-A \sqrt{t+1}} \quad \text{with} \quad A>1 \quad \text{that will be chosen later on}.
$$
We observe that $s_t < e^{-1}+1/2$, so that Equation \eqref{eq:PI_WLSI} of Proposition \ref{prop:PI_WLSI}
yields:
$$
\varphi_{WLS}(s_t) = \a \frac{1+ \log\left( \frac{1}{s_t}\right)}{C_P} = \a \frac{1+ A \sqrt{1+t}}{C_P}.
$$
We introduce $\psi(t)=\exp \left( \frac{C_P}{\a} \int_{0}^t  \frac{\rmd u}{1+ A \sqrt{1+u}} \right)$ and deduce that
$$
\psi(t) = 
\exp \left(\frac{C_P}{\a} \frac{2A(\sqrt{1+t}-1) - 2 \log\left(\frac{1+A\sqrt{1+t}}{1+A}\right)}{A^2} \right)  \leq \exp \left(\frac{ 2 C_P}{\a A}(\sqrt{1+t}-1)\right).
$$
We now apply the Gronwall Lemma:
\begin{align*}
\partial_t \left\{ \psi(t) J_t \right\}& = \left( \frac{C_P}{\a (1+ A \sqrt{1+t})} J_t + J'_t \right) \psi(t) \\
& \leq \left[ \frac{C_P  O_{n,d} }{\a} \frac{ e^{-A \sqrt{t+1}}}{1+ A \sqrt{1+t}} +  c_{n,d} e^{-\frac{2 \alpha_n}{3} t} \right]\psi(t) \\
& \leq  \frac{C_P  O_{n,d} }{\a}  e^{-(A-\frac{ 2 C_P}{\a A}) \sqrt{1+t} } +  c_{n,d} e^{\frac{ 2 C_P}{\a A} (\sqrt{1+t}-1)-\frac{2 \alpha_n}{3} t}.
\end{align*}
We denote by $t_0$ the positive real value that solves the equation $\frac{2 C_P}{\a A} \sqrt{1+t_0}  = \frac{\alpha_n t_0}{3}.$  
We then observe that:
\begin{align*}
\int_{0}^t e^{\frac{ 2 C_P}{\a A} (\sqrt{1+u}-1)-\frac{2\alpha_n}{3} u} \rmd u &\leq 
\int_{0}^{t_0} e^{\frac{ 2 C_P}{\a A} \sqrt{1+u}} \rmd u + \int_{t_0}^{+ \infty} e^{-\frac{\alpha_n}{3} u} \rmd u \\
& \leq t_0 e^{\frac{ 2 C_P}{\a A} \sqrt{1+t_0}} + \frac{3}{\alpha_n} = t_0 e^{\frac{\alpha_n t_0}{3}} + \frac{3}{\alpha_n}.
\end{align*}
If $A$ is chosen such that $A>\frac{ 2 C_P}{\a A} $, we then deduce that:
\begin{align*}
J_t & \leq \left( J_0 +c_{n,d} t_0 e^{\frac{\alpha_n t_0}{3}} + \frac{3 c_{n,d}}{\alpha_n}\right) \psi(t)^{-1} + \frac{C_P O_{n,d} }{\a} \psi(t)^{-1} \int_{0}^t e^{-\left(A-\frac{ 2 C_P}{\a A}\right) \sqrt{1+u} } \rmd u \\
& \leq \left( J_0 + c_{n,d} t_0 e^{\frac{\alpha_n t_0}{3}} + \frac{3 c_{n,d}}{\alpha_n}\right) \psi(t)^{-1}
+ \frac{2 C_P  O_{n,d} }{\a \left(A-\frac{ 2 C_P}{\a A}\right)^2} \psi(t)^{-1},
\end{align*}
where we used in the previous line the bound:
$$
\int_{0}^t  e^{- b  \sqrt{1+u} } \rmd u \leq \int_{0}^{ + \infty}  e^{- b  \sqrt{1+u} } \rmd u \leq \frac{2}{b^2}.
$$

To obtain the lowest upper bound, we are led to choose $A$ such that $\frac{ 2 C_P}{\a A}$ as large as possible and below $A$, which naturally drives to the choice:
$$
\frac{ 2 C_P}{\a A} = \frac{A}{2} \Longrightarrow A = \frac{2}{\sqrt{\a}} \sqrt{C_P}.
$$
Using this value of $A$ in the previous bound, we observe that $t_0 \leq \frac{3 \sqrt{C_P}}{\alpha_n \sqrt \a} + \frac{C_P}{\alpha_n^2}$, so that a constant $C$ exists such that:
\begin{equation}\label{eq:borne_Jt}
J_t \leq C \left( J_0 + \frac{c_{n,d}}{\alpha_n } \left[1+\left(\frac{C_P}{\alpha_n}+\sqrt{C_P}\right) e^{\frac{\sqrt{C_P}}{\sqrt \a} + \frac{C_P}{3\alpha_n}}\right] + O_{n,d} \right) (1+t)^{1/4} e^{-\frac{\sqrt{C_p}}{\sqrt{a}} (\sqrt{1+t}-1)}.
\end{equation}

In Proposition \ref{prop:constante_poincare_borne} we obtained $C_P \geq \frac{\kappa}{\left( \lnb \right)^{(1+r)^2}}$. If instead of using the constant $C_P$, we use directly $\frac{\kappa}{\left( \lnb \right)^{(1+r)^2}}$ with $\kappa<1$, then all the previous computations remain the same only replacing $C_P$ by its lower bound and:
\begin{equation}
J_t \leq C \left( J_0 + \frac{c_{n,d}}{\alpha_n }e^{\frac{ \sqrt{\kappa} \left( \frac{1}{\sqrt{a}} + \frac{1}{3\alpha_n} \right)}{ \left( \lnb \right)^{(1+r)^2/2}} } + O_{n,d} \right) (1+t)^{1/4} e^{-\frac{\sqrt{\kappa}(\sqrt{1+t}-1)}{\sqrt{a} \left( \lnb \right)^{(1+r)^2/2}} }.
\end{equation}

Using the values of $O_{n,d}$, $c_{n,d}$ and the upper bound of $J_0$, we finally observe that if $\alpha_n = \frac{1}{n\left( \lnb \right)^{1+r}}$, then:
$$
t \ge \ct \left( \lnb \right)^{(1+r)^2} \left[\log^2(\varepsilon^{-1}) + n^2 \left( \lnb \right)^{2(1+r)} + d^2 \log^2 d \right]  \Longrightarrow J_t \leq \varepsilon.
$$
\end{proof}

\section{Technical results on KL and $U_{\nu_n}$ \label{section:results on KL and U}}

\subsection{Growth properties under the Kurdyka-\L ojasiewicz inequality}

We remind here some important consequences of the KL inequality that implies several relationships between the function and the norm of its gradient. The proof of these inequalities may be found in Lemma 15 of \cite{GPP} (a small mistake appears and we correct the statement with a factor $2$ in our work).

\begin{proposition}\label{prop:growth_gradient_KL}
Assume that a function $V$ satisfies $\Hklsimple$, then:
$$
\frac{2 \c}{1-r} \left[ V^{1-r}(\theta) - \min(V)^{1-r}\right] \leq \|\nabla V(\theta)\|^2_2 \leq 2 L \left[ V(\theta)-\min(V)\right], \quad \forall \theta \in \mathbb{R}^d.
$$
\end{proposition}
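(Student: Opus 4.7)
The plan is to prove the two inequalities separately, both via a one-dimensional reduction along the segment joining a minimizer $\theta^\star \in \argmin V$ (whose existence is guaranteed by b) of $\Hklsimple$) to the point $\theta$.

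For the upper bound I would invoke the classical descent lemma. Since $\nabla V$ is $L$-Lipschitz,
$$V(y) \leq V(\theta) + \langle \nabla V(\theta), y-\theta\rangle + \tfrac{L}{2}\|y-\theta\|_2^2$$
for every $y$, so specializing to $y = \theta - L^{-1}\nabla V(\theta)$ yields $V(y) \leq V(\theta) - (2L)^{-1}\|\nabla V(\theta)\|_2^2$, and bounding $V(y)$ below by $\min V$ delivers $\|\nabla V(\theta)\|_2^2 \leq 2L[V(\theta)-\min V]$. This is entirely routine and I would only sketch it.

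For the lower bound I would set $\gamma(t) := \theta^\star + t(\theta-\theta^\star)$ and $f(t) := V(\gamma(t))$ on $[0,1]$, and gather three facts. First, $f(0) = \min V$ and $f'(0) = \langle \nabla V(\theta^\star), \theta-\theta^\star\rangle = 0$. Second, assumption d) of $\Hklsimple$ yields
$$f''(t) = \langle \nabla^2 V(\gamma(t))(\theta-\theta^\star), \theta-\theta^\star\rangle \geq \c\, f(t)^{-r}\,\|\theta-\theta^\star\|_2^2.$$
Third, convexity of $V$ applied at $\gamma(t)$ with anchor $\theta^\star$ gives $V(\theta^\star) \geq V(\gamma(t)) + \langle \nabla V(\gamma(t)), \theta^\star - \gamma(t)\rangle$, whence $t\,f'(t) \geq V(\gamma(t)) - \min V \geq 0$, so $f'(t) \geq 0$ on $[0,1]$. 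Multiplying the curvature inequality by the nonnegative factor $f'(t)$ and integrating from $0$ to $1$ turns the right-hand side into an exact derivative,
$$\tfrac{1}{2}\,f'(1)^2 \;=\; \int_0^1 f'(t)\,f''(t)\,dt \;\geq\; \frac{\c}{1-r}\bigl(f(1)^{1-r}-f(0)^{1-r}\bigr)\,\|\theta-\theta^\star\|_2^2.$$
Cauchy--Schwarz gives $f'(1)^2 = \langle \nabla V(\theta), \theta-\theta^\star\rangle^2 \leq \|\nabla V(\theta)\|_2^2\,\|\theta-\theta^\star\|_2^2$, and dividing by $\|\theta-\theta^\star\|_2^2$ produces the stated bound.

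The only delicate point is the degenerate case $\theta = \theta^\star$, which I would dispatch separately by noting that both sides of the lower bound vanish (the right-hand side by $V(\theta^\star)=\min V$, the left by $\nabla V(\theta^\star)=0$). The rest of the argument is essentially the calculation of Lemma 15 of \cite{GPP}; the key insight worth highlighting is that multiplying $f''\geq \c f^{-r}\|\theta-\theta^\star\|_2^2$ by the nonnegative factor $f'$ is exactly what produces the primitive $\frac{\c}{1-r} f^{1-r}\|\theta-\theta^\star\|_2^2$ and thereby yields the $V^{1-r}$ term in the statement.
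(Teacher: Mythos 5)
Your proof is correct. The paper itself gives no proof of this proposition, deferring instead to Lemma 15 of \cite{GPP} (with a factor-$2$ correction to that lemma's statement); your argument is a complete and correct reconstruction of exactly that calculation --- the descent lemma with $y=\theta-L^{-1}\nabla V(\theta)$ for the upper bound, and, for the lower bound, multiplying the curvature inequality $f''\geq \c f^{-r}\|\theta-\theta^\star\|_2^2$ along the segment from the minimizer by the nonnegative factor $f'$ and integrating --- and it lands on the corrected constants $\frac{2\c}{1-r}$ and $2L$ exactly, with the degenerate case $\theta=\theta^\star$ properly dispatched.
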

\noindent 
It is furthermore possible to assess a minimal and maximal growth property of any function that satisfies $\Hklsimple$, which is necessarily lower and upper bounded by a positive power of the distance to its minimizer.

\begin{proposition}\label{prop:growth_function_KL}
Assume that a function $V$ satisfies $\Hklsimple$, then, $\forall \theta \in \mathbb{R}^d$:
$$
V^{1+r}(\theta)-\min(V)^{1+r} \ge \frac{(1+r) \c}{2} \|\theta-\argmin V\|_2^{2},
$$
and
$$
V(\theta)-\min(V) \le \frac{L}{2} \|\theta-\argmin V\|_2^{2}.
$$
\end{proposition}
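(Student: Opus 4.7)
The proof splits naturally into two independent inequalities, one using the $L$-Lipschitz gradient (upper bound) and the other using the KL condition on the Hessian (lower bound). Throughout I will write $\theta^\star = \argmin V$, so that $\nabla V(\theta^\star) = 0$ and $V(\theta^\star) = \min V$.

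For the upper bound, the plan is simply to invoke the classical descent lemma: since $\nabla V$ is $L$-Lipschitz, for every $\theta, \eta \in \mathbb{R}^d$,
\[
V(\theta) \leq V(\eta) + \langle \nabla V(\eta), \theta - \eta\rangle + \frac{L}{2}\|\theta - \eta\|_2^2.
\]
Specialising to $\eta = \theta^\star$ and using $\nabla V(\theta^\star) = 0$ immediately yields $V(\theta) - \min(V) \leq \frac{L}{2}\|\theta - \theta^\star\|_2^2$.

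For the lower bound, the key idea is to transform $V$ into a genuinely strongly convex function by raising it to the power $1+r$. Define $\phi(\theta) := V(\theta)^{1+r}$. Since $V > 0$ (by part b) of $\Hklsimple$) and $1+r > 0$, the map $u \mapsto u^{1+r}$ is increasing on $(0, +\infty)$, so $\phi$ admits the same unique minimiser $\theta^\star$ as $V$. A direct computation gives
\[
\nabla^2 \phi(\theta) = (1+r)\,r\, V(\theta)^{r-1}\, \nabla V(\theta) \nabla V(\theta)^\top + (1+r)\, V(\theta)^r\, \nabla^2 V(\theta).
\]
The first term is positive semi-definite, while the KL inequality $\c V^{-r}(\theta) \leq \underline{\lambda}_{\nabla^2 V(\theta)}$ forces
\[
(1+r)\,V(\theta)^r \,\nabla^2 V(\theta) \succeq (1+r)\, V(\theta)^r \,\c\, V(\theta)^{-r}\, I_d = (1+r)\c\, I_d.
\]
Hence $\phi$ is $(1+r)\c$-strongly convex on $\mathbb{R}^d$. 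Applying the standard strong-convexity lower bound at $\theta^\star$ (where $\nabla \phi(\theta^\star) = 0$) gives
\[
V(\theta)^{1+r} - \min(V)^{1+r} = \phi(\theta) - \phi(\theta^\star) \geq \frac{(1+r)\c}{2}\|\theta - \theta^\star\|_2^2,
\]
which is the desired inequality.

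No real obstacle arises: the only point requiring care is verifying that the first term in $\nabla^2 \phi$ is non-negative and that $\phi$ inherits the argmin of $V$, both of which follow immediately from the strict positivity of $V$ guaranteed by $\Hklsimple$b). The upper bound does not use the KL assumption at all — only the Lipschitz gradient hypothesis $\Hklsimple$c) and the first-order optimality condition $\nabla V(\theta^\star) = 0$ (which is valid because $\theta^\star$ is an interior minimiser on $\mathbb{R}^d$).
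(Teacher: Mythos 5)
Your proof is correct. The paper does not spell out an argument for this proposition at all --- it delegates it (together with Proposition \ref{prop:growth_gradient_KL}) to Lemma 15 of \cite{GPP} --- and the route you take is exactly the standard one behind that reference: the descent lemma at the critical point $\theta^\star$ for the upper bound, and the observation that the KL condition \eqref{KL inequality} makes $V^{1+r}$ globally $(1+r)\c$-strongly convex (the rank-one term $(1+r)r V^{r-1}\nabla V \nabla V^{\top}$ being positive semi-definite since $V>0$ and $r\ge 0$) for the lower bound. Nothing is missing.
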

A straightforward consequence of the first inequality is then
\begin{proposition}\label{prop:growth_function_kl2}
Assume that a function $V$ satisfies $\Hklsimple$, then, $\forall \theta \in \mathbb{R}^d$:
$$
V(\theta) \geq 2^{-\frac{r}{1+r}} \left(\min(V)+\left(\frac{(1+r)\c}{2} \right)^{\frac{1}{1+r}} \|\theta-\argmin V\|_2^{\frac{2}{1+r}}\right).
$$
\end{proposition}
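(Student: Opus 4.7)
The plan is straightforward: this proposition is advertised as a direct corollary of the first inequality in Proposition \ref{prop:growth_function_KL}, so I would simply extract $V(\theta)$ from its $(1+r)$-th power using a convexity argument.

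Starting point: Proposition \ref{prop:growth_function_KL} gives
$$
V(\theta)^{1+r} \geq \min(V)^{1+r} + \frac{(1+r)\c}{2}\,\|\theta-\argmin V\|_2^{2}.
$$
Since both sides are non-negative and $x \mapsto x^{1/(1+r)}$ is monotone on $\mathbb{R}_+$, I can take the $(1+r)$-th root and obtain
$$
V(\theta) \geq \Bigl(a^{1+r}+b^{1+r}\Bigr)^{1/(1+r)},
$$
with the shorthand $a := \min(V)$ and $b := \left(\tfrac{(1+r)\c}{2}\right)^{1/(1+r)} \|\theta-\argmin V\|_2^{2/(1+r)}$, so that $b^{1+r} = \tfrac{(1+r)\c}{2}\|\theta-\argmin V\|_2^2$.

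The remaining step is to invoke the standard power-mean inequality: for any $p\geq 1$ and $a,b \geq 0$, convexity of $x\mapsto x^p$ on $\mathbb{R}_+$ yields $(a+b)^p \leq 2^{p-1}(a^p+b^p)$. Applied with $p=1+r \geq 1$, this rearranges to
$$
\bigl(a^{1+r}+b^{1+r}\bigr)^{1/(1+r)} \geq 2^{\frac{1}{1+r}-1}(a+b) = 2^{-\frac{r}{1+r}}(a+b),
$$
and substituting back the values of $a$ and $b$ produces exactly the claimed inequality.

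There is no real obstacle here; the only mild bookkeeping is verifying that the exponent $2^{(1-p)/p}$ with $p=1+r$ indeed equals $2^{-r/(1+r)}$, which it does. All other steps are monotonicity and the convexity of $x\mapsto x^{1+r}$, both of which are elementary.
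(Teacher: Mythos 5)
Your proof is correct and follows exactly the route the paper intends: the paper states this proposition as a ``straightforward consequence'' of the first inequality in Proposition \ref{prop:growth_function_KL} without writing out the details, and the details are precisely your power-mean step $(a^{1+r}+b^{1+r})^{1/(1+r)} \geq 2^{-r/(1+r)}(a+b)$, which accounts for the prefactor $2^{-r/(1+r)}$ in the statement. Note that assumption b) of $\Hklsimple$ guarantees $\min(V)>0$, so the non-negativity needed for both the root extraction and the power-mean inequality holds.
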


\subsection{Properties of $U_{\nu_n}$}

\begin{proof}[Proof of Proposition \ref{prop:kl_to_kl}]   
First, we observe that if each $\theta \mapsto \nabla \log p_\theta(\X_i)$ is $L$-Lipschitz and $\theta \mapsto \nabla \log \pi_0$ is $\Ll$-Lipschitz, then the triangle inequality implies that
$$
\|\nabla U_{\nu_n}(\theta_1)-\nabla U_{\nu_n}(\theta_2)\|_2 \leq (n L + \Ll) \|\theta_1-\theta_2\|_2.
$$
Second, we consider the lower-bound property on the curvature and observe that:
$$
\underline{\lambda}_{\nabla^2 U_{\nu_n}(\theta)}  = \inf_{e \in \mathbb{R}^d : |e|=1} e^{T} (\nabla^2 U_{\nu_n})(\theta) e \ge 
\frac{1}{n} \sum_{i=1}^n \inf_{e \in \mathbb{R}^d : |e|=1} e^{T} (\nabla^2 U_{\X_i})(\theta) e.
$$
The log concavity of the prior yields
$$
\underline{\lambda}_{\nabla^2 U_{\nu_n}(\theta)}  \ge \frac{1}{n} \sum_{i=1}^n \underline{\lambda}_{\nabla^2(- n \log p_{\theta}(\X_i))} = \sum_{i=1}^n \underline{\lambda}_{\nabla^2(-\log p_{\theta}(\X_i))}.
$$
Then, the $\Hklsimple$ property applied to each term of the sum above and $\min_{\theta \in \mathbb{R}^d} -\log \pi_0 (\theta) > 0$ yields 
$$
\underline{\lambda}_{\nabla^2 U_{\nu_n}(\theta)} \ge \c \sum_{i=1}^n [-\log p_{\theta}(\X_i)]^{-r} \geq \c n^r \sum_{i=1}^n U_{\X_i}^{-r} (\theta) = \c n^{1+r} \left( \frac{1}{n} \sum_{i=1}^n U_{\X_i}^{-r}(\theta) \right).
$$
From the Jensen inequality, we finally deduce that: 
$$
\underline{\lambda}_{\nabla^2 U_{\nu_n}(\theta)} \ge \c n^{1+r} \left( \frac{1}{n} \sum_{i=1}^n U_{\X_i}^{-r}(\theta) \right) \ge \c n^{1+r} U_{\nu_n}^{-r}(\theta).
$$
We conclude that $U_{\nu_n}$ satisfies $\Hkl$. For $U_{\X_i}$, the proof is similar. 
\end{proof}

\begin{proposition}\label{prop:growth_min_KL} We assume $\Hpi$, $\Hmin$ and that for any $x$: $\theta \longmapsto - \log p_\theta(x)$ satisfies $\Hklsimple$, then: 
$$
\|\argmin U_{\nu_n}\|_2 \led  d^{\frac{1+r}{2}} \log^{\beta(1+r)} (n) \quad \mbox{ and } \quad \min_{\theta \in \mathbb{R}^d} \ U_{\nu_n} (\theta) \led \MndU. 
$$ 
\end{proposition}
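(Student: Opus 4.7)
The plan is to first obtain an upper bound on $U_{\nu_n}$ at the origin and deduce the bound on $\min U_{\nu_n}$, then to invert the lower growth property of Proposition \ref{prop:growth_function_KL} applied to $U_{\nu_n}$ to control $\|\argmin U_{\nu_n}\|_2$ in terms of $U_{\nu_n}(0)$. Throughout, I use that by Proposition \ref{prop:kl_to_kl} and the comments above it, each $U_{\X_i}$ satisfies $\Hkl$ (and in particular has $\nabla U_{\X_i}$ that is $(nL+\Ll)$-Lipschitz) and $U_{\nu_n}$ satisfies $\Hkl$ with curvature constant $\c n^{1+r}$.

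\emph{Step 1: control of $U_{\nu_n}(0)$.} For each $i$, let $\theta_i^\star = \argmin U_{\X_i}$. Apply the upper quadratic growth of Proposition \ref{prop:growth_function_KL} to each $U_{\X_i}$ (which has $L$-smoothness constant $nL+\Ll$) at $\theta=0$:
\begin{equation*}
U_{\X_i}(0) \leq U_{\X_i}(\theta_i^\star) + \frac{nL+\Ll}{2}\|\theta_i^\star\|_2^2.
\end{equation*}
Averaging over $i$ and using $\Hmin$ to majorize $\max_i U_{\X_i}(\theta_i^\star) \led d$ and $\max_i \|\theta_i^\star\|_2^2 \led d\log^{2\beta}(n)$, I obtain
\begin{equation*}
U_{\nu_n}(0) = \frac{1}{n}\sum_{i=1}^n U_{\X_i}(0) \led d + (nL+\Ll)\, d\log^{2\beta}(n) \led \MndU.
\end{equation*}
Since $\min_\theta U_{\nu_n}(\theta) \leq U_{\nu_n}(0)$, this directly gives the second claim $\min_\theta U_{\nu_n}(\theta) \led \MndU$.

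\emph{Step 2: control of $\|\argmin U_{\nu_n}\|_2$.} Now apply the lower growth inequality of Proposition \ref{prop:growth_function_KL} to $U_{\nu_n}$, which satisfies $\Hkl$ with curvature constant $\c n^{1+r}$, evaluated at $\theta = 0$:
\begin{equation*}
\|\argmin U_{\nu_n}\|_2^2 \leq \frac{2}{(1+r)\,\c\, n^{1+r}}\bigl[U_{\nu_n}^{1+r}(0) - \min(U_{\nu_n})^{1+r}\bigr] \leq \frac{2\,U_{\nu_n}^{1+r}(0)}{(1+r)\,\c\, n^{1+r}}.
\end{equation*}
Inserting the bound of Step 1 for $U_{\nu_n}(0)$ yields
\begin{equation*}
\|\argmin U_{\nu_n}\|_2^2 \led \frac{(n\, d\log^{2\beta}(n))^{1+r}}{n^{1+r}} = d^{1+r}\log^{2\beta(1+r)}(n),
\end{equation*}
so that $\|\argmin U_{\nu_n}\|_2 \led d^{(1+r)/2}\log^{\beta(1+r)}(n)$, as announced.

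No step looks like a real obstacle; the proof is essentially a matching of the $n$-scaling of the curvature constant $\c n^{1+r}$ appearing in $\Hkl$ against the $n^{1+r}$ one gets by raising $U_{\nu_n}(0) \sim nd\log^{2\beta}(n)$ to the power $1+r$. The only point to be careful about is to apply the smoothness side of Proposition \ref{prop:growth_function_KL} to each $U_{\X_i}$ (using the aggregated Lipschitz constant $nL+\Ll$), and the curvature side to $U_{\nu_n}$ after passing through Proposition \ref{prop:kl_to_kl}; this is precisely the asymmetry that makes the final $\MndU$-type bound on the minimum compatible with the improved $d^{(1+r)/2}\log^{\beta(1+r)}(n)$-type bound on the minimizer.
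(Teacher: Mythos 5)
Your proof is correct and follows essentially the same route as the paper's: bound $U_{\X_i}(0)$ via the quadratic upper growth of Proposition \ref{prop:growth_function_KL} together with $\Hmin$, deduce $U_{\nu_n}(0) \led \MndU$, and then invert the lower growth inequality for $U_{\nu_n}$ (with curvature constant $\c n^{1+r}$ from Proposition \ref{prop:kl_to_kl}) at $\theta=0$ to control $\|\argmin U_{\nu_n}\|_2$. The only difference is the order of the two steps, which is immaterial.
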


\begin{proof}
Proposition \ref{prop:kl_to_kl} shows that $U_{\nu_n}$ satisfies $\Hkl$. Therefore, we can apply Proposition \ref{prop:growth_function_KL} with $\theta=0$ and deduce that:
$$
\|\argmin U_{\nu_n} \|_2^2 \leq \frac{2}{(1+r)\c n^{1+r}} \left( U_{\nu_n}^{1+r}(0) - \min \ U_{\nu_n}^{1+r} \right).
$$
To obtain an upper bound of $U_{\nu_n} (0)$ we first bound $U_{\X_i}(0)$ using Proposition \ref{prop:growth_function_KL}, for all $i$, as follows:
$$
U_{\X_i} (0) \leq \min \ U_{\X_i} + \frac{n L + \Ll}{2} \|\argmin U_{\X_i}\|_2^2 \led \Mnd + n \lnb \led \MndU, 
$$
then $U_{\nu_n} (0) \led \MndU$. We deduce that:
$$
\|\argmin U_{\nu_n} \|_2^2 \leq \frac{2}{(1+r)\c n^{1+r}} U_{\nu_n}^{1+r}(0) \led d^{1+r} \log^{2\beta(1+r)} (n).
$$
The second part comes from $\min \ U_{\nu_n} \leq U_{\nu_n}(0)$.
\end{proof}

\section{Smoothness and boundedness of the semi-group\label{section:result semigroup}}

\begin{proof}[Proof of Proposition \ref{prop:constante_poincare_borne}]
\underline{$i)$.} The proof relies on an argument set up with a "fixed" sample $(X_1,\ldots,X_n)$.
Our starting point is Proposition \ref{prop:growth_function_KL} and the consequences of the Kurdyka-\L ojasiewicz inequality.
Since $\Hpi$ and $\theta \mapsto - \log p_\theta(\X_i)$ satisfies $\Hklsimple$, then Proposition \ref{prop:kl_to_kl} shows that $U_{\nu_n}$ satisfies $\Hkl$. Therefore, we can apply Proposition \ref{prop:growth_function_KL} and deduce that:
$$
\|\theta-\argmin U_{\nu_n} \|_2^2 \leq \frac{2}{(1+r) \c n^{1+r}} \left( U_{\nu_n}^{1+r}(\theta) - \min \, U_{\nu_n}^{1+r} \right)\leq \frac{2}{(1+r) \c n^{1+r}}
U_{\nu_n}^{1+r}(\theta). 
$$
If $I_d$ refers to the identity map, we use the fact that for any distribution $\mu$, we have $Var[\mu] \leq \mu[\|I_d-a\|_2^2]$ for any $a \in \mathbb{R}^d$ so that a straightforward consequence with $a=\argmin U_{\nu_n}$ is  then:
$$
Var(\mu_n) \leq \int_{\mathbb{R}^d} \|\theta-\argmin U_{\nu_n}\|_2^2  \rmd \mu_n (\theta) \leq \frac{2}{(1+r) \c n^{1+r}} 
\mu_n[U_{\nu_n}^{1+r}].
$$
We then use the ergodic behaviour of $(\theta_t)_{t \ge 0}$ and observe that there exists a constant $C$ independent from $n$ and $d$ such that: 
\begin{align*}
Var(\mu_n) &\leq \frac{2}{(1+r) \c n^{1+r}} \lim\sup_{t \ge 0} \mathbb{E}[U_{\nu_n}^{1+r}(\theta_t)]     \\
& \leq C  \left( \lnb \right)^{(1+r)^2},
\end{align*}
where the last inequality comes from Proposition \ref{prop:moment_U}.
We now use the Bobkov bound on the Poincaré constant for log-concave distribution (see Theorem 1.2 of \cite{Bobkov_AOP_99}) and deduce that
a universal constant $K$ exists such that:
$$
C_P(\mu_n) \ge \frac{1}{4 K^2 Var(\mu_n)}.
$$
Using the upper bound of the variance, we deduce that a universal $\kappa>0$ exists such that:
$$
C_P(\mu_n) \ge \frac{\kappa}{ \left( \lnb \right)^{(1+r)^2} }. 
$$

\noindent
\underline{$ii)$.} For the second point, we consider a situation on average over the samples and the result uses the concentration of the posterior distribution around its mean. We know from Theorem 3 of \cite{GPP} that
a constant $c>0$ exists such that:
$$
\mathbb{E}_{(X_1,\ldots,X_n) \sim \mathbb{P}_{\theta_0}}[ \mathbb{V}ar(\mu_n)] \leq c \epsilon_{n,d}^2,
$$
with $\epsilon_{n,d} = \left(\frac{L d \log n}{n}\right)^{\alpha^{-1}}$.
The result   follows using the Jensen inequality and the Bobkov bound.
\end{proof}

\begin{proof}[Proof of Proposition \ref{prop:oscillation_borne}]
\underline{$i)$.} We first establish the boundedness of $f_0$. From our assumptions, we apply Proposition \ref{prop:kl_to_kl} and obtain that $U_{\nu_n}$ satisfies $\Hkl$. If $\theta_n^\star = \argmin U_{\nu_n}$, we then deduce from Proposition \ref{prop:growth_function_KL} that:
\begin{align}
f_0(\theta) = \frac{n_0(\theta)}{\mu_n(\theta)} 
 = \frac{Z_n e^{-\frac{\|\theta\|_2^2}{2\sigma^2 } + U_{\nu_n}(\theta)}}{(2\pi)^{d/2} \sigma^d}  &\leq  \frac{Z_n e^{-\frac{\|\theta\|_2^2}{2\sigma^2 }+  U_{\nu_n}(\theta_n^\star)
+ \frac{(n L + \Ll)}{2} \|\theta-\theta_n^{\star}\|_2^2}}{(2\pi)^{d/2} \sigma^d}. \label{eq:tec_1_f0}
\end{align}
We compute an upper bound of $Z_n$ and use the lower bound of $U_{\nu_n}$ induced by Proposition \ref{prop:growth_function_kl2}:
\begin{align*}
Z_n & = \int_{\mathbb{R}^d} e^{- U_{\nu_n}(\theta)} \rmd \theta \\
& \leq  \int_{\mathbb{R}^d} e^{-2^{-\frac{r}{1+r}}\left[U_{\nu_n}(\theta_n^\star)+ n \left(\frac{(1+r) \c }{2}\right)^{\frac{1}{1+r}} \|\theta-\theta_n^\star\|_2^{\frac{2}{1+r}}\right]} \rmd \theta  \\
& \leq e^{-2^{-\frac{r}{1+r}} U_{\nu_n}(\theta_n^\star)} \int_{\mathbb{R}^d} e^{-n a_r \|\theta\|_2^{\frac{2}{1+r}}} \rmd \theta,
\end{align*}
with $a_r=\frac{\left((1+r) \c \right)^{\frac{1}{1+r}}}{2} $. Using the well known equality:
$$
\int_{\mathbb{R}^d} e^{-a |\theta|^\ell} \rmd \theta = \frac{d \pi^{d/2} \Gamma(d/\ell)}{\ell a^{d/\ell} \Gamma(d/2+1)}, \quad \forall a >0, \quad \forall \ell >0.
$$
we then deduce with $a=n a_r$ and $\ell=\frac{2}{1+r}$ that:
\begin{equation*}
Z_n \leq 
e^{-2^{-\frac{r}{1+r}} U_{\nu_n}(\theta_n^\star)} \int_{\mathbb{R}^d} e^{-n a_r \|\theta\|_2^{\frac{2}{1+r}}} \rmd \theta \leq \frac{d(1+r)}{2} \frac{\pi^{d/2}}{(n a_r)^{\frac{d(1+r)}{2}}} \frac{\Gamma\left(\frac{d(1+r)}{2}\right)}{\Gamma\left(\frac{d}{2}+1\right)}.
\end{equation*}
From standard relationships on the Gamma function:
\begin{equation}
Z_n \leq 2 \left( \frac{2^{1+r} \pi}{\c n^{1+r}} \right)^{\frac{d}{2}} d^{\frac{d r}{2}}.
\label{eq:tec_2_f0}
\end{equation}
We gather Equations \eqref{eq:tec_1_f0} and \eqref{eq:tec_2_f0} and obtain that: 
\begin{align*}
f_0(\theta) \leq 2e^{U_{\nu_n}(\theta_n^\star)} \left( \frac{2}{\c \sigma^2 n^{1+r}} \right)^{\frac{d}{2}} d^{\frac{d r}{2}} e^{-\frac{\|\theta\|_2^2}{2\sigma^2 }+ \frac{(n L + \Ll)}{2} \|\theta-\theta_n^{\star}\|_2^2}.
\end{align*}

For all $\sigma^2<\frac{1}{n L+ \Ll}$, a straightforward optimization on $\theta$ yields :
$$
\|f_0\|_{\infty} \leq 2e^{ U_{\nu_n}(\theta_n^\star)} \left( \frac{2}{\c \sigma^2 n^{1+r}} \right)^{\frac{d}{2}} d^{\frac{d r}{2}} \exp\left( \frac{(nL+\Ll)}{2(1-\sigma^2 (nL+\Ll)) } \|\theta_n^\star\|_2^2 \right).
$$
Then, the choice $\frac{c_1}{nL+\Ll} \leq \sigma^{2} \leq  \frac{c_2}{nL+\Ll}$, where $0<c_1\leq c_2<1$ in $\Hn$ and the bounds of $\|\theta_n^\star\|_2^2$ and $U_{\nu_n}(\theta_n^\star)$ in Proposition \ref{prop:growth_min_KL} lead to :
$$
\|f_0\|_{\infty} \leq 2 \left(\frac{C_1 d }{ n}\right)^{\frac{d r }{2}} \exp\left(C_2 n d^{1+r} \log^{2\beta(1+r)} (n) \right),
$$
where $C_1$ and $C_2$ are universal constants.

\noindent \underline{$ii)$.} This result is an almost standard consequence of the maximum principle for a Markov semi-group property with a Brownian diffusion. For any bounded measurable $h>0$, 
we observe that $P_t h>0$ using the Markov property, and we are led to define $g_t$ as the following function $g_t := \sqrt{P_t h}$. We then introduce $\overline{\theta}(t)$ and $\underline{\theta}(t)$ as: 
$$
\overline{\theta}(t) = \argmax g_t (\theta)
\quad \mbox{ and } \quad 
\underline{\theta}(t) = \argmin g_t (\theta).
$$
The chain rule yields:
\begin{eqnarray}
\frac{d}{d t} Osc(g_t) &=& \frac{d}{d t} \left( g_t(\overline{\theta}(t)) - g_t(\underline{\theta}(t)) \right) \nonumber \\
&=& \frac{d g_t}{d t} (\overline{\theta}(t)) + \left\langle \nabla g_t(\overline{\theta}(t)),  \frac{d \overline{\theta}(t)}{d t}\right\rangle  - \frac{d g_t}{d t} (\underline{\theta}(t)) - \left\langle \nabla g_t(\underline{\theta}(t)) , \frac{d \underline{\theta}(t)}{d t}\right\rangle .
\end{eqnarray}
We compute:
\begin{eqnarray}
\frac{d g_t}{d t} (\theta) &=& \frac{1}{2 \sqrt{P_t h}} \frac{d P_t h}{d t} (\theta) \nonumber \\
&=& \frac{1}{2 \sqrt{P_t h}} \Gt P_t h (\theta) \nonumber \\
&=& \frac{1}{2 \sqrt{P_t h (\theta)}} \left[ - \sum\limits_{i=1}^n \langle\nabla_{\theta} P_t h (\theta), \nabla_{\theta} U_{\X_i}(\theta)\rangle m_t(\X_i|\theta) + \Delta_{\theta} P_t h (\theta) \right].
 \end{eqnarray}
Now, we use that $\overline{\theta}(t) = \argmax g_t = \argmax P_t h$, (a similar argument holds for $\underline{\theta}(t)$):
$$
\nabla_{\theta} g_t(\overline{\theta}(t)) = 0, \quad 
\nabla_{\theta} P_t h (\overline{\theta}(t)) = 0 \quad \text{and} \quad \Delta_{\theta} P_t h (\overline{\theta}(t)) \leq 0.$$
then:
\begin{eqnarray}
\frac{d}{d t} Osc(g_t) &=& \frac{d g_t}{d t} (\overline{\theta}(t)) - \frac{d g_t}{d t} (\underline{\theta}(t)) \nonumber \\
&=& \frac{\Delta_{\theta} P_t h}{2 \sqrt{P_t h}} (\overline{\theta}(t)) - \frac{\Delta_{\theta} P_t h}{2 \sqrt{P_t h }} (\underline{\theta}(t)) \\
&\leq& 0. \nonumber
\end{eqnarray}
We have therefore shown that $Osc (\sqrt{P_t h})$ is decreasing in $t\geq 0$, which ends the proof.
\end{proof}

\begin{proof}[Proof of Lemma \ref{lemma:derivation_semi_group}]
We proceed as in Proposition 3 of \cite{miclo1992recuit} to justify the use of the Lebesgue dominated convergence theorem for the derivation of the integral involved in our statement.
We can then deduce that:
\begin{equation}
\partial_t \left\{ \int_{\mathbb{R}^d} f_t(\theta) \rmd n_t(\theta) \right\} = \int_{\mathbb{R}^d} \partial_t \{f_t(\theta)\} \rmd n_t(\theta) + \int_{\mathbb{R}^d} f_t(\theta) \partial_t \{n_t(\theta)\} \rmd \theta. \nonumber
\end{equation}
We leave the first term unchanged and now focus on the second term:
\begin{eqnarray}
\int_{\mathbb{R}^d} f_t(\theta) \partial_t \{n_t(\theta)\} \rmd \theta &=& \int_{\mathbb{R}^d} f_t (\theta) \partial_t \left\{ \sum\limits_{i=1}^n  m_t(\theta,\X_i) \right\} \rmd \theta \nonumber \\
&=& \int_{\mathbb{R}^d} \sum\limits_{i=1}^n  f_t(\theta) \partial_t \{m_t(\theta,\X_i)\} \rmd \theta \nonumber \\
&=& \int_{\mathbb{R}^d} \sum\limits_{i=1}^n \mathcal{L} f_t(\theta) \ m_t(\theta,\X_i) \rmd \theta, \nonumber
\end{eqnarray}
where we used the definition of $n_t$ in the first step and Kolmogorov backward equation \eqref{eq:kolmogorov} in the last one.	
Since the function $f_t(\theta)$ does not depend on $x$, we observe that $\mathcal{L}_{2} f_t(\theta) = 0$ and we only need to compute the remaining term $\mathcal{L}_{1} f_t(\theta)$:
\begin{eqnarray}
\int_{\mathbb{R}^d} f_t(\theta) \partial_t \{n_t(\theta)\} \rmd \theta &=& \int_{\mathbb{R}^d} \sum\limits_{i=1}^n \mathcal{L}_{1} f_t (\theta) \  m_t(\theta,\X_i) \rmd \theta \\
&=& \int_{\mathbb{R}^d} \sum\limits_{i=1}^n  \left[ - \langle\nabla_{\theta} f_t(\theta), \nabla_{\theta} U_{\X_i}(\theta)\rangle  + \Delta_{\theta} f_t(\theta) \right] m_t(\theta,\X_i) \rmd \theta \nonumber \\
&=& -  \int_{\mathbb{R}^d} \sum\limits_{i=1}^n \langle\nabla_{\theta} f_t(\theta), \nabla_{\theta} U_{\X_i}(\theta)\rangle m_t(\X_i|\theta) \rmd n_t(\theta) + \int_{\mathbb{R}^d} \Delta_{\theta} f_t(\theta) \rmd n_t(\theta) \nonumber \\
&=& \int_{\mathbb{R}^d} \Gt f_t(\theta) \rmd  n_t(\theta),
\end{eqnarray}
where we used the fact that $m_t(\theta,\X_i) = m_t(\X_i|\theta) n_t(\theta)$.
\end{proof}

\subsection{Moments upper bounds}

\begin{proposition}\label{prop:moment_U}
Assume $\Hn$, $\Hpi$, $\Hmin$ and that for each $\X_i$, $\theta \mapsto -\log p_\theta(\X_i)$ satisfies $\Hklsimple$. Then:
\begin{itemize}
\item[$i)$] Three positive constants $C_1$, $C_2$ and $C_3$, independent from $n$ and $d$, exist such that for any $t >0$:
$$ 
\mathbb{E}\left[e^{\frac{(1+r) n \c^{\frac{1}{1+r}}}{16} ( \|\theta_t\|_2^2 + 1 )^{\frac{1}{1+r}}}\right] \leq C_1
\left( \lnb \right)^{\frac{r}{1+r}} e^{ C_2 n \lnb }
 + C_3^d e^{\frac{(1+r)n \c^{\frac{1}{1+r}}}{16} }.
$$

\item[$ii)$] For any $t>0$ and for any $\alpha \ge 1$:
$$ \mathbb{E}[U_{\nu_n}^{\alpha}(\theta_t)] \led n^\alpha \left( \lnb \right)^{\alpha(1+r)}.
$$
\end{itemize}

\end{proposition}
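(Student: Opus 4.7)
The plan is to establish part $i)$ via an Itô/Dynkin argument on the exponential Lyapunov function $V(\theta) = \exp(\lambda_0 g(\theta))$, with $g(\theta) = (\|\theta\|_2^2+1)^{1/(1+r)}$ and the precise calibration $\lambda_0 = (1+r) n \c^{1/(1+r)}/16$, and then to deduce part $ii)$ by extracting polynomial moments from this exponential bound. Since $V$ depends only on $\theta$, $\mathcal{L}_2 V \equiv 0$ and it suffices to control pointwise in $(\theta,x)$:
$$\mathcal{L} V(\theta,x) = V(\theta)\bigl[ \lambda_0(-\nabla U_x(\theta)\cdot\nabla g(\theta) + \Delta g(\theta)) + \lambda_0^2 \|\nabla g(\theta)\|_2^2 \bigr].$$
For each fixed $x$, Proposition~\ref{prop:kl_to_kl} shows that $U_x$ satisfies $\Hkl$. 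Convexity ($\nabla U_x(\theta)\cdot\theta\geq U_x(\theta) - U_x(0)$) combined with Propositions~\ref{prop:growth_function_KL} and \ref{prop:growth_function_kl2} applied to $U_x$ yields the lower bound $\nabla U_x(\theta)\cdot\theta \gtrsim (\c n^{1+r})^{1/(1+r)} \|\theta\|_2^{2/(1+r)} - C n\lnb$, where the error $Cn\lnb$ comes from $U_x(0) \lesssim n\lnb$ via $\Hmin$ and $\Hpi$. The choice of $\lambda_0$ is precisely what makes the positive Itô square $\lambda_0^2\|\nabla g\|_2^2$ no larger than half the coercive drift $\lambda_0\nabla U_x\cdot\nabla g$ whenever $\|\theta\|_2^2 \gtrsim (\lnb)^{1+r}$, yielding $\mathcal{L} V(\theta,x) \leq -\gamma V(\theta) + K$ pointwise and hence $\mathbb{E}[V(\theta_t)] \leq \mathbb{E}[V(\theta_0)] + K/\gamma$ by Gronwall. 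The ``inner region'' $\|\theta\|_2^2 \lesssim (\lnb)^{1+r}$ produces $K$: there $g(\theta) \lesssim \lnb$, hence $V(\theta) \lesssim e^{\lambda_0 \lnb}$, and after accounting for a polynomial factor from $\Delta g$ and $\|\nabla g\|$ one obtains $K/\gamma \lesssim C_1(\lnb)^{r/(1+r)} e^{C_2 n\lnb}$, the first term of the stated bound. For the initial value, under $\Hn$ we have $\sigma^2 \leq c_2/(nL+\Ll)$ which is strictly smaller than $(1+r)/(2\lambda_0)$, so the Taylor expansion $g(\theta_0) \approx 1 + \|\theta_0\|_2^2/(1+r)$ near the origin allows one to evaluate the resulting Gaussian integral explicitly and get $\mathbb{E}[V(\theta_0)] \lesssim C_3^d e^{\lambda_0}$, the second term.

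For part $ii)$, set $Y_t := (\|\theta_t\|_2^2+1)^{1/(1+r)}$ and denote by $M$ the right-hand side in $i)$. Markov's inequality gives $\mathbb{P}(Y_t > y) \leq M e^{-\lambda_0 y}$, and splitting the layer-cake representation $\mathbb{E}[Y_t^p] = \int_0^\infty p y^{p-1} \mathbb{P}(Y_t > y)\rmd y$ at the threshold $y_0 = C \lnb$ (chosen so that $\lambda_0 y_0 \geq 2 \log M$) produces $\mathbb{E}[Y_t^p] \leq y_0^p + (\text{negligible tail}) \lesssim (\lnb)^p$ for any $p \geq 1$. Taking $p = \alpha(1+r)$ yields $\mathbb{E}[(\|\theta_t\|_2^2 + 1)^\alpha] \lesssim (\lnb)^{\alpha(1+r)}$. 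Since $U_{\nu_n}$ satisfies $\Hkl$, Proposition~\ref{prop:growth_function_KL} applied to $U_{\nu_n}$ gives $U_{\nu_n}(\theta) \leq \min U_{\nu_n} + \frac{nL+\Ll}{2}\|\theta - \theta_n^\star\|_2^2$. Combined with Proposition~\ref{prop:growth_min_KL} (so $\min U_{\nu_n} \lesssim n\lnb$ and $\|\theta_n^\star\|_2^2 \lesssim (\lnb)^{1+r}$) and the elementary estimate $(a+b)^\alpha \leq 2^{\alpha-1}(a^\alpha+b^\alpha)$, one obtains
$$U_{\nu_n}^\alpha(\theta) \lesssim n^\alpha (\lnb)^{\alpha(1+r)} + (nL)^\alpha (\|\theta\|_2^2 + 1)^\alpha.$$
Taking expectations and treating $L$ as a universal constant makes both terms of order $n^\alpha (\lnb)^{\alpha(1+r)}$, which concludes the proof.

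The main obstacle is the pointwise Lyapunov estimate of part $i)$. The parameter $\lambda_0$ must be calibrated so that the positive Itô contribution $\lambda_0^2\|\nabla g\|_2^2$ is dominated by the coercive drift $\lambda_0\nabla U_x\cdot\nabla g$ \emph{uniformly in $x$}, which precludes any direct appeal to the $\Hkl$-curvature of $U_{\nu_n}$ alone since the SDE drift features the jumping potential $U_{X_t}$. Instead, one must exploit the single-observation $\Hklsimple$ assumption on each $\theta\mapsto -\log p_\theta(\X_i)$ and invoke Proposition~\ref{prop:kl_to_kl} to transfer it into the $\Hkl$-scale curvature $\c n^{1+r}$ for every $U_{X_i}$; together with $\Hmin$, this absorbs the errors $U_{X_i}(0) \lesssim n\lnb$ uniformly across observations and produces exactly the stationary plateau $e^{C_2 n \lnb}$ appearing in the final bound.
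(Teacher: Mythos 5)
Your part $i)$ is essentially the paper's own proof: the same Lyapunov function $\exp\bigl(\tfrac{a}{2}(\|\theta\|_2^2+1)^{1/(1+r)}\bigr)$ with the same calibration $a/2=\tfrac{(1+r)n\c^{1/(1+r)}}{16}$, the same use of convexity plus the growth bounds of Propositions \ref{prop:growth_function_KL} and \ref{prop:growth_function_kl2} transferred to each $U_{\X_i}$ via Proposition \ref{prop:kl_to_kl} and $\Hmin$, the same splitting at the radius $K_{n,d}$ with $K_{n,d}^{2/(1+r)}\asymp \lnb$, Gronwall, and the Gaussian computation for $\psi_{n,d}(0)$ under $\Hn$ (your ``Taylor expansion'' of $g$ at the origin should just be the global inequality $(1+u)^{1/(1+r)}\leq 1+u$, which is what makes the Gaussian integral converge for $\sigma^2<1/a$). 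Where you genuinely diverge is in part $ii)$: the paper extracts polynomial moments from the exponential bound by writing $\|\theta_t\|_2^{2\alpha}=k^{-\alpha(1+r)}\log^{\alpha(1+r)}\bigl(e^{k\|\theta_t\|_2^{2/(1+r)}}\bigr)$ and applying Jensen with the concavity of $x\mapsto\log^{p}(x)$ on $[e^{p-1},+\infty)$, whereas you use Markov's inequality and a layer-cake integral split at $y_0\asymp\lnb$. Both are valid and give the same order $n^\alpha(\lnb)^{\alpha(1+r)}$; your tail argument is arguably more elementary (no concavity lemma needed), at the cost of a $\Gamma(\alpha(1+r)+1)$ factor in the negligible tail and of having to check that $\log M\led n\lnb$ so that the threshold $y_0$ stays of order $\lnb$ — a point worth making explicit since $M$ contains the factor $C_3^d$.
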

\begin{proof}[Proof of $i)$] We consider the function $f(\theta) = \exp \left( \frac{a}{2} ( \|\theta\|_2^{2} + 1 )^{\rho} \right)$ where $0<\rho<1$, which is twice differentiable. The gradient of $f$ is computed as:
$$
\nabla f(\theta) = a \rho (\|\theta\|_2^2+1)^{\rho-1}   f(\theta) \theta.
$$
The Laplace operator is given as:
$$
\Delta f(\theta) = a \rho ( \|\theta\|_2^2 + 1 )^{\rho-2} f(\theta) \left[a \rho (\|\theta\|_2^2+1)^{\rho} \|\theta\|_2^2 + (d+2\rho-2) \|\theta\|_2^2 + d \right].
$$

We then deduce that for any $\theta \in \mathbb{R}^d$:
\begin{eqnarray}
\Gt f(\theta) & = & - \sum_{i=1}^n \langle \nabla U_{\X_i},\nabla f(\theta)  \rangle m_t(\X_i\vert \theta) + \Delta f(\theta) \nonumber\\
& = &  a \rho ( \|\theta\|_2^2 + 1 )^{\rho-2} f(\theta) \Big[ - ( \|\theta\|_2^2 + 1 ) \sum\limits_{i=1}^n \langle \theta, \nabla_{\theta} U_{\X_i}(\theta)\rangle m_t(\X_i|\theta) \nonumber \\
&& + a \rho (\|\theta\|_2^2+1)^{\rho} \|\theta\|_2^2 + \left(d+2\rho-2\right) \|\theta\|_2^2 + d \Big] \nonumber\\
& \leq &  a \rho ( \|\theta\|_2^2 + 1 )^{\rho-2} f(\theta) \Big[ - ( \|\theta\|_2^2 + 1 ) \sum_{i=1}^n \left(U_{\X_i}(\theta)-U_{\X_i}(0)\right) m_t(\X_i|\theta) \nonumber \\
&& + a \rho (\|\theta\|_2^2+1)^{\rho+1} + d \left( \|\theta\|_2^2 + 1\right) \Big] \nonumber \\
& \leq &  a \rho ( \|\theta\|_2^2 + 1 )^{\rho-1} f(\theta)  \left[ - \sum_{i=1}^n \left(U_{\X_i}(\theta)-U_{\X_i}(0)\right) m_t(\X_i|\theta) + a \rho (\|\theta\|_2^2+1)^{\rho} + d \right], \nonumber
%\label{eq:tech_intermediaire}
\end{eqnarray}
where we used the convexity of $U_x$ for any position $x$.

Let us establish the bounds of $U_{\X_i}(\theta)$ and $U_{\X_i}(0)$. We denote by $\theta_i = \argmin U_{\X_i}$ and from Hypothesis $\Hmin$, there exist two positive constants $\mathcal{K}_1$ and $\mathcal{K}_2$ independent on $n$ and $d$ such that:
$$
\max_i \, \|\theta_i\|_2^2 \leq \mathcal{K}_1 \lnb \quad \mbox{ and } \quad \max_i \, U_{\X_i} (\theta_i) \leq \mathcal{K}_2 d. 
$$ 
We apply Proposition \ref{prop:growth_function_KL} to each non-negative function $U_{\X_i}$ that satisfies $\Hkl$, then we obtain that:
$$
U_{\X_i}(\theta)  \ge n \left[ \frac{(1+r)\c}{2} \right]^{\frac{1}{1+r}} \|\theta-\theta_i\|_2^{\frac{2}{1+r}}.
$$
Since $\frac{2}{1+r}>1$, the Jensen inequality yields $(u+v)^{\frac{2}{1+r}} \leq 2^{\frac{1-r}{1+r}} \left[ u^{\frac{2}{1+r}}+v^{\frac{2}{1+r}}\right]$, for all $(u,v) \in \mathbb{R}_+^2$ and we deduce that:
$$
\|\theta-\theta_i\|_2^{\frac{2}{1+r}} \ge 2^{\frac{r-1}{1+r}} \|\theta\|_2^{\frac{2}{1+r}} - \|\theta_i\|_2^{\frac{2}{1+r}} \ge 2^{\frac{r-1}{1+r}} \|\theta\|_2^{\frac{2}{1+r}} - \left(\mathcal{K}_1 \lnb  \right)^{\frac{1}{1+r}}.
$$
Then we use this inequality to obtain a lower bound of $U_{\X_i}$:
$$
U_{\X_i}(\theta) \ge 2n \left[ \frac{(1+r)\c}{8} \right]^{\frac{1}{1+r}} \|\theta\|_2^{\frac{2}{1+r}} - n \left[ \frac{(1+r)\c}{2} \right]^{\frac{1}{1+r}}(\mathcal{K}_1 \lnb )^{\frac{1}{1+r}} .
$$
Moreover an upper bound of $\max \, U_{\X_i}(0)$ comes from Proposition \ref{prop:kl_to_kl} and \ref{prop:growth_function_KL} as follows:
$$
U_{\X_i}(0) \leq U_{\X_i} (\theta_i) + \frac{n L + \Ll}{2} \|\theta_i\|_2^2 \leq \mathcal{K}_2 d + \frac{\mathcal{K}_1 (n L + \Ll) \lnb}{2}.
$$
Using the previous bounds and the fact that $\sum_{i=1}^n m_t(\X_i \vert \theta) =1 $, it yields:
\begin{eqnarray}
&& \sum_{i=1}^n \left(U_{\X_i}(\theta)-U_{\X_i}(0)\right) m_t(\X_i \vert\theta) \nonumber \\
&\ge & 2 n \left[ \frac{(1+r)\c}{8} \right]^{\frac{1}{1+r}} \|\theta\|_2^{\frac{2}{1+r}} - n \left[ \frac{(1+r)\c}{2} \right]^{\frac{1}{1+r}} (\mathcal{K}_1 \lnb )^{\frac{1}{1+r}} - \mathcal{K}_2 \Mnd - \frac{\mathcal{K}_1(n L + \Ll)\lnb}{2} \nonumber \\
& \ge & \frac{n \c^{\frac{1}{1+r}}}{4} \|\theta\|_2^{\frac{2}{1+r}} - n \c^{\frac{1}{1+r}} (\mathcal{K}_1 \lnb)^{\frac{1}{1+r}} - \mathcal{K}_2 \Mnd - \frac{\mathcal{K}_1 (n L + \Ll)\lnb }{2}, \nonumber
\end{eqnarray}
where we used some uniform upper bounds when $r \in [0,1)$.
We then choose $\rho=\frac{1}{1+r}$ and we deduce that: 
\begin{eqnarray}
\Gt f(\theta) & \le & \frac{a}{1+r} ( \|\theta\|_2^2 + 1 )^{-\frac{r}{1+r}} f(\theta) \left[ - \frac{n \c^{\frac{1}{1+r}}}{4} \|\theta\|_2^{\frac{2}{1+r}} + n \c^{\frac{1}{1+r}} (\mathcal{K}_1 \lnb)^{\frac{1}{1+r}} + \mathcal{K}_2 \Mnd \right. \nonumber \\
&& \left. + \frac{\mathcal{K}_1 (n L + \Ll)\lnb}{2} + \frac{a }{(1+r)} (\|\theta\|_2^2+1)^{\frac{1}{1+r}} + d \right] \nonumber \\
&\leq& \frac{a}{1+r} ( \|\theta\|_2^2 + 1 )^{-\frac{r}{1+r}} f(\theta)
\left[ - \left( \frac{n \c^{\frac{1}{1+r}}}{4} - \frac{a}{(1+r)} \right) \|\theta\|_2^{\frac{2}{1+r}} + n\c^{\frac{1}{1+r}} (\mathcal{K}_1 \lnb )^{\frac{1}{1+r}} \right. \nonumber \\
&& \left. + (\mathcal{K}_2 +1 ) \Mnd + \frac{\mathcal{K}_1 (n L + \Ll) \lnb }{2} + \frac{a}{(1+r)} \right], \nonumber
\end{eqnarray}
where we used $(\|\theta\|_2^2+1)^{\frac{1}{1+r}} \leq \|\theta\|_2^{\frac{2}{1+r}} + 1$ in the second line.

We now fix $a= \frac{n (1+r)\c^{\frac{1}{1+r}}}{8}$ and deduce that:
\begin{eqnarray}
\frac{\Gt f(\theta)}{f(\theta)} & \le & \frac{n^2 \c^{\frac{2}{1+r}}}{64}  ( \|\theta\|_2^2 + 1 )^{-\frac{r}{1+r}} \left[ - \|\theta\|_2^{\frac{2}{1+r}} +  8 (\mathcal{K}_1 \lnb )^{\frac{1}{1+r}} + \right. \nonumber \\
& & \left. + \frac{8(\mathcal{K}_2  + 1) \Mnd + 4 \mathcal{K}_1 (n L + \Ll)\lnb }{n \c^{\frac{1}{1+r}}} + 1 \right] .\label{eq:equation_technique_intermediaire}
\end{eqnarray}
We then study two complementary situations and below, we denote by $K_{n,d}$ the radius of the key compact set involved by the previous Lyapunov contraction: 
$$
K_{n,d}^{\frac{2}{1+r}} =   C \lnb.
$$

\noindent
$\bullet$ When $\|\theta\|_2$ is large enough ($\|\theta\|_2 \ge K_{n,d}$), we observe that a large enough $C>0$ independent from $n$ and $d$ exists such that:
\begin{equation}
\|\theta\|_2^{\frac{2}{1+r}} \ge  C  \lnb \Longrightarrow 
\frac{\Gt f(\theta)}{f(\theta)}  \le - \frac{n^2 \left(\lnb \right)^{\frac{1}{1+r}} \c^{\frac{2}{1+r}}}{128}  = - a_{n,d}.\label{eq:non_compact}
\end{equation}

\noindent
$\bullet$ When $\|\theta\|_2$ is upper bounded ($\|\theta\|_2 \le K_{n,d}$), we use the upper bound stated in Equation \eqref{eq:equation_technique_intermediaire}
and obtain that a universal $C_1$ (whose value may change from line to line) exists such that : 
\begin{align}
& \|\theta\|_2^{\frac{2}{1+r}} \le  C \lnb \Longrightarrow \nonumber \\
& \Gt f(\theta) \leq C_1 n^2 f(\theta) \left[  8 (\mathcal{K}_1 \lnb )^{\frac{1}{1+r}} + \frac{8(\mathcal{K}_2  + 1) \Mnd + 4 \mathcal{K}_1 (n L + \Ll)\lnb }{n \c^{\frac{1}{1+r}}} + 1\right] \nonumber \\
& \leq C_1 n^2 \lnb \exp \left( \frac{(C + 1) c^{\frac{1}{1+r}} n \lnb }{8}  \right) \leq  b_{n,d} e^{ \delta_{n,d} }.
\label{eq:compact}
\end{align}
We then use Equations \eqref{eq:non_compact} and \eqref{eq:compact} as follows. We define the function $\psi_{n,d}$ as $\psi_{n,d}(t) = \mathbb{E}[f(\theta_t)]$ and use Lemma \ref{lemma:derivation_semi_group}:
\begin{eqnarray*}
\psi'_{n,d}(t) & = & \mathbb{E}[ \Gt f(\theta_t) ] \\
& = & \mathbb{E}\left[ \Gt f(\theta_t) \left[\1_{\|\theta_t\|_2 \ge K_{n,d}}+ \1_{\|\theta_t\|_2 \le K_{n,d}}\right] \right] \\
& \leq  &  \mathbb{E}\left[ - a_{n,d} f(\theta_t) \1_{\|\theta_t\|_2 \ge K_{n,d}} + b_{n,d} e^{ \delta_{n,d} } \1_{\|\theta_t\|_2 \le K_{n,d}}  \right] \\
& \leq & - a_{n,d} \psi_{n,d}(t) +a_{n,d} \sup_{\|\theta\|_2\leq K_{n,d}} f(\theta)+b_{n,d} e^{ \delta_{n,d} } \\
& \leq & - a_{n,d} \psi_{n,d} (t) + (a_{n,d}+b_{n,d}) e^{ \delta_{n,d} }.
\end{eqnarray*}

We apply the Gronwall Lemma and obtain that:
\begin{equation}
\forall t > 0 \qquad \psi_{n,d}(t) \leq \left(1 + \frac{b_{n,d}}{a_{n,d}} \right) e^{ \delta_{n,d} } + \psi_{n,d}(0) e^{-a_{n,d}t}. \label{ineq:gronwall_lemma}    
\end{equation}

Using that $n_0$ is a Gaussian distribution, which was fixed in $\Hn$ hypothesis, we find an upper bound for $\psi_{n,d} (0) = E[f(\theta_0)] = \int_{\mathbb{R}^d} f(\theta) \rmd n_0(\theta)$ as follows :
\begin{eqnarray}
    \psi_{n,d} (0) &=& \left(2\pi \sigma^2 \right)^{-\frac{d}{2}} \int_{\mathbb{R}^d} e^{\frac{a}{2}\left(\|\theta\|_2^2+1\right)^\frac{1}{1+r} - \frac{\|\theta\|_2^2}{2\sigma^2}} \rmd \theta \nonumber \\
    &\leq& \left(2\pi \sigma^2 \right)^{-\frac{d}{2}} e^{\frac{a}{2}} \int_{\mathbb{R}^d} e^{ - \frac{\|\theta\|_2^2}{2} \left( \frac{1}{\sigma^2} - a \right) } \rmd \theta, \nonumber
\end{eqnarray}
if $\sigma^2 \leq \frac{1}{a} = \frac{8}{n(1+r)\c^\frac{1}{1+r}}$ then the integral above is finite. Since $c_2 < 1\leq \frac{8L}{(1+r)\c^{\frac{1}{1+r}}}$, it guarantees $\sigma^2 < \frac{1}{a}$, then:
\begin{eqnarray}
    \psi_{n,d} (0) &\leq& \left( 1- a \sigma^2 \right)^{-\frac{d}{2}} e^{\frac{a}{2}} \nonumber \\
    &\leq& C_3^d e^{\frac{(1+r) n \c^\frac{1}{1+r}}{16}}, \nonumber
\end{eqnarray}
where $C_3$ is a constant independent from $n$ and $d$.

Finally, using the value of $a_{n,d}$ and $b_{n,d}$ in \eqref{ineq:gronwall_lemma}, we deduce that:
$$
\mathbb{E}\left[e^{\frac{(1+r) n \c^{\frac{1}{1+r}}}{16} ( \|\theta_t\|_2^2 + 1 )^{\frac{1}{1+r}}}\right] \leq C_1
\left( \lnb \right)^{\frac{r}{1+r}} e^{ C_2 n \lnb }
 + C_3^d e^{\frac{(1+r)n \c^{\frac{1}{1+r}}}{16} }, \quad \forall t >0.
$$
where $C_2$ is another universal constant, which concludes the proof.
\end{proof}

\begin{proof}[Proof of $ii)$] We consider $\alpha>1$ and below, $C>0$ refers to a ``constant'' independent from $n$ and $d$, whose value may change from line to line.
Our starting point is the upper bound of the exponential moments obtained in $i)$. Proposition \ref{prop:kl_to_kl} shows that $U_{\nu_n}$ satisfies $\Hkl$, then thanks to Proposition \ref{prop:growth_function_KL}:
$$
\mathbb{E}[U_{\nu_n}^{\alpha}(\theta_t)] \leq \mathbb{E}\left[ \left(\min \, U_{\nu_n} + C n \|\theta_t - \theta^*_n \|_2^{2}\right)^{\alpha}
\right] \leq \mathbb{E}\left[ \left(\min \, U_{\nu_n} + C n \| \theta^*_n \|_2^{2} + C n \|\theta_t\|_2^2 \right)^{\alpha}
\right],
$$
where $\theta^*_n = \argmin U_{\nu_n}$. 

By using Proposition \ref{prop:growth_min_KL} and the inequality derived from the Jensen inequality $(a+b)^{\beta} \leq c_\beta (a^\beta+b^\beta)$ for $(a,b) \in \mathbb{R}_+^2$ and $\beta\geq 1$, we obtain that: 
\begin{align*}
\left(\min \, U_{\nu_n} + \right.& \left. C n \|\theta^*_n\|_2^{2} + C n \|\theta_t\|_2^2 \right)^{\alpha} \\
&\leq  C \left[ \MndU + n d^{1+r} \log^{2\beta(1+r)} (n) + n \|\theta_t\|_2^{2} \right]^{\alpha} \\
&\leq  C n^\alpha \left[ \left( \lnb \right)^{\alpha (1+r)} + \|\theta_t \|_2^{ 2   \alpha }\right] \\
& \leq   C n^\alpha \left[ \left( \lnb \right)^{\alpha (1+r)} +  k^{-\alpha(1+r)} \log^{\alpha(1+r)} \left( e^{k  \|\theta_t \|_2^{ \frac{2}{1+r}}} \right)  \right] \\
&\leq  C n^\alpha \left[ \left( \lnb \right)^{\alpha (1+r)} + k^{-\alpha(1+r)} \log^{\alpha(1+r)} \left( e^{\alpha(1+r)-1+k  \|\theta_t\|_2^{ \frac{2}{1+r}}} \right) \right] .
\end{align*}
The Jensen inequality and the concavity of $x \mapsto \log^{p}(x)$ on $[e^{p-1},+\infty[$ when $p\geq 1$ yield
\begin{align*}
\mathbb{E}[&U_{\nu_n}^{\alpha}(\theta_t)] \\
& \leq C n^\alpha \left[ \left( \lnb \right)^{\alpha (1+r)} + k^{-\alpha(1+r)} \mathbb{E}\left[  \log^{\alpha(1+r)} \left( e^{\alpha(1+r)-1+k  \|\theta_t\|_2^{ \frac{2}{1+r}}} \right) \right] \right] \\
& \leq  C n^\alpha \left[ \left( \lnb \right)^{\alpha (1+r)} + k^{-\alpha(1+r)} \log^{\alpha(1+r)} \left[ \mathbb{E} \left( e^{\alpha(1+r)-1+ k  \|\theta_t\|_2^\frac{2}{1+r}} \right) \right] \right] \\
& \leq  C n^\alpha \left[ \left( \lnb \right)^{\alpha (1+r)} + k^{-\alpha(1+r)} \left[ \alpha(1+r)-1 + \log \mathbb{E}\left( e^{  k  \|\theta_t\|_2^\frac{2}{1+r}} \right) \right]^{\alpha(1+r)} \right]\\
&\leq C n^\alpha \left[ \left( \lnb \right)^{\alpha (1+r)} + k^{-\alpha(1+r)} \left[ \alpha(1+r)-1 + \log  \mathbb{E}\left( e^{  k (\|\theta_t \|_2^2 + 1 )^{\frac{1}{1+r}} } \right) \right]^{\alpha(1+r)} \right],
\end{align*}
where we used in the last inequality that $\|\theta\|_2^2 \leq \|\theta\|_2^2 + 1$.

We then apply $i)$ in Proposition \ref{prop:moment_U}, we choose $k = \frac{(1+r) n \c^{\frac{1}{1+r}}}{16} $ and obtain that: 
\begin{align*}
\mathbb{E}[U_{\nu_n}^{\alpha}(\theta_t)] &\\ 
& \leq C n^\alpha \left[ \left( \lnb \right)^{\alpha (1+r)} + \frac{1}{n^{\alpha (1+r)}} \left[ 1 + \log  \mathbb{E}\left( e^{ \frac{(1+r)n \c^{\frac{1}{1+r}}}{16} (\|\theta_t \|_2^2 + 1 )^{\frac{1}{1+r}} } \right) \right]^{\alpha(1+r)} \right]\\
& \leq  C \left[ n^\alpha \left( \lnb \right)^{\alpha (1+r)} \right. \\
&  \left. + \frac{1}{n^{\alpha r}} \left[ 1 +  \log \left[ C_1 \left( \lnb \right)^{\frac{r}{1+r}} e^{ C_2 n \lnb } + C_3^d e^{\frac{(1+r) n \c^{\frac{1}{1+r}}}{16}} \right] \right]^{\alpha (1+r)} \right] \\
& \leq  C n^\alpha \left( \lnb \right)^{\alpha(1+r)},
\end{align*}
where we used in the previous lines simple algebra and $\log(a+b) \leq \log(2)  + \log(a)+\log(b)$ when $a \geq 1$ and $b\geq 1$. This concludes the proof. 
\end{proof}
 
%\bibliographystyle{alpha}
%\bibliography{biblio}  

%%%%%%%%%%%%%%%%%%%%%%%%%%%%%%%%%%%%%%%%%%%%%%%%%%%%%%%%%%%%%%%%%%%
%%                                                               %%
%% Use the two commands below for producing your bibliography    %%
%% with bibtex, then comment again the commands and include the  %%
%% content of the .bbl file in this file below the commands.     %%
%%                                                               %%
%%%%%%%%%%%%%%%%%%%%%%%%%%%%%%%%%%%%%%%%%%%%%%%%%%%%%%%%%%%%%%%%%%%

%\bibliographystyle{amsplain}
%\bibliography{yourbibfilename}

\begin{thebibliography}{99}

\bibitem{bakry2008rate} Bakry, D. and Cattiaux, P. and Guillin, A. : Rate of convergence for ergodic continuous Markov processes: Lyapunov versus Poincar{\'e}. \emph{Journal of Functional Analysis} \textbf{254}, 3, (2008), 727--759.

\bibitem{BakryEmery} Bakry, D. and Emery, M. : Diffusions hypercontractives. \emph{Séminaire de probabilités} \textbf{1123}, XIX, (1985), 177--206. 

\bibitem{bakry2014analysis} Bakry, D. and Gentil, I. and Ledoux, M. : Analysis and geometry of Markov diffusion operators. \emph{Springer.} \textbf{103}, (2014). 

\bibitem{Bobkov_AOP_99} Bobkov, S. G. : Isoperimetric and Analytic Inequalities for Log-Concave Probability Measures. \emph{Annals of Probability} \textbf{27}, (1999), 1903--1921. 

\bibitem{Bolte} Bolte, J. and Daniilidis, A. and Ley, O. and Mazet, L. : Characterizations of {{\L}}ojasiewicz inequalities: subgradient flows, talweg, convexity. \emph{Trans. Amer. Math. Soc.} \textbf{362}, (2010), 3319--3363. 

\bibitem{cattiaux2022self} Cattiaux, P. and Fathi, M. and Guillin, A. : Self-improvement of the Bakry-Emery criterion for Poincar{\'e} inequalities and Wasserstein contraction using variable curvature bounds. \emph{Journal de Math{\'e}matiques Pures et Appliqu{\'e}es}, (2022).

\bibitem{cattiaux2007weak} Cattiaux, P. and Gentil, I. and Guillin, A. : Weak logarithmic Sobolev inequalities and entropic convergence. \emph{Probability theory and related fields} \textbf{139}, 3, (2007), 563--603.

\bibitem{cattiaux2017hitting} Cattiaux, P. and Guillin, A. : Hitting times, functional inequalities, Lyapunov conditions and uniform ergodicity. \emph{Journal of Functional Analysis} \textbf{272}, 6, (2017), 2361--2391.

\bibitem{CATTIAUX2008} Bakry, D. and  Cattiaux, P. and Guillin, A. : Rate of convergence for ergodic continuous Markov processes : Lyapunov versus Poincaré. \emph{Journal of Functional Analysis} \textbf{254}, 3, (2008), 727--759.

\bibitem{CATTIAUX20091821} Cattiaux, P. and Guillin, A. and Wang, F. and Wu, L. : Lyapunov conditions for Super Poincaré inequalities. \emph{Journal of Functional Analysis.} \textbf{256}, 6, (2009), 1821--1841.

\bibitem{Dalalyan-Tsybakov} Dalalyan, A. and Tsybakov, A. : Sparse regression learning by aggregation and {L}angevin
	              {M}onte-{C}arlo. \emph{ J. Comput. System Sci. }, \textbf{78}, 5, (2012), 1423--1443. %¨papier de base

\bibitem{Dalalyan-JRSS} Dalalyan, A. : Theoretical guarantees for approximate sampling from a smooth and log-concave density. \emph{ J. R. Stat. Soc. B},\textbf{79}, (2017),  651--676.  %papier initial, essentiellement ft convexe avec penalisation si jamais non ft convexe

\bibitem{Dalalyan-Karagulyan} Dalalyan, A. and Karagulyan, A. :  User-friendly guarantees for the Langevin Monte Carlo with inaccurate gradient. \emph{Stoch. Proc. Appl.}, \textbf{129}, 12, (2019), 5278--5311. %Ici que du ft convexe et effet du bruit

\bibitem{Dalalyan-Riou-Durand} Dalalyan, A. and Riou-Durand, L. : On sampling from a log-concave density using kinetic Langevin diffusions. \emph{Bernoulli}, \textbf{26}, 3, 1956--1988. %Version Langevin cinétique

\bibitem{Dalalyan-Karagulyan-Riou-Durand}  Dalalyan, A.  and Karagulyan, A. and Riou-Durand, L. : Bounding the Error of Discretized Langevin Algorithms for Non-Strongly Log-Concave Targets. \emph{ Journal of Machine Learning Research}, \textbf{23}, 235, (2022), 1--38.

\bibitem{Durmus-Moulines} Durmus, A. and Moulines, E. : High-dimensional Bayesian inference via the unadjusted Langevin algorithm, \emph{Bernoulli},  \textbf{25}, 4A, (2019), 2854--2882.

\bibitem{Ethier_Kurtz} Ethier, S. N. and Kurtz, T. G. : Markov processes -- characterization and convergence, \emph{John Wiley \& Sons Inc.} Wiley Series in Probability and Mathematical Statistics: Probability and Mathematical Statistics, New York, (1986).


\bibitem{Freidlin-Wentzell} Freidlin, M. and Wentzell, A. : Random Perturbations of Dynamical Systems, \emph{Springer Verlag}, 1984. 


\bibitem{Gadat-Gavra} Gadat, S. and Gavra, I. and Risser, L. : How to calculate the barycenter of a weighted graph. \emph{Mathematics of Operation Research}, \textbf{43}, 4, (2018).


\bibitem{Gadat-Panloup} Gadat, S. and Panloup, F. : Optimal non-asymptotic bound of the Ruppert-Polyak averaging without strong convexity. \emph{Stochastic Processes and their Applications}, \textbf{156}, (2022), 312--348.

\bibitem{GPP} Gadat, S. and Panloup, F. and Pellegrini, C. : On the cost of Bayesian posterior mean strategy for log-concave models. \emph{Preprint}, (2022).

\bibitem{Gadat-Panloup-Pellegrini} Gadat, S. and Panloup, F. and Pellegrini, C. : Large Deviation Principle for invariant distributions of Memory Gradient Diffusions. \emph{Electronic Journal of Probability}, \textbf{81}, (2013), 1--34.

\bibitem{Gross} Gross, L. : Logarithmic {S}obolev inequalities. \emph{American Journal of Mathematics}, \textbf{4}, 97, (1975), 1061--1083.

\bibitem{Hajeck} Hajeck, B.: Cooling schedules for optimal annealing. \emph{Mathematics of Operation Research}, \textbf{12}, 2, (1988), 311--329.

\bibitem{hormander1967hypoelliptic} H{\"o}rmander, L. : Hypoelliptic second order differential equations. \emph{Acta Mathematica} \textbf{119}, (1967), 147--171.

\bibitem{Holley_Stroock} Holley, R. and Stroock, D. : Simulated annealing via Sobolev inequalities. \emph{Communications in Mathematical Physics} \textbf{115}, 4, (1988), 553--569.

\bibitem{Khasminski}   Khasminskii , R. : Stochastic Stability of Differential Equations. \emph{Stochastic Modelling and Applied Probability, Springer}, (2012).


\bibitem{Kurdyka} Kurdyka, K. : On gradients of functions definable in o-minimal structures. \emph{Ann. Inst. Fourier (Grenoble)} \textbf{48}, 3, (1998), 769--783.

\bibitem{KUSUOKA1984271} Kusuoka, S. and Stroock, D. : Applications of the Malliavin Calculus, Part I. \emph{Stochastic Analysis}. Elsevier \textbf{32}, North-Holland Mathematical Library, (1984), 271--306.

\bibitem{Lojasiewicz} Lojasiewicz, S. : Une propri\'et\'e topologique des sous-ensembles analytiques r\'eels. \emph{Editions du centre National de la Recherche Scientifique, Paris, Les \'Equations aux D\'eriv\'ees Partielles}. (1963), 87--89.

\bibitem{ma2019sampling} Ma, Y. and Chen, Y. and Jin, C. and Flammarion, N. and Jordan, M. I : Sampling can be faster than optimization. \emph{Proceedings of the National Academy of Sciences} \textbf{116}, 42, 20881--20885.

\bibitem{Meyn_Tweedie} Meyn, S. and Tweedie, R. : Markov chains and stochastic stability. \emph{Springer Science \& Business Media}. (2012).

\bibitem{miclo1992recuit} Miclo, L. : Recuit simul{\'e} sur $\mathbb{R}^n$. {\'E}tude de l’{\'e}volution de l’{\'e}nergie libre. \emph{Annales de l'IHP Probabilit{\'e}s et statistiques} \textbf{28}, 2, (1992), 235--266. 

\bibitem{mou2022improved} Mou, W. and Flammarion, N. and Wainwright, M. J. and Bartlett, P. L. : Improved bounds for discretization of Langevin diffusions: Near-optimal rates without convexity. \emph{Bernoulli} \textbf{28}, 3, (2022), 1577--1601.

\bibitem{Raginsky_Rakhlin} Raginsky, M. and Rakhlin, A. and Telgarsky, M. : 
Non-Convex Learning via Stochastic Gradient Langevin Dynamics: A Nonasymptotic Analysis.
\emph{Proceedings of Machine Learning Research}, \textbf{65}, (2017), 1--30.



\bibitem{RobbinsMonro} Robbins, H. and Monro, S. : A Stochastic Approximation Method. \emph{The Annals of Mathematical Statistics} \textbf{22}, 3, (1951): 400-407.

\bibitem{WANG2000219} Wang, F. : Functional Inequalities for Empty Essential Spectrum. \emph{Journal of Functional Analysis} \textbf{170}, 1, (2000), 219--245.

\bibitem{Wang_Osher} Wang, B. and Zou, D. and Gu, Q. and Osher, S. J. : Laplacian Smoothing Stochastic Gradient Markov Chain Monte Carlo. \emph{SIAM Journal on Scientific Computing} \textbf{43}, 1, (2021), A26--A53.

\bibitem{Welling-Teh} Welling, M. and Teh, Y. W. : Bayesian learning via stochastic gradient Langevin dynamics. \emph{Proceedings of the 28th international conference on machine learning (ICML-11)} \textbf{28}, 3, (2011), 681--688.

\bibitem{Xu_2018_NIPS} Xu, P. and Chen, J. and Zou, D. and Gu, Q. : Global Convergence of Langevin Dynamics Based Algorithms for Nonconvex Optimization. \emph{Proceedings of the 32nd International Conference on Neural Information Processing Systems.} NIPS'18. Curran Associates Inc. (2018), 3126–-3137. 
\end{thebibliography}

% add below the content of your .bbl file produced by bibtex.

%%%%%%%%%%%%%%%%%%%%%%%%%%%%%%%%%%%%%%%%%%%%%%%%%%%%%%%%%%%%%%%%%%%
%%                                                               %%
%% You may add acknowledgments (optional).                       %%
%%                                                               %%
%%%%%%%%%%%%%%%%%%%%%%%%%%%%%%%%%%%%%%%%%%%%%%%%%%%%%%%%%%%%%%%%%%%

%%%%%%%%%%%%%%%%%%%%%%%%%%%%%%%%%%%%%%%%%%%%%%%%%%%%%%%%%%%%%%%%%%%
%%                                                               %%
%% You have reached the end of your document.                    %%
%%                                                               %%
%%%%%%%%%%%%%%%%%%%%%%%%%%%%%%%%%%%%%%%%%%%%%%%%%%%%%%%%%%%%%%%%%%%

 \end{document}